\newtheorem{Theorem}{Theorem}[section]
\newtheorem{Assumption}[Theorem]{Assumption}
\newtheorem{Lemma}[Theorem]{Lemma}
\newtheorem{Remark}[Theorem]{Remark}
\newcommand{\R}{\mathbb{R}}
\title{Stochastic Control for Fine-tuning Diffusion Models:\\ Optimality, Regularity, and Convergence}
\author{Yinbin Han, Meisam Razaviyayn, Renyuan Xu 
}
\author{Yinbin Han
\thanks{Department of Management Science and Engineering, School of Engineering, Stanford University. \textbf{Email:} \{yinbinha,renyuanxu\}@stanford.edu}
\and
Meisam Razaviyayn \thanks{Daniel J. Epstein Department of Industrial and Systems Engineering, University of Southern California. \textbf{Email:} razaviya@usc.edu}
\and
Renyuan Xu 
  \footnotemark[1]
}
\date{First version:~Dec 2024; this version:~Aug 2025}
\begin{document}
\maketitle
\allowdisplaybreaks
\begin{abstract}
    Diffusion models have emerged as powerful tools for generative modeling, demonstrating exceptional capability in capturing target data distributions from large datasets. However, fine-tuning these massive models for specific downstream tasks, constraints, and human preferences remains a critical challenge. While recent advances have leveraged reinforcement learning algorithms to tackle this problem, much of the progress has been empirical, with limited theoretical understanding. To bridge this gap, we propose a stochastic control framework for fine-tuning diffusion models. Building on denoising diffusion probabilistic models as the pre-trained reference dynamics, our approach integrates linear dynamics control with Kullback–Leibler regularization. We establish the well-posedness and regularity of the stochastic control problem and develop a {policy iteration algorithm (PI-FT)} for numerical solution. We show that PI-FT achieves global convergence at a linear rate. Unlike existing work that assumes regularities throughout training, we prove that the control and value sequences generated by the algorithm maintain the regularity. Additionally, we explore extensions of our framework to parametric settings and continuous-time formulations and demonstrate the
practical effectiveness of the proposed PI-FT algorithm through numerical experiments.  Our code is available at \url{https://github.com/yinbinhan/fine-tuning-of-diffusion-models}.
\end{abstract}

\section{Introduction}

The increasing availability of large-scale datasets, combined with advancements in high-performance computing infrastructures, have catalyzed the rise of data-driven approaches across diverse scientific and engineering domains. Traditional data-driven approaches, which involve collecting and fitting models to offline or online data, are highly dependent on the quality and availability of the data. Consequently, data-related limitations can significantly hinder the performance of these methods \cite{doersch2016tutorial,durgadevi2021generative,fetaya2019understanding}. For example, constraints in experimental design or collection processes often result in data scarcity, which can significantly hinder model performance. In addition, obtaining sufficient data to ensure experimental validity can be costly and time-intensive, presenting practical barriers to scalability. Finally and more importantly, data gathered under specific conditions often fails to generalize effectively to other environments or downstream applications, limiting the versatility of the resulting models \cite{bansal2022systematic,gangwal2024current,alzubaidi2023survey}.

Generative modeling offers a flexible solution by creating synthetic data that maintains the properties of collected data while enhancing data diversity. Diffusion models, such as those proposed by \cite{sohl2015deep,song2019generative,song2020score,Ho20}, have emerged as powerful tools in this area, supporting notable advancements such as DALL·E \cite{ramesh2022hierarchical,betker2023improving}, Stable Diffusion \cite{rombach2022high}, and Sora \cite{openai2024sora}. These models excel by learning the {\it score function} from potentially high-dimensional and limited data, extracting critical information for the data generation process. To achieve cost-effective, task-specific data generation and multi-modal integration, techniques like alignment \cite{wallace2024diffusion}, including guidance \cite{dhariwal2021diffusion,ho2022classifier} and fine-tuning \cite{fan2023optimizing,black2023training,fan2024reinforcement}, of pre-trained models are proposed. While diffusion alignment has achieved significant empirical success, the theoretical foundations remain in the early stages---a gap this paper seeks to address.

Aligning a pre-trained diffusion model to a specific task or dataset requires updating its parameters through additional training, typically using a smaller, task-specific dataset \cite{wallace2024diffusion,podell2023sdxl,rombach2022high,dai2023emu}. Among the prevalent alignment techniques, fine-tuning and guidance differ in how the additional criteria are handled—either as a soft constraint or a hard constraint. The soft constraint approach is particularly effective for applications that incorporate human preference or embed human values as a reward signal evaluated on the task-specific dataset \cite{black2023training,fan2024reinforcement,uehara2024bridging,zhao2024adding}, which is the central focus of our paper.

\subsection*{Our work and contributions.}
We introduce a discrete-time stochastic control formulation with linear dynamics and Kullback–Leibler (KL) regularization for fine-tuning diffusion models. Specifically, we establish a novel connection to denoising diffusion probabilistic models (DDPMs) \cite{Ho20} by treating the fine-tuned score as a {\it control to be learned}. In particular, the soft constraint or human preference is modeled by a reward signal evaluated on the generated data output and the KL regularization term penalizes the deviation of the control from the pre-trained score function. We utilize the discrete-time formulation since fine-tuning is {\it inherently} a {\it discrete-time} problem, as it relies on a pre-trained model which is typically implemented in discrete time, with DDPM being the most commonly-used example. By appropriately selecting the regularization parameter, we demonstrate the well-posedness of the control problem and analyze the regularity of the optimal value function. A key insight arises from the concavity of the nested one-step optimization problem in the Bellman equation, a direct result of KL regularization under a properly chosen regularization coefficient. Leveraging this property, we develop a policy iteration algorithm (PI-FT) with guaranteed convergence to the globally optimal solution at a linear rate. A central challenge involves preserving the regularity of the sequence of value functions and output controls generated during algorithm iterations. This is achieved through a novel coupled induction argument and precise estimations of the regularization parameters. Finally, we discuss the algorithm design in the parametric setting and the connection to continuous-time stochastic control problems,  and demonstrate the practical
efficiency and effectiveness of the proposed PI-FT method.

To the best of our knowledge, this is the first work on fine-tuning of diffusion models that presents a {\it convergence guarantee}.
While reinforcement learning (RL) and control-based fine-tuning frameworks have been explored in prior literature, analyzing algorithm convergence remains challenging due to the continuous nature of the state-action space. 
Our approach addresses this challenge by leveraging linear dynamics and the (one-step) concavity introduced by the KL regularization term, which inherently captures the essence of fine-tuning problems. Specifically, we establish the universal regularity of both the optimal value function and the sequence of value functions generated during iterative update. Moreover, the foundational techniques underlying our results extend naturally to parametric algorithm design and continuous-time formulations, offering insights beyond the specific context of diffusion model fine-tuning.

\subsection*{Related literature.} Our work is related to several emerging research directions.

\paragraph{Fine-tuning of diffusion models.}
Our framework is closely related to the recent studies on fine-tuning diffusion models. Motivated by advancements in the (discrete-time) RL literature, \citet{fan2023optimizing} was the first to introduce a reward-based approach for improving pre-trained diffusion models. Building on this idea, two concurrent works \cite{black2023training,fan2024reinforcement} proposed a Markov decision process (MDP) formulation for denoising diffusion processes. To mitigate reward over-optimization \cite{gao2023scaling}, \citet{fan2024reinforcement} examined the impact of incorporating KL regularization as an implicit reward signal. {Very recently, \citet{uehara2024feedback} introduced an online fine-tuning framework where the regret is upper bounded by the accuracy of a statistical error oracle for reward estimation.} For a comprehensive review of this topic, we direct interested readers to \cite{uehara2024understanding}. Beyond RL-based fine-tuning methods, alternative approaches include classifier guidance \cite{dhariwal2021diffusion} and classifier-free guidance \cite{ho2022classifier}, supervised fine-tuning \cite{lee2023aligning}, LoRA and its variants \cite{hu2021lora,ryu2023low}, and  DreamBooth \cite{ruiz2023dreambooth} among others.

All the aforementioned references focused solely on empirical investigations, except for \cite{uehara2024feedback}, which leaves theoretical foundations largely unexplored. While the discrete-time MDP formulation discussed above aligns well with the RL literature, it overlooks the structural properties inherent to diffusion models. In contrast, our approach specifically leverages DDPM as the pre-trained model, thereby fully utilizing the underlying structure of diffusion models.

\paragraph{Continuous-time control/RL for fine-tuning.}
Inspired by the continuous-time nature of diffusion processes, recent work has explored the application of continuous-time control/RL for fine-tuning. For instance, \citet{berner2022optimal} established a connection between fine-tuning and stochastic control by analyzing the Hamilton-Jacobi-Bellman (HJB) equation for the log density. \citet{domingo2024adjoint} introduced an adjoint matching method for the fine-tuning of rectified flow, drawing inspiration from Pontryagin's maximum principle.
A more rigorous treatment of fine-tuning within the framework of entropy-regularized control problems is developed in \cite{tang2024fine}, addressing the well-posedness and regularity of the corresponding HJB equation and extending the analysis to general $f$-divergences, though no specific algorithm was proposed. Following this entropy-regularization perspective, \citet{zhao2024scores} derived the formulas for  continuous-time policy gradient method and continuous-time version of proximal policy optimization (PPO). Furthermore, \citet{gao2024reward} applied a continuous-time q-learning algorithm to simulate data that reflects human preferences directly, without relying on pre-trained models.

Compared with the literature that technically remain at the level of exploring HJB equations, we establish convergence rate for the proposed algorithm in discrete time. Despite the continuous-time nature of diffusion processes, the discrete-time setting is more suitable because fine-tuning builds upon a pre-trained model, which is naturally implemented and provided in discrete time.

\paragraph{RL theory.}
Our theoretical developments are also closely connected to the literature of RL and policy gradient methods in discrete time. For MDPs with {\it finite} state and action spaces, recent advancements providing global convergence guarantees for policy gradient methods and their variants can be found in \cite{bhandari2021linear, bhandari2024global,berner2022optimal,Cen2021FastGC,Ding2020NaturalPG,Fu2021SingleTimescaleAP,Liu2019NeuralPR,Liu2020AnIA,Wang2020NeuralPG,xiao2022convergence,Xu2021DoublyRO,zhang2020variational,zhang2021provably,fatkhullin2023stochastic,zhan2023policy,mondal2024improved}. Beyond MDPs, policy gradient methods have also been applied to Linear Quadratic Regulators (LQRs), a specific class of control problems characterized by linear dynamics and quadratic cost functions \cite{bu2019lqr,fazel2018global,hambly2021policy,malik2019derivative,mohammadi2019global,szpruch2021exploration,szpruch2022optimal,wang2020reinforcement,han2023policy,guo2023fast,zhou2023single}. While the convergence analysis of policy gradient methods is well-established in the above-mentioned settings, the study on control problems with continuous state-action space and general cost functions has been limited. Our work ventures into this broader class of control problems, providing convergence guarantees for policy gradient algorithms under  more general settings.

\subsection*{Notations and Organization.} Take $d\in \mathbb{N}_{+}$, the set of all positive integers. We denote $(\Omega, \mcal{F}\coloneqq\{\mathcal{F}_t\}_{t \ge 0}, \mathbb{P})$ as a usual filtered probability space supporting a random variable $R\in L^1(\mathbb{R}^d,\mathcal{F}_0)$, namely, $R(y)\in \mathcal{F}_0$ has finite mean for all $y\in \mathbb{R}^d$. Denote $\{W_t\}_{t =0}^{T-1}$ a sequence of independent standard $d$-dimensional Gaussian random vectors. We denote by $\mathbb{F}^W$ the natural filtration of $\{W_t\}_{t =0}^{T-1}$. In addition, $\mcal{N}(\mu, \Sigma)$ denotes the normal distribution with mean $\mu \in \R^d$ and covariance $\Sigma \in \R^{d\times d}$. Let $f({}\cdot{}\vert\mu, \Sigma)$ denote the probability density of $\mcal{N}(\mu, \Sigma)$. For two probability distributions $P$ and $Q$ such that $P \ll Q$ with densities $p$ and $q$ supported on $\R^d$, we denote the KL divergence by ${\rm KL}(p \| q) \coloneqq \int_{\R^d}p(y)\log\frac{p(y)}{q(y)}{\rm d}y$. Finally, $I_d \in \R^{d\times d} $ denotes the identity matrix and  $\norm{{}\cdot{}}_2$ denotes the Euclidean norm. 

The paper is organized as follows. Section \ref{sec:set-up} introduces the set-up of the stochastic problem  and establishes the well-posedness and regularity. Section \ref{sec:algorithm} proposes a policy iteration algorithm and develops the linear convergence result. Finally, some discussion on future directions, such as parametric setting and continuous-time formulation, are included in Section \ref{sec:discussion}.

\section{Problem set-up and regularity results}
\label{sec:set-up}

\subsection{Problem set-up}
To address the challenge of fine-tuning, we first introduce the dynamics of denoising diffusion probabilistic models (DDPMs), a widely adopted pre-trained framework in practice. Our focus is on the discrete-time formulation, which aligns with the standard implementation of diffusion models in practice. In addition, DDPMs serve as a foundational scheme for effective fine-tuning.

\paragraph{Denoising diffusion probabilistic models (DDPM).} A well-trained DDPM $\{Y_t^{\rm pre}\}_{t=0}^T$ in discrete time usually follows the following stochastic dynamics with state $Y_t^{\rm pre}\in \mathbb{R}^d$:
\begin{align}
    Y_{t+1}^{\rm pre} & = \frac{1}{\sqrt{\alpha_t}}\big(Y_t^{\rm pre} + (1 - \alpha_t)s^{\rm pre}_t(Y_t^{\rm pre})\big) + \sigma_t W_t,   \;\; 0 \leq t \leq T - 1, \;\; \textit{with}\;\; Y_0^{\rm pre} \sim \mcal{N}(0, I_d),\label{eq:ddpm}
\end{align}
where $\curly{W_t}_{t = 0}^{T - 1}$ are i.i.d.~standard Gaussian random vectors such that $W_t\sim \mathcal{N}(0,I_d)$. The hyper-parameters $\curly{\alpha_t}_{t=0}^{T-1}$ with  $\alpha_t\in (0, 1)$ and $\curly{\sigma_t}_{t=0}^{T-1}$ with $\sigma_t>0$ represent the prescribed denoising rate schedules \cite{Ho20,li2023towards}. They control the variance of noise in data generation\footnote{While DDPM chooses $\sigma_t^2 = 1/\alpha_t - 1$, our analysis in the subsequent sections works for general $\sigma_t$.}.  Here, $s^{\rm pre}_t:\mathbb{R}^d \rightarrow\mathbb{R}^d$ is the score function associated with the pre-trained model. In practice, the score estimator $s_t^{\rm pre}$  is obtained by training a neural network to minimize the score matching loss \cite{hyvarinen2005estimation}. {With a well-trained pre-trained model, we expect the distribution of $Y_T$ to be close to the target distribution, from which the pre-trained model has access to samples and seeks to generate additional samples.} Given $s_t^{\rm pre}$, we denote $ p_{t+1 \vert t}^{\rm pre}(\cdot \vert y_t) $ as the conditional density of $Y_{t+1}^{\rm pre}$ given $Y_t^{\rm pre} = y_t$ induced by the dynamics \eqref{eq:ddpm}, i.e., 
\begin{align*}
    p_{t+1\vert t}^{\rm pre}({}\cdot{}\vert y_{t}) = f\parenthesis{{}\cdot{}\bigg\vert\frac{1}{\sqrt{\alpha_t}}\parenthesis{y_t + (1 - \alpha_t)s^{\rm pre}_t(y_t)}, \sigma_{t}^{2} I_d}.
\end{align*}

\begin{Remark}[Choice of the pre-trained model.]
    DDPMs have become the leading choice for pre-trained models across a wide range of applications, serving as a powerful building block for fine-tuning in diverse tasks \cite{dhariwal2021diffusion,Ho20,watson2023novo,li2023towards}.  By pre-training on large datasets, DDPMs capture complex data distributions, enabling relatively easier fine-tuning for specialized applications such as conditional image synthesis, text-to-image generation, and even time-series forecasting. This versatility has been demonstrated in models such as Stable Diffusion \cite{rombach2022high} and DALL·E \cite{ramesh2022hierarchical}, where fine-tuning on task-specific data enhances performance and customization \cite{fan2024reinforcement}. 
\end{Remark}

\paragraph{Stochastic control formulation.}
When human preference or soft constraint can be modeled by a (stochastic) reward function $R(\cdot)$, we consider the following  stochastic control problem to model fine-tuning tasks of diffusion models:
\begin{eqnarray}
    \sup_{\curly{U_t}_{t = 0}^{T-1}\in \mathcal{U}_0} &  \mathbb{E}\left[R(Y_T) - \sum_{t = 0}^{T-1}\beta_t{\rm KL}\parenthesis{p_{t+1\vert t} ({}\cdot{}|Y_t)\,\Big\|\,p_{t+1\vert t}^{\rm pre}({}\cdot{}|Y_t)}\right], \label{eq:objective-dst} \\ 
    \text{s.t.} & \;\; 
    Y_{t+1} = \frac{1}{\sqrt{\alpha_t}}\parenthesis{Y_t + (1 - \alpha_t)U_t} + \sigma_t W_t,\quad y_{0} \sim \mathcal{N}(0, I_{d}), \label{eq:dynamics-dst}
\end{eqnarray}
   with state dynamics $Y_t\in \mathbb{R}^d$ and control $U_t\in \mathbb{R}^d$. In particular, we work with Markovian policies such that the objective in \eqref{eq:objective-dst} is well-defined. Mathematically, we specify the admissible control set as 
   \begin{align*}
    \mathcal{U}_{t}:=\Big\{\{U_\ell\}_{\ell=t}^{T-1} \,\Big|& U_\ell = {u_\ell(Y_\ell)} \text{ for some measurable function } u_\ell: \mathbb{R}^d\rightarrow \mathbb{R}^d \\
    & \text{ and }\mathbb{E}\Big[R(Y_T) - \sum_{\ell = t}^{T-1}\beta_t{\rm KL}\Big(p_{\ell+1\vert \ell} ({}\cdot{}|Y_\ell)\,\Big\|\,p_{\ell+1\vert \ell}^{\rm pre}({}\cdot{}|Y_\ell)\Big)\Big]<\infty\Big\}, \; 0 \leq t \leq T-1.
   \end{align*}
In terms of the reward, we assume $R\in L^1(\mathbb{R}^d,\mathcal{F}_0)$, namely, $R(y)\in \mathcal{F}_0$ has finite mean for all $y\in \mathbb{R}^d$.
   We assume the reward function is {\it known} in this work.
   The KL-divergence measures the deviation of the fine-tuned model $p_{t+1\vert t}$ from the pre-trained model $p_{t+1\vert t}^{\rm pre}$, where $ p_{t+1 \vert t} $ denotes the conditional distribution of $ Y_{t+1} $ given $ Y_{t} $ induced by the control policy $u_t$. The regularization coefficients $\{\beta_t\}_{t=0}^{T-1}$ control the strength of the regularization term.  

{We have a few remarks in place.}    
\begin{Remark}[Control as score function]
    Our formulation is rooted in both diffusion models and control theory. Comparing \eqref{eq:dynamics-dst} with \eqref{eq:ddpm}, we replace the pre-trained score $s_t^{\rm pre}$ by a control variable $u_t$. Consequently, in the context of diffusion models, the learned control sequence $\{u_t\}_{t=0}^{T-1}$ can be viewed as the {\it new} score function of the fine-tuned model. In other words, solving the control problem \eqref{eq:objective-dst}-\eqref{eq:dynamics-dst} is essentially learning a new score function in response to the reward signal $R(\cdot)$ for fine-tuning. Note that the linear dynamics \eqref{eq:dynamics-dst} with Gaussian noise is preferred in stochastic control due to its tractability in analysis. Specifically, our theoretical analysis in the subsequent sections heavily relies on the linearity and the Gaussian smoothing~effect.
\end{Remark} 

\begin{Remark}[Rationale for the objective function and the known reward assumption]
     The objective function in the control formulation\eqref{eq:objective-dst}-\eqref{eq:dynamics-dst}  consists of two parts:~a terminal reward function $R(\cdot)$ at time $T$ and intermediate KL penalties. The reward $R(\cdot)$ captures human preference on the generated samples. For example, in the text-to-image generation, the reward $R$ represents how the generated data $Y_T$ is aligned with the input prompt \cite{fan2024reinforcement,black2023training}. In addition, the penalty term ensures that the fine-tuned model is not too far away from the pre-trained model, which prevents overfitting. Unlike the Shannon entropy of the (randomized) control policy commonly used in the RL literature to encourage exploration, the KL regularization in \eqref{eq:objective-dst}  is applied between two conditional probability densities. From an optimization perspective, the KL divergence introduces   concavity/convexity in control and consequently leads to a better landscape of the objective. Indeed, we will choose the parameter $\beta_t$ sufficiently large to guarantee the existence and uniqueness of the optimal control and to satisfy certain regularity conditions; see Theorem \ref{thm:well-posed}.

    In practice, fine-tuning is typically performed on a dataset containing reward ratings for each data sample, which is much smaller than the pre-trained dataset. Using this fine-tuning dataset, one can estimate the expected reward function. The assumption of known reward is not overly restrictive, as online learning to acquire new rewards is generally uncommon.
\end{Remark}

    \begin{Remark}[Connection to KL divergence on path-wise measures.]
    Another natural idea is to impose the KL divergence between the path-wise measures, i.e., $ {\rm KL}(p_{0:T} \| p_{0:T}^{\rm pre}) $, where $ p_{0:T} $ is the joint density of $ \curly{Y_{t}}_{t = 0}^{T} $ and $ p_{0:T}^{\rm pre} $ is the joint density of $ \curly{Y_{t}^{\rm pre}}_{t = 0}^{T} $. This choice is relevant to formulation \eqref{eq:dynamics-dst}. In particular, the Markov property of the dynamics and the chain rule of the KL divergence imply
    \begin{align*}
        {\rm KL}(p_{0:T} \| p_{0:T}^{\rm pre}) = \E\bracket{\sum_{t = 0}^{T-1}{\rm KL}\Big(p_{t+1\vert t}({}\cdot{}\vert Y_t) \|p_{t+1\vert t}^{\rm pre}({}\cdot{}\vert Y_t)\Big)},
    \end{align*}
    where the expectation is taken over all random variables $\{Y_t\}_{t = 0}^T$. Here, we define
    \begin{align}
        {\rm KL}(p_{0:T} \| p_{0:T}^{\rm pre}) & \coloneqq \int p_{0:T}(y_0, \dots, y_t)\log\frac{p_{0:T}(y_0, \dots, y_t)}{p_{0:T}^{\rm pre}(y_0, \dots, y_t)}{\rm d}y_0\cdots {\rm d}y_t, \label{eq:kl-path}
    \end{align} 
    and for given $y_t \in \R^d$, denote
    \begin{align}
        {\rm KL}\Big(p_{t+1\vert t}({}\cdot{}\vert y_t) \|p_{t+1\vert t}^{\rm pre}({}\cdot{}\vert y_t)\Big) & \coloneqq \int p_{t+1\vert t}(y_{t+1} \vert y_t) \log \frac{p_{t+1\vert t}(y_{t+1} \vert y_t)}{p^{\rm pre}_{t+1\vert t}(y_{t+1} \vert y_t)}{\rm d}y_{t+1}.  \label{eq:kl-condition}
    \end{align}
Thus, when $\beta_t = \beta$ for all $t$, the objective in \eqref{eq:dynamics-dst} becomes
\begin{align*}
    \mathbb{E}\left[R(Y_T) - \sum_{t = 0}^{T-1}\beta_t{\rm KL}\Big(p_{t+1\vert t}({}\cdot{}\vert Y_t) \|p_{t+1\vert t}^{\rm pre}({}\cdot{}\vert Y_t)\Big)\right] = \mathbb{E}\left[R(Y_T)\right] - \beta  {\rm KL}(p_{0:T} \| p_{0:T}^{\rm pre}),
\end{align*}
which is a common choice in fine-tuning of diffusion models \cite{fan2024reinforcement,black2023training,zhang2001some,gao2024reward}. 
    \end{Remark}
   
  Given two Gaussian densities $p_{t+1\vert t}$ and $p^{\rm pre}_{t+1\vert t}$ in  \eqref{eq:dynamics-dst}, the following lemma simplifies the KL divergence term.

  \begin{Lemma} \label{lemma:kl-l2} 
  Consider the KL divergencd defined as in \eqref{eq:kl-condition}.
  For any $y_t \in \R^d$ and any admissible control policy $u_t$, it holds that
      \begin{align}\label{eq:kl-l2-1}
        {\rm KL}\Big(p_{t+1\vert t}({}\cdot{}\vert y_t) \|p_{t+1\vert t}^{\rm pre}({}\cdot{}\vert y_t)\Big) = \frac{(1 - \alpha_{t})^{2}}{2\alpha_{t}\sigma_{t}^{2}}\norm{u_{t}(y_t) - s^{\rm pre}_{t}(y_{t})}_{2}^{2}.
    \end{align}
  \end{Lemma}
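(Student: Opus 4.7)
My plan is to exploit the fact that both conditional densities $p_{t+1\vert t}(\cdot\vert y_t)$ and $p_{t+1\vert t}^{\rm pre}(\cdot\vert y_t)$ are Gaussian with the \emph{same} covariance matrix $\sigma_t^2 I_d$, differing only in their means. From the dynamics \eqref{eq:dynamics-dst} together with the definition of $p_{t+1\vert t}^{\rm pre}$ given just before the remark on the pre-trained model, the two densities are
\begin{align*}
    p_{t+1\vert t}(\cdot\vert y_t) &= f\!\left(\cdot \,\Big|\, \tfrac{1}{\sqrt{\alpha_t}}\bigl(y_t + (1-\alpha_t)u_t(y_t)\bigr),\, \sigma_t^2 I_d\right),\\
    p_{t+1\vert t}^{\rm pre}(\cdot\vert y_t) &= f\!\left(\cdot \,\Big|\, \tfrac{1}{\sqrt{\alpha_t}}\bigl(y_t + (1-\alpha_t)s_t^{\rm pre}(y_t)\bigr),\, \sigma_t^2 I_d\right).
\end{align*}

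The plan is then to invoke (or briefly re-derive) the standard closed-form expression for the KL divergence between two Gaussians with common covariance $\Sigma$ and means $\mu_1,\mu_2$, namely
\begin{equation*}
    {\rm KL}\!\left(\mathcal{N}(\mu_1,\Sigma)\,\|\,\mathcal{N}(\mu_2,\Sigma)\right) = \tfrac{1}{2}(\mu_1-\mu_2)^{\top}\Sigma^{-1}(\mu_1-\mu_2).
\end{equation*}
If I prefer a self-contained derivation, I would expand $\log(p_{t+1\vert t}/p_{t+1\vert t}^{\rm pre})$: the normalization constants and the quadratic-in-$y_{t+1}$ terms cancel because the covariances agree, and the cross term $\mathbb{E}[Y_{t+1}-\mu_1]$ under $p_{t+1\vert t}(\cdot\vert y_t)$ vanishes, leaving exactly $\tfrac{1}{2}\|\mu_1-\mu_2\|_2^2/\sigma_t^2$.

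Substituting the means identified above, the mean difference is
\begin{equation*}
    \mu_1-\mu_2 = \tfrac{1-\alpha_t}{\sqrt{\alpha_t}}\bigl(u_t(y_t)-s_t^{\rm pre}(y_t)\bigr),
\end{equation*}
so that $\|\mu_1-\mu_2\|_2^2 = \tfrac{(1-\alpha_t)^2}{\alpha_t}\|u_t(y_t)-s_t^{\rm pre}(y_t)\|_2^2$. Dividing by $2\sigma_t^2$ yields exactly \eqref{eq:kl-l2-1}.

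There is no genuine obstacle here; the lemma is a direct Gaussian calculation. The only care needed is bookkeeping: tracking the $1/\sqrt{\alpha_t}$ prefactor when squaring the mean gap (producing the $1/\alpha_t$ rather than $1/\sqrt{\alpha_t}$ in the final expression) and noting that admissibility of $u_t$ together with $\sigma_t>0$ makes both densities absolutely continuous with respect to each other, so the KL divergence is well-defined and finite for each fixed $y_t$.
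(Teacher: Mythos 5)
Your proposal is correct and follows essentially the same route as the paper: identify both conditional laws as Gaussians with common covariance $\sigma_t^2 I_d$, expand the log-ratio (or equivalently quote the closed-form KL between equal-covariance Gaussians), note that the cross term vanishes, and square the mean gap $\tfrac{1-\alpha_t}{\sqrt{\alpha_t}}(u_t(y_t)-s_t^{\rm pre}(y_t))$. No substantive difference from the paper's argument.
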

    
  Lemma \ref{lemma:kl-l2} links the KL-divergence between two conditional distributions with the squared loss between the control $u_t$ and the pre-trained score $s_t^{\rm pre}$. As the control can be interpreted as the new score of the fine-tuned model, \eqref{eq:kl-l2-1} enjoys the spirit of the score matching loss in training diffusion models \cite{Ho20,song2020score,han2024neural}. The proof of Lemma \ref{lemma:kl-l2} is based on direct calculations.

    \begin{proof}[Proof of Lemma \ref{lemma:kl-l2}]
        Recall that for any $y_t \in \R^d$,
        \begin{align*}
           {p_{t+1\vert t}({}\cdot{}\vert y_{t})} = f\parenthesis{{}\cdot{}\vert\mu_t(y_t), \sigma_{t}^{2} I_d} \;\; \text{and}\;\; p_{t+1\vert t}^{\rm pre}({}\cdot{}\vert y_{t}) = f\parenthesis{{}\cdot{}\vert\mu_t^{\rm pre}(y_t), \sigma_{t}^{2} I_d},
        \end{align*}
        with 
        \begin{align*}
            \mu_t(y_t) = \frac{1}{\sqrt{\alpha_t}}\parenthesis{y_t + (1 - \alpha_t)u_t(y_t)} \;\; \text{and} \;\; \mu_t^{\rm pre}(y_t) = \frac{1}{\sqrt{\alpha_t}}\parenthesis{y_t + (1 - \alpha_t)s^{\rm pre}_t(y_t)}.
        \end{align*}
        Thus, for any $y_t, y_{t+1} \in \R^d$, we have
        \begin{align*}
            \log \left(\frac{p_{t+1\vert t}(y_{t+1} \vert y_t)}{p^{\rm pre}_{t+1\vert t}(y_{t+1} \vert y_t)} \right)= -\frac{1}{2\sigma_t^2}\norm{y_{t+1} - \mu_t(y_t)}_2^2 + \frac{1}{2\sigma_t^2}\norm{y_{t+1} - \mu_t^{\rm pre}(y_t)}_2^2. 
        \end{align*}
        Denote $\E_{p_{t+1\vert t}}$ as the expectation under the conditional density $p_{t+1\vert t}(\cdot\vert y_t)$ of $Y_{t+1}$ given $Y_t = y_t$. By definition of the KL divergence, we have %
        \begin{align}
             {\rm KL}\Big(p_{t+1\vert t}({}\cdot{}\vert y_t) \|p_{t+1\vert t}^{\rm pre}({}\cdot{}\vert y_t)\Big) & =\E_{p_{t+1\vert t}}\bracket{\log \left(\frac{p_{t+1\vert t}(Y_{t+1} \vert y_t)}{p^{\rm pre}_{t+1\vert t}(Y_{t+1} \vert y_t)}\right)} \nonumber\\ & = -\frac{1}{2\sigma_t^2}\E_{p_{t+1\vert t}}\bracket{\norm{Y_{t+1} - \mu_t(y_t)}_2^2} +  \frac{1}{2\sigma_t^2}\E_{p_{t+1\vert t}}\bracket{\norm{Y_{t+1} - \mu_t^{\rm pre}(y_t)}_2^2}. \label{eq:kl-simple}
        \end{align}
        Note that 
        \begin{align*}
            \E_{p_{t+1\vert t}}\bracket{\norm{Y_{t+1} - \mu_t^{\rm pre}(y_t)}_2^2} & = \E_{p_{t+1\vert t}}\bracket{\norm{Y_{t+1} - \mu_t(y_t)}_2^2} + \E_{p_{t+1\vert t}}\bracket{\norm{\mu_t(y_t) - \mu_t^{\rm pre}(y_t)}_2^2} \\ & \quad + 2\E_{p_{t+1\vert t}}\bracket{(Y_{t+1} - \mu_t(y_t))^\top(\mu_t(y_t) -\mu_t^{\rm pre}(y_t))} \\ & = \E_{p_{t+1\vert t}}\bracket{\norm{Y_{t+1} - \mu_t(y_t)}_2^2} + \norm{\mu_t(y_t) - \mu_t^{\rm pre}(y_t)}_2^2,
        \end{align*}
        where we use the fact that $\E_{t+1\vert t}\bracket{Y_{t+1}} = \mu_t(y_t)$.
        Plugging the above equality into \eqref{eq:kl-simple}, we obtain
        \begin{align*}
            {\rm KL}\Big(p_{t+1\vert t}({}\cdot{}\vert y_t) \|p_{t+1\vert t}^{\rm pre}({}\cdot{}\vert y_t)\Big) = \frac{1}{2\sigma_t^2}\norm{\mu_t(y_t) - \mu_t^{\rm pre}(y_t)}_2^2 =  \frac{(1 - \alpha_{t})^{2}}{2\alpha_{t}\sigma_{t}^{2}}\norm{u_{t}(y_t) - s^{\rm pre}_{t}(y_{t})}_{2}^{2},
        \end{align*}
        which completes the proof. 
    \end{proof}

    With Lemma \ref{lemma:kl-l2}, we define the optimal value function at time $ t $ as
    \begin{align}
    V^{*}_{t}(y) &\coloneqq \sup_{\curly{U_{\ell}}_{\ell \geq t}\in \mcal{U}_t}\mathbb{E}\left[R(Y_T) - \sum_{\ell = t}^{T-1}\beta_\ell{\rm KL}\Big(p_{\ell+1\vert \ell}({}\cdot{}\vert Y_\ell) \|p_{\ell+1\vert \ell}^{\rm pre}({}\cdot{}\vert Y_\ell)\Big)\,\Big\vert \,Y_t=y\right] \nonumber \\
    &=\sup_{\curly{u_{\ell}}_{\ell \geq t}}\mathbb{E}\left[R(Y_T) - \sum_{\ell = t}^{T-1}\beta_\ell\frac{(1 - \alpha_{\ell})^{2}}{2\alpha_{\ell}\sigma_{\ell}^{2}}\norm{u_{\ell}(Y_\ell) - s^{\rm pre}_{\ell}(Y_{\ell})}_{2}^{2}\,\Big\vert \,Y_t=y\right]. \label{eq:opt-value}
    \end{align}
The Dynamic Programming Principle implies that $V_t^*$ satisfies the Bellman equation:
    \begin{align}
        V^{*}_{t}(y) =\sup_{u_{t}} \ \E\bracket{V^{*}_{t+1}\left(\frac{1}{\sqrt{\alpha_t}}\parenthesis{y + (1 - \alpha_t)u_t(y)} + \sigma_t W_t\right) - \beta_t \frac{(1 - \alpha_{t})^{2}}{2\alpha_{t}\sigma_{t}^{2}}\norm{u_{t}(y) - s^{\rm pre}_{t}(y)}_{2}^{2}}.\label{eq:bellman-V*}
    \end{align} 
In the next section, we will discuss the well-posedness of the optimal control problem \eqref{eq:objective-dst}--\eqref{eq:dynamics-dst} and the regularity of the optimal value function $V_t^*$. 
    
\subsection{Regularity and well-posedness}
In this section, we establish the regularity of the optimal value function and the optimal control policy. To start, we assume the following assumptions on the reward and pre-trained score functions.
\begin{Assumption}[Smoothness of the reward]\label{ass:smooth r}
    Assume $r(y) \coloneqq \mathbb{E}[R(y)]$ is $L_{0}^r$-Lipschitz and $L_{1}^r$-gradient Lipschitz in $y \in \R^d$, i.e.,
    \begin{align}
        \abs{r(y_1) - r(y_2) } \leq L_{0}^r\norm{y_1 - y_2}_2, \\ 
        \norm{\nabla r(y_1) - \nabla r(y_2) }_2 \leq L_{1}^r\norm{y_1 - y_2}_2.
    \end{align}
\end{Assumption}

\begin{Assumption}[Smoothness of the pre-trained score function]\label{ass:smooth s}
    Assume $s_t^{\rm pre}$ is $L_{0}^{s}$-Lipschitz and $L_{1}^{s}$-gradient Lipschitz in $y \in \R^d$, i.e.,
    \begin{align}
        \norm{s_t^{\rm pre}(y_1) - s_t^{\rm pre}(y_2) }_2 \leq L_{0,t}^{s}\norm{y_1 - y_2}_2, \\ 
        \norm{\nabla s_t^{\rm pre}(y_1) - \nabla s_t^{\rm pre}(y_2) }_2 \leq L_{1,t}^{s}\norm{y_1 - y_2}_2.
    \end{align}
\end{Assumption}

While Assumptions \ref{ass:smooth r} and \ref{ass:smooth s} guarantee the smoothness of both the expected reward $r$ and the pre-trained score $\curly{s_t^{\rm pre}}_{t=0}^{T-1}$, no convexity assumptions are imposed, and thus the control problem \eqref{eq:objective-dst}--\eqref{eq:dynamics-dst} is in general {\it non-concave} with respect to $\curly{u_t}_{t=0}^{T-1}$.

The next theorem establishes the existence and uniqueness of the optimal control, as well as the regularities of the solution to problem \eqref{eq:objective-dst}--\eqref{eq:dynamics-dst}. For ease of exposition, we define  a series of constants recursively. Let $0 < \lambda_t < 1$ for each $ t < T$. Set $L_{0, T}^{V^{*}} = L_0^r$ and $L_{1, T}^{V^{*}} = L_1^r$. For $ t < T$, define
\begin{eqnarray}
    L_{0,t}^{V^{*}} &=& \frac{1}{\sqrt{\alpha_t}}(1 + (1 - \alpha_t)L_{0, t}^s)L_{0, t+1}^{V^{*}}, \label{eq:LV0star}\\ 
    L_{1, t}^{V^{*}} &=& \frac{1}{\alpha_{t}}(1 + (1 - \alpha_{t})L_{0, t}^s)(1 + (1 - \alpha_{t})L_{0, t}^{u^{*}})L_{1,t+1}^{V^{*}} + \frac{1 - \alpha_{t}}{\sqrt{\alpha_{t}}}L_{1, t}^s L_{0,t+1}^{V^{*}},\label{eq:LV1star} \\ 
    L_{0, t}^{u^*} & =& \lambda_{t}^{-1}\parenthesis{L_{0, t}^s + \frac{1 - \lambda_t}{1- \alpha_t}}, \label{eq:Lu0star}\\ 
    L_{1, t}^{u^*}  &= & \lambda_{t}^{-1}\parenthesis{L_{1, t}^{s} +  \frac{\E\bracket{\norm{W_{t}}_2}(1 - \lambda_t)}{(1 - \alpha_t)\sqrt{\alpha_{t}}\sigma_{t}}\parenthesis{1 + (1 - \alpha_{t})L_{0,t}^{u^{*}}}^{2}}.\label{eq:Lu1star}
\end{eqnarray}
Here, $\E\bracket{\norm{W_t}_2} = \frac{\sqrt{2}\Gamma((d+1)/2)}{\Gamma(d/2)} < \infty $ is a constant with $\Gamma(\cdot)$ being the Gamma function. Note that the above constants only depend on the system parameters $\alpha_t$, $L_0^r$, $L_1^r$, $L_0^s$ and $L_1^s$ as well as the hyper-parameter $\lambda_t$. In the next theorem, we show that the above constants are indeed the Lipschitz coefficients for $\curly{V_t^{*}}_{t = 0}^{T}$ and $\curly{u_t^{*}}_{t=0}^{T-1}$.

\begin{Theorem}[Regularity and well-posedness]\label{thm:well-posed}
Suppose Assumptions \ref{ass:smooth r} and \ref{ass:smooth s} hold and let the constants be defined according to \eqref{eq:LV0star}--\eqref{eq:Lu1star}. Choose $\beta_t$ such that $ 1 - \frac{\sigma_t^2}{\beta_t}L_{1, t+1}^{V^*} \geq \lambda_{t} > 0  $. Then,
\begin{enumerate}[(i)]
\item The optimal value function $V_t^*(y)$ defined in \eqref{eq:opt-value} is $L_{0,t}^{V^{*}}$-Lipschitz, differentiable and $L_{1,t}^{V^{*}}$-gradient Lipschitz in $y \in \R^d$.

    \item There is a unique optimal control $u_t^*: \R^d \to \R^d$ of problem \eqref{eq:objective-dst}--\eqref{eq:dynamics-dst} satisfying
\begin{eqnarray}\label{eq:u*(y)}
     u_{t}^{*}(y)  = s_{t}^{\rm pre}(y) + \frac{\sqrt{\alpha_{t}}\sigma_{t}^{2}}{(1 - \alpha_{t})\beta_t}\E\bracket{\nabla V_{t+1}^{*}\left(\frac{1}{\sqrt{\alpha_t}}\parenthesis{y + (1 - \alpha_t)u_t^*(y)} + \sigma_t W_t\right)}.
\end{eqnarray}
Moreover, the optimal value function $V_t^*$ is the unique $\mcal{C}^1$ solution to the Bellman equation \eqref{eq:bellman-V*}. 
\item \label{item:lip-u*} The optimal control $u_t^*(y)$ is $L_{0, t}^{u^{*}}$-Lipschitz, differentiable, and $ L_{1,t}^{u^{*}} $-gradient Lipschitz in $y \in \R^d$.

\end{enumerate}
\end{Theorem}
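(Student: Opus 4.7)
The plan is to prove (i)--(iii) jointly by backward induction on $t$. The base case $t = T$ is immediate since $V_{T}^{*}(y) = r(y)$ is $L_{0}^{r}$-Lipschitz and $L_{1}^{r}$-gradient-Lipschitz by Assumption~\ref{ass:smooth r}. At the inductive step at time $t$, the recursive definitions \eqref{eq:LV0star}--\eqref{eq:Lu1star} dictate the order: first establish (ii), then (iii), and finally (i) via the envelope theorem.

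For (ii), fix $y \in \R^{d}$ and view the inner Bellman maximization as
\begin{align*}
\sup_{u \in \R^{d}} G_{t}(y,u), \qquad G_{t}(y,u) := \E\!\left[V_{t+1}^{*}\!\left(\frac{1}{\sqrt{\alpha_{t}}}(y+(1-\alpha_{t})u)+\sigma_{t}W_{t}\right)\right] - \frac{\beta_{t}(1-\alpha_{t})^{2}}{2\alpha_{t}\sigma_{t}^{2}}\|u - s_{t}^{\mathrm{pre}}(y)\|_{2}^{2}.
\end{align*}
By the inductive hypothesis $\nabla V_{t+1}^{*}$ is $L_{1,t+1}^{V^{*}}$-Lipschitz, so (in the weak a.e.\ sense) $\nabla_{u}^{2} G_{t}(y,u) \preceq \frac{(1-\alpha_{t})^{2}}{\alpha_{t}\sigma_{t}^{2}}(\sigma_{t}^{2}L_{1,t+1}^{V^{*}} - \beta_{t}) I_{d}$; the hypothesis $1 - \sigma_{t}^{2}L_{1,t+1}^{V^{*}}/\beta_{t}\ge \lambda_{t}>0$ then makes $G_{t}(y,\cdot)$ strongly concave, so a unique maximizer $u_{t}^{*}(y)$ exists and $\nabla_{u} G_{t}(y, u_{t}^{*}(y)) = 0$ reduces to the fixed-point identity \eqref{eq:u*(y)}. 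Applying this identity at two points, using the $L_{0,t}^{s}$-Lipschitzness of $s_{t}^{\mathrm{pre}}$ and the $L_{1,t+1}^{V^{*}}$-gradient-Lipschitzness of $V_{t+1}^{*}$, and absorbing the self-referential term $\|u_{t}^{*}(y_{1})-u_{t}^{*}(y_{2})\|_{2}$ on the right-hand side via $\sigma_{t}^{2}L_{1,t+1}^{V^{*}}/\beta_{t} \le 1 - \lambda_{t}$, yields exactly the Lipschitz constant $L_{0,t}^{u^{*}}$ in \eqref{eq:Lu0star}.

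For (iii), set $h(z) := \E[\nabla V_{t+1}^{*}(z + \sigma_{t}W_{t})]$. Because $h = \nabla H$ for $H(z) := \E[V_{t+1}^{*}(z + \sigma_{t}W_{t})]$, the matrix $\nabla h$ is symmetric, and Gaussian integration by parts (Stein's identity) gives $\nabla h(z) = \sigma_{t}^{-1} \E[\nabla V_{t+1}^{*}(z + \sigma_{t}W_{t}) W_{t}^{\top}]$, so $\|\nabla h\|\le L_{1,t+1}^{V^{*}}$ and $\nabla h$ is $\sigma_{t}^{-1} L_{1,t+1}^{V^{*}}\E[\|W_{t}\|_{2}]$-Lipschitz. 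Since $I_{d} - \sigma_{t}^{2}\nabla h/\beta_{t} \succeq \lambda_{t} I_{d}$, the implicit function theorem applied to \eqref{eq:u*(y)} produces
\begin{align*}
\nabla u_{t}^{*}(y) = \left[I_{d} - \frac{\sigma_{t}^{2}}{\beta_{t}}\nabla h(\mu_{t})\right]^{-1}\!\!\left[\nabla s_{t}^{\mathrm{pre}}(y) + \frac{\sigma_{t}^{2}}{(1-\alpha_{t})\beta_{t}}\nabla h(\mu_{t})\right], \quad \mu_{t} := \frac{1}{\sqrt{\alpha_{t}}}(y + (1-\alpha_{t})u_{t}^{*}(y)),
\end{align*}
which re-confirms the bound $L_{0,t}^{u^{*}}$; differencing at $y_{1},y_{2}$ and combining the Lipschitz bounds on $\nabla h$, $\nabla s_{t}^{\mathrm{pre}}$, and on $y \mapsto \mu_{t}(y,u_{t}^{*}(y))$ (the latter having constant $\frac{1}{\sqrt{\alpha_{t}}}(1+(1-\alpha_{t})L_{0,t}^{u^{*}})$) yields $L_{1,t}^{u^{*}}$ in \eqref{eq:Lu1star}. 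Finally, (i) follows from the envelope theorem: the first-order condition cancels the KL contribution, giving
\begin{align*}
\nabla V_{t}^{*}(y) = \frac{1}{\sqrt{\alpha_{t}}}\!\left(I_{d}+(1-\alpha_{t})\nabla s_{t}^{\mathrm{pre}}(y)^{\top}\right)\E\!\left[\nabla V_{t+1}^{*}(\mu_{t} + \sigma_{t}W_{t})\right],
\end{align*}
from which $L_{0,t}^{V^{*}}$ in \eqref{eq:LV0star} is immediate, and splitting $\nabla V_{t}^{*}(y_{1})-\nabla V_{t}^{*}(y_{2})$ into a $\nabla V_{t+1}^{*}$-Lipschitz contribution (where $L_{0,t}^{u^{*}}$ enters via the Lipschitz constant of $y\mapsto \mu_{t}(y,u_{t}^{*}(y))$) and a $\nabla s_{t}^{\mathrm{pre}}$-Lipschitz contribution reproduces \eqref{eq:LV1star}. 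Uniqueness of $V_{t}^{*}$ among $\mathcal{C}^{1}$ solutions of \eqref{eq:bellman-V*} follows because strong concavity forces any such solution to admit the same pointwise maximizer $u_{t}^{*}(y)$, and backward induction then forces equality.

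The main obstacle is the gradient-Lipschitz estimate in (iii): $V_{t+1}^{*}$ is only assumed gradient-Lipschitz, so a brute-force second differentiation of the fixed-point formula is unavailable. Stein's identity precisely trades the missing derivative for the Gaussian weight $W_{t}/\sigma_{t}$, producing the factor $\E[\|W_{t}\|_{2}]/\sigma_{t}$ in \eqref{eq:Lu1star}. A secondary challenge is the constant-chasing: the single parameter $\lambda_{t}$ must simultaneously underwrite strong concavity of $G_{t}(y,\cdot)$, invertibility of $I_{d} - \sigma_{t}^{2}\nabla h/\beta_{t}$, and contraction in the self-referential Lipschitz estimate for $u_{t}^{*}$, which is exactly what the hypothesis $1 - \sigma_{t}^{2}L_{1,t+1}^{V^{*}}/\beta_{t}\ge \lambda_{t}$ ensures.
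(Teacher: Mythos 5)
Your proposal follows essentially the same route as the paper: backward induction; strong concavity of the inner Bellman maximization from the choice of $\beta_t$; the fixed-point identity for $u_t^*$ and the self-referential absorption trick (dividing by $\lambda_t$) for its Lipschitz constant; Gaussian integration by parts (Stein's identity) to write $\nabla h(z) = \sigma_t^{-1}\mathbb{E}[\nabla V_{t+1}^*(z+\sigma_t W_t)W_t^\top]$ and thereby trade a missing second derivative of $V_{t+1}^*$ for a factor $\mathbb{E}\|W_t\|_2/\sigma_t$; and the envelope-theorem cancellation giving $\nabla V_t^*(y) = \frac{1}{\sqrt{\alpha_t}}(I_d+(1-\alpha_t)\nabla s_t^{\rm pre}(y))^\top \mathbb{E}[\nabla V_{t+1}^*(\mu_t+\sigma_t W_t)]$.

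Two minor stylistic departures, neither of which changes the substance. First, you invoke the implicit function theorem to obtain differentiability of $u_t^*$ and the solved form
$\nabla u_t^*(y) = [I_d-\tfrac{\sigma_t^2}{\beta_t}\nabla h(\mu_t)]^{-1}[\nabla s_t^{\rm pre}(y)+\tfrac{\sigma_t^2}{(1-\alpha_t)\beta_t}\nabla h(\mu_t)]$,
whereas the paper works through the finite-difference limit by hand; yours is a mild streamlining, and the invertibility of $I_d-\tfrac{\sigma_t^2}{\beta_t}\nabla h$ (via symmetry of $\nabla h$, which you correctly note follows from $h = \nabla H$) makes the IFT applicable. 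Second, for $L_{1,t}^{u^*}$ you difference the inverted form, which a priori introduces a $\lambda_t^{-2}$ from differencing the matrix inverse; the paper instead differences the un-inverted identity $\nabla u_t^*(y)=\nabla s_t^{\rm pre}(y)+\tfrac{\sqrt{\alpha_t}\sigma_t^2}{(1-\alpha_t)\beta_t}\mathcal{W}_{t+1}(y)\mathcal{U}_t(y)$ and absorbs the self-reference once. It turns out your route reproduces exactly \eqref{eq:Lu1star}, because the extra $\lambda_t^{-1}$ combines with $\|B(y)\|\le L_{0,t}^s+\tfrac{1-\lambda_t}{1-\alpha_t}$ to give precisely $L_{0,t}^{u^*}$, and $(1+(1-\alpha_t)L_{0,t}^{u^*})\cdot 1 + (1-\alpha_t)L_{0,t}^{u^*}(1+(1-\alpha_t)L_{0,t}^{u^*})$ does not arise; instead the two contributions sum to $(1+(1-\alpha_t)L_{0,t}^{u^*})^2$, but this is a constant-chasing detail worth writing out rather than asserting. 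Overall, correct and the same proof.
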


When the regularization coefficient $\beta_t$ is sufficiently large, Theorem \ref{thm:well-posed} states that the optimal value function $V_t^*(\cdot)$ is Lipschitz and gradient Lipschitz. The choice of $\beta_t$ also ensures the right hand side of \eqref{eq:bellman-V*} is strongly concave in $u_t(\cdot)$, guaranteeing the existence and uniqueness of the optimal control $u_t^*$. Furthermore, Theorem \ref{thm:well-posed} shows the optimal control $u_t^*$ is also Lipschitz and gradient Lipschitz. The regularities of $V_t^*$ and $u_t^*$ will serve as the foundation of the algorithm design and convergence analysis in the subsequent sections. \\

{We outline a brief proof sketch to highlight the ideas before providing the detailed proof, which is based on backward induction. Assuming $V_{t+1}^*$ is Lipschitz and gradient Lipschitz, the first-order optimality condition  implies that \eqref{eq:u*(y)} holds, given the  optimization problem is unconstrained. Direct calculation leads to the Lipchitz condition of $u_t^*$ and the smoothing effect of the Gaussian random variable is utilized to obtain differentiability. In addition, the Lipschitz property of $\nabla u_t^*$ is a consequence of the integration by parts formula in \eqref{eq:magic-1}; hence \eqref{item:lip-u*} holds. Finally, the regularities of $V_t^*$ follow from the Lipschitz conditions of $u_t^*$ and $\nabla u_t^*$, which completes the induction.}

\begin{proof}
We prove Theorem \ref{thm:well-posed} by backward induction. At time $t = T$, Assumption \ref{ass:smooth r} implies $V_T^*(y) = R(y)$ is $L_{0, T}^{V^*}$-Lipschitz and $L_{1, T}^{V^*}$-gradient Lipschitz with $L_{0, T}^{V^*} = L_0^r$ and $L_{1, T}^{V^*} = L_1^r$.

\underline{Step 1: Lipschitz condition of $ u_{t}^{*} $.} Assume that $V^*_{t+1}$ is $L_{0, t+1}^{V^*}$-Lipschitz and $L_{1, t+1}^{V^*}$-gradient Lipschitz in~$y\in \R^d$. The choice of $\beta_t$ implies that the mapping
\begin{eqnarray}\label{eq:obj-concave}
    u \mapsto \E\bracket{V^{*}_{t+1}\left(\frac{1}{\sqrt{\alpha_t}}\parenthesis{y + (1 - \alpha_t)u} + \sigma_t W_t\right) - \beta_t \frac{(1 - \alpha_{t})^{2}}{2\alpha_{t}\sigma_{t}^{2}}\norm{u - s^{\rm pre}_{t}(y)}_{2}^{2}}
\end{eqnarray}
is $\gamma_t$-strongly concave with $ \gamma_{t} = \frac{(1 - \alpha_{t})^{2}}{\alpha_{t}}\parenthesis{\frac{\beta_{t}}{\sigma_{t}^{2}} - L_{1, t+1}^{V^{*}}} > 0 $. Hence, there is a unique optimal control $u_t^*$ satisfying
\begin{eqnarray}\label{eq:u*(y)-re}
     u_{t}^{*}(y)  = s_{t}^{\rm pre}(y) + \frac{\sqrt{\alpha_{t}}\sigma_{t}^{2}}{(1 - \alpha_{t})\beta_t}\E\bracket{\nabla V_{t+1}^{*}\left(\frac{1}{\sqrt{\alpha_t}}\parenthesis{y + (1 - \alpha_t)u_t^*(y)} + \sigma_t W_t\right)},
\end{eqnarray}
which is obtained by setting the gradient of mapping \eqref{eq:obj-concave} as zero for each $y \in \R^d$.
Here, we apply the Lipschitz condition of $V_{t+1}^*$ and the dominated convergence theorem to interchange the operators. 

We next prove the Lipschitz condition of $u_t^*$ in $y\in \R^d$. Note that for any $y_1$ and $y_2$ in $\R^d$, Eq.~\eqref{eq:u*(y)-re} implies
\begin{align*}
    u_t^*(y_1) - u_t^*(y_2) & = s_{t}^{\rm pre}(y_1) - s_{t}^{\rm pre}(y_2)  + \frac{\sqrt{\alpha_{t}}\sigma_{t}^{2}}{(1 - \alpha_{t})\beta_t}\E\bracket{\nabla V_{t+1}^{*}\parenthesis{\frac{1}{\sqrt{\alpha_t}}\parenthesis{y_1 + (1 - \alpha_t)u_t^*(y_1)} + \sigma_t W_t)}} \\ & \quad - \frac{\sqrt{\alpha_{t}}\sigma_{t}^{2}}{(1 - \alpha_{t})\beta_t}\E\bracket{\nabla V_{t+1}^{*}\parenthesis{\frac{1}{\sqrt{\alpha_t}}\parenthesis{y_2 + (1 - \alpha_t)u_t^*(y_2)} + \sigma_t W_t)}}.
\end{align*}
Utilizing the Lipschitz condition of $s_t^{\rm pre}$ and $\nabla V_{t+1}^{*}$, we obtain
\begin{align*}
    \norm{u_t^*(y_1) - u_t^*(y_2)}_2 & \leq \norm{s_{t}^{\rm pre}(y_1) - s_{t}^{\rm pre}(y_2) }_2  + \frac{\sqrt{\alpha_{t}}\sigma_{t}^{2}}{(1 - \alpha_{t})\beta_t}\E\Big[\Big\|\nabla V_{t+1}^{*}\parenthesis{\frac{1}{\sqrt{\alpha_t}}\parenthesis{y_1 + (1 - \alpha_t)u_t^*(y_1)} + \sigma_t W_t)} \\ & \qquad\qquad\qquad - \nabla V_{t+1}^{*}\parenthesis{\frac{1}{\sqrt{\alpha_t}}\parenthesis{y_2 + (1 - \alpha_t)u_t^*(y_2)} + \sigma_t W_t)}\Big\|_2\Big] \\ & \leq L_{0, t}^s\norm{y_1 - y_2}_2 \\ & \quad + \frac{\sqrt{\alpha_{t}}\sigma_{t}^{2}}{(1 - \alpha_{t})\beta_t}L_{1, t+1}^{V^*}\norm{\frac{1}{\sqrt{\alpha_t}}\parenthesis{y_1 + (1 - \alpha_t)u_t^*(y_1)} - \frac{1}{\sqrt{\alpha_t}}\parenthesis{y_2 + (1 - \alpha_t)u_t^*(y_2)}}_2 \\ & \leq L_{0, t}^s\norm{y_1 - y_2}_2  + \frac{\sqrt{\alpha_{t}}\sigma_{t}^{2}}{(1 - \alpha_{t})\beta_t}L_{1, t+1}^{V^*}\parenthesis{\frac{1}{\sqrt{\alpha_t}}\norm{y_1 - y_2}_2 + \frac{1 - \alpha_t}{\sqrt{\alpha_t}}\norm{u_t^*(y_1) - u_t^*(y_2)}_2}.
\end{align*}
Equivalently, we have 
\begin{align*}
      \parenthesis{1 - \frac{\sigma_t^2}{\beta_t}L^{V^*}_{1, t+1}}\norm{u_t^*(y_1) - u_t^*(y_2)}_2  & \leq \parenthesis{L_{0, t}^s + \frac{\sigma_t^2 L^{V^*}_{1, t+1}}{(1- \alpha_t)\beta_t}}\norm{y_1 - y_2}_2.
\end{align*}
Since $1 - \frac{\sigma_t^2}{\beta_t}L^{V^*}_{1, t+1}\geq \lambda_t > 0$, we established the Lipschitz condition of the optimal control $u_t^*$:
\begin{align*}
    \norm{u_t^*(y_1) - u_t^*(y_2)}_2 \leq \lambda_{t}^{-1}\parenthesis{L_{0, t}^s + \frac{1 - \lambda_t}{1- \alpha_t}}\norm{y_1 - y_2}_2 = L_{0, t}^{u^*}\norm{y_1 - y_2}_2.,
\end{align*}
where we recall $L_{0, t}^{u^*}$ defined in \eqref{eq:Lu0star}.

\underline{Step 2: Differentiability of $ u_{t}^{*} $.} We now argue that $ u_t^*(y)$ is differentiable for all $y\in \mathbb{R}^d$. Let $h \in \R^d$ be an arbitrary non-zero vector and let $y \in \R^d$ be fixed. Since $s_t^{\rm pre}$ is differentiable, we have
\begin{align}
    s_t^{\rm pre}(y + h) - s_t^{\rm pre}(y) = \nabla s_t^{\rm pre}(y) h + o(\norm{h}_2). \label{eq:s-diff}
\end{align}
Moreover, by the inductive hypothesis, $\nabla V_{t+1}^*(y)$ is Lipschitz and thus $\nabla^2 V_{t+1}^*(y)$ exists for almost all $y$ and is bounded. \cite[Theorem 2.27]{folland1999real} implies the mapping $z \mapsto \E\bracket{\nabla V^{*}_{t+1}(z + \sigma_t W_t)}$ is differentiable everywhere and its derivative is given by
\begin{align}\label{eq:derivative nabla V}
    \frac{\partial}{\partial z}\E\bracket{\nabla V^{*}_{t+1}(z + \sigma_t W_t)} = \E\bracket{\nabla^2 V^{*}_{t+1}(z + \sigma_t W_t)}. 
\end{align}
Let $z = \frac{1}{\sqrt{\alpha_t}}\parenthesis{y + (1 - \alpha_t)u_t^*(y)} $ and $k = \frac{1}{\sqrt{\alpha_t}}(h + (1 - \alpha_t)(u_t^{*}(y+h) - u_t^{*}(y))$. Eq.~\eqref{eq:derivative nabla V} implies
\begin{align}
    \E\bracket{\nabla V^{*}_{t+1}(z + k + \sigma_t W_t)} - \E\bracket{\nabla V^{*}_{t+1}(z + \sigma_t W_t)} = \E\bracket{\nabla^2 V_{t+1}^*(z + \sigma_t W_t)}k + o(\norm{k}_2). \label{eq:nabla-v*-diff}
\end{align}
Since $u_t^*$ is Lipschitz, we have $k = \mcal{O}(\norm{h}_2)$ as $\norm{h}_2 \to 0$. Combining \eqref{eq:s-diff} and \eqref{eq:nabla-v*-diff}, we obtain
\begin{align*}
    u_t^*(y+h) - u_t^*(y) & = s_{t}^{\rm pre}(y+h) - s_{t}^{\rm pre}(y)  + \frac{\sqrt{\alpha_{t}}\sigma_{t}^{2}}{(1 - \alpha_{t})\beta_t}\E\bracket{\nabla V_{t+1}^{*}\parenthesis{\frac{1}{\sqrt{\alpha_t}}\parenthesis{(y+h) + (1 - \alpha_t)u_t^*(y+h)} + \sigma_t W_t)}} \\ & \quad - \frac{\sqrt{\alpha_{t}}\sigma_{t}^{2}}{(1 - \alpha_{t})\beta_t}\E\bracket{\nabla V_{t+1}^{*}\parenthesis{\frac{1}{\sqrt{\alpha_t}}\parenthesis{y + (1 - \alpha_t)u_t^*(y)} + \sigma_t W_t)}} \\ & = \nabla s_t^{\rm pre}(y) h  + \frac{\sqrt{\alpha_{t}}\sigma_{t}^{2}}{(1 - \alpha_{t})\beta_t}\mcal{H}_{t+1}(y)\parenthesis{\frac{1}{\sqrt{\alpha_t}}(h + (1 - \alpha_t)(u_t^{*}(y+h) - u_t^{*}(y))} + o(\norm{h}_2),
\end{align*}
where $\mcal{H}_{t+1}(y) \coloneqq \E\bracket{\nabla^2 V_{t+1}^{*}\left(\frac{1}{\sqrt{\alpha_t}}\parenthesis{y + (1 - \alpha_t)u_t^*(y)} + \sigma_t W_t\right)}  $.
Re-arranging the terms leads to
\begin{align*}
    u_t^*(y+h) - u_t^*(y) = \parenthesis{I_d - \frac{\sigma_t^2}{\beta_t}\mcal{H}_{t+1}(y)}^{-1}\parenthesis{\nabla s_t^{\rm pre}(y) + \frac{\sigma_t^2}{(1 - \alpha_t)\beta_t}\mcal{H}_{t+1}(y)}h + o(\norm{h}_2),
\end{align*}
which proves that $u_t^*(y)$ is differentiable for any $y\in \R^d$ and its derivative is given by 
\begin{align}\label{eq:grad u*-re}
    \nabla u_t^*(y) = \parenthesis{I_d - \frac{\sigma_t^2}{\beta_t}\mcal{H}_{t+1}(y)}^{-1}\parenthesis{\nabla s_t^{\rm pre}(y) + \frac{\sigma_t^2}{(1 - \alpha_t)\beta_t}\mcal{H}_{t+1}(y)}.
\end{align}

\underline{Step 3: Lipschitz condition of $  \nabla u_{t}^{*}$.} Next, we show that $\nabla u_t^*$ is Lipschitz. Taking the derivative of both sides of~\eqref{eq:u*(y)-re},
\begin{align}
    \nabla u_{t}^{*}(y) = \nabla s_{t}^{\rm pre}(y) + \frac{\sqrt{\alpha_{t}}\sigma_{t}^{2}}{(1 - \alpha_{t})\beta_t}\cdot\frac{\partial}{\partial y} \E\bracket{\nabla {V}^*_{t+1}\left(\frac{1}{\sqrt{\alpha_t}}\parenthesis{y + (1 - \alpha_t)u_t^{*}(y)} + \sigma_t W_t\right)}. \label{eq:u-diff}
\end{align}
For any $ z \in \R^{d} $, the integration by parts formula implies that
\begin{align}\label{eq:magic-1}
    \frac{\partial}{\partial z}\E\bracket{\nabla {V}^*_{t+1}\left(z + \sigma_t W_t\right)} = \E\bracket{\nabla^{2} {V}^*_{t+1}\left(z + \sigma_t W_t\right)} = \E\bracket{\nabla {V}^*_{t+1}\left(z + \sigma_t W_t\right)\frac{W_t\top}{\sigma_t}}.
\end{align}
Substituting $ z = \frac{1}{\sqrt{\alpha_t}}\parenthesis{y + (1 - \alpha_t)u_t^{*}(y)} $ and applying the chain rule, we obtain
\begin{align*}
     \frac{\partial}{\partial y}\E\bracket{\nabla {V}^*_{t+1}\left(\frac{1}{\sqrt{\alpha_t}}\parenthesis{y + (1 - \alpha_t)u_t^{*}(y)} + \sigma_t W_t\right)}  = \mcal{W}_{t+1}(y)\mcal{U}_t(y),
\end{align*}
where $\mcal{W}_{t+1}(y)  \coloneqq  \E\bracket{\nabla {V}^*_{t+1}\left(\frac{1}{\sqrt{\alpha_t}}\parenthesis{y + (1 - \alpha_t)u_t^{*}(y)} + \sigma_t W_t\right)\frac{W_t^\top}{\sigma_t}}$ and $\mathcal{U}_t(y) \coloneqq \frac{1}{\sqrt{\alpha_t}}\parenthesis{I_d + (1 - \alpha_t) \nabla u_t^*(y)}$. It follows from \eqref{eq:u-diff} that
\begin{align}
    \nabla u_{t}^{*}(y)  & = \nabla s_{t}^{\rm pre}(y)  + \frac{\sqrt{\alpha_{t}}\sigma_{t}^{2}}{(1 - \alpha_{t})\beta_t} \mcal{W}_{t+1}(y)\mcal{U}_t(y). \label{eq:u-diff-1}
\end{align}
For any $y_1$ and $y_2$ in $\R^d$, Eq.~\eqref{eq:u-diff-1} implies that %
\begin{align}
\nabla u_t^{*} (y_1) - \nabla u_t^{*} (y_2)  & = \nabla s_{t}^{\rm pre}(y_{1}) - \nabla s_{t}^{\rm pre}(y_{2})  +  \frac{\sqrt{\alpha_{t}}\sigma_{t}^{2}}{(1 - \alpha_{t})\beta_t} \big(\mcal{W}_{t+1}(y_1)\mcal{U}_t(y_1)  - \mcal{W}_{t+1}(y_2)\mcal{U}_t(y_2)\big). \label{eq:u-lip-1}
\end{align}
Note that for any $ y \in \R^d$ we have
\begin{align}
    & \norm{\mcal{W}_{t+1}(y)}_{2} = \norm{\mcal{H}_{t+1}(y)}_2 \leq L_{1, t+1}^{V^*}, \label{eq:upper-bound-magic-1} \\ 
    & \norm{\mcal{U}_t(y)}_2 \leq \frac{1}{\sqrt{\alpha_{t}}}\parenthesis{1 + (1 - \alpha_{t})L_{0,t}^{u^{*}}}, \label{eq:upper-bound-nabla-u-n-1}
\end{align}
where \eqref{eq:upper-bound-magic-1} holds by applying the identity \eqref{eq:magic-1}, and \eqref{eq:upper-bound-nabla-u-n-1} is a consequence of the Lipschitz condition of $ u_{t}^{*} $.
Moreover, for any $ y_{1} $ and $ y_{2} $ in $\R^d$, the Lipschitz conditions of $ \nabla {V}^*_{t+1} $ and~$ u_{t}^{*} $ imply that
\begin{align}
     \norm{\mcal{W}_{t+1}(y_1) - \mcal{W}_{t+1}(y_2)}_2 \nonumber &  \leq \frac{\E\bracket{\norm{W_{t}}_2}}{\sigma_{t}}L_{1, t+1}^{{V}^*}\frac{1}{\sqrt{\alpha_{t}}}\parenthesis{\norm{y_{1} - y_{2}}_2 + (1 - \alpha_{t})L_{0, t}^{u^{*}}\norm{y_{1} - y_{2}}_2} \nonumber \\ & = \frac{\E\bracket{\norm{W_{t}}_2}}{\sqrt{\alpha_{t}}\sigma_{t}}L_{1, t+1}^{V^{*}}\parenthesis{1 + (1 - \alpha_{t})L_{0, t}^{u^{*}}}\norm{y_{1} - y_{2}}_2. \label{eq:Lip-nabla-V-w-1}
\end{align}
Furthermore, the definition of $\mcal{U}_t$ implies
\begin{align}
    \norm{\mcal{U}_t(y_1) - \mcal{U}_t(y_2)}_2 & = \frac{1 - \alpha_{t}}{\sqrt{\alpha_{t}}}\norm{\nabla u_{t}^{*}(y_{1}) - \nabla u_{t}^{*}(y_{2})}_2 \label{eq:Lip-nabla-u-n-1}.
\end{align}
Combining \eqref{eq:u-lip-1}--\eqref{eq:Lip-nabla-u-n-1} together, we have
\begin{align*}
    \norm{\nabla u_{t}^{*}(y_{1}) - \nabla u_{t}^{*}(y_{2})}_2 & \leq \norm{\nabla s_{t}^{\rm pre}(y_{1}) - \nabla s_{t}^{\rm pre}(y_{2})}_2  +  \frac{\sqrt{\alpha_{t}}\sigma_{t}^{2}}{(1 - \alpha_{t})\beta_t} \norm{\mcal{W}_{t+1}(y_1)  - \mcal{W}_{t+1}(y_2)}_2\norm{\mcal{U}_t(y_1)}_2 \\ & \qquad + \frac{\sqrt{\alpha_{t}}\sigma_{t}^{2}}{(1 - \alpha_{t})\beta_t} \norm{\mcal{W}_{t+1}(y_2)}_2\norm{\mcal{U}_t(y_1) - \mcal{U}_t(y_2)}_2 
    \\ & \leq L_{1, t}^{s}\norm{y_{1} - y_{2}}_2 + \frac{\sqrt{\alpha_{t}}\sigma_{t}^{2}}{(1 - \alpha_{t})\beta_t}\bigg(L_{1, t+1}^{V^{*}}\frac{1 - \alpha_{t}}{\sqrt{\alpha_{t}}} \norm{\nabla u_{t}^{*}(y_{1}) - \nabla u_{t}^{*}(y_{2})}_2 \\ & \qquad + \frac{1}{\alpha_{t}}\parenthesis{1 + (1 - \alpha_{t})L_{0,t}^{u^{*}}}^{2}\frac{\E\bracket{\norm{W_{t}}_2}}{\sigma_{t}}L_{1, t+1}^{V^{*}}\norm{y_{1} - y_{2}}_2\bigg).
\end{align*}
Since $ \frac{\sigma_{t}^{2}}{\beta_{t}}L_{1, t+1}^{V^{*}} \leq 1 - \lambda_{t} $, we deduce that
\begin{align*}
    \norm{\nabla u_{t}^{*}(y_{1}) - \nabla u_{t}^{*}(y_{2})}_2 & \leq \lambda_{t}^{-1}\parenthesis{L_{1, t}^{s} +  \frac{\E\bracket{\norm{W_{t}}_2}(1 - \lambda_t)}{(1 - \alpha_t)\sqrt{\alpha_{t}}\sigma_{t}}\parenthesis{1 + (1 - \alpha_{t})L_{0,t}^{u^{*}}}^{2}}\norm{y_1 - y_2}_2 \\ &  = L_{1, t}^{u^*}\norm{y_1 - y_2}_2,
\end{align*}
where we recall $L_{1, t}^{u^*}$ defined in \eqref{eq:Lu1star}.

\underline{Step 4: Lipschitz conditions of $ V_{t}^{*} $ and $ \nabla V_{t}^{*}$.} Finally, we turn to prove the Lipschitz and gradient Lipschitz conditions of $V_t^*$. Plugging $u_t^*$ into the Bellman equation \eqref{eq:bellman-V*}, we have
\begin{align}\label{eq:bellman-u*}
 V_t^*(y) & = \E\bracket{V^{*}_{t+1}\left(\frac{1}{\sqrt{\alpha_t}}\parenthesis{y + (1 - \alpha_t)u_t^*(y)} + \sigma_t W_t\right)} - \beta_t \frac{(1 - \alpha_{t})^{2}}{2\alpha_{t}\sigma_{t}^{2}}\norm{u_t^*(y) - s^{\rm pre}_{t}(y)}_{2}^{2}.
\end{align}
Since $V_{t+1}^*$ and $s_t^{\rm pre}$ are differentiable with Lipschitz gradients and $u_t^*$ is differentiable, we know that $V_{t}^{*}$ is differentiable and 
\begin{align*}
 \nabla V_t^*(y) &= \frac{\partial}{\partial y} \E\Bigg[V^{*}_{t+1}\left(\frac{1}{\sqrt{\alpha_t}}\parenthesis{y + (1 - \alpha_t)u_t^*(y)} + \sigma_t W_t\right)- \beta_t \frac{(1 - \alpha_{t})^{2}}{2\alpha_{t}\sigma_{t}^{2}}\norm{u_t^*(y) - s^{\rm pre}_{t}(y)}_{2}^{2}\Bigg]  \\
 &= \frac{1}{\sqrt{\alpha_t}}\parenthesis{I_d 
 + (1 - \alpha_t) \nabla u_t^*(y)}^\top\E\bracket{\nabla V_{t+1}^{*}\left(\frac{1}{\sqrt{\alpha_t}}\parenthesis{y + (1 - \alpha_t)u_t^*(y)} + \sigma_t W_t\right)}\\
 & \quad - \beta_t \frac{(1 - \alpha_{t})^{2}}{\alpha_{t}\sigma_{t}^{2}}(\nabla u_t^*(y) - \nabla s^{\rm pre}_{t}(y))^\top(u_t^* - s^{\rm pre}_{t}(y)).
\end{align*}
Define $\mathcal{G}_{t+1}(y) \coloneqq \E\bracket{\nabla V_{t+1}^{*}\left(\frac{1}{\sqrt{\alpha_t}}\parenthesis{y + (1 - \alpha_t)u_t^*(y)} + \sigma_t W_t\right)} $. It follows that 
\begin{eqnarray} 
 \nabla V_t^*(y) 
 &=&\frac{1}{\sqrt{\alpha_t}}\parenthesis{I_d 
 + (1 - \alpha_t) \nabla u_t^*(y)}^\top\mathcal{G}_{t+1}(y)- \beta_t \frac{(1 - \alpha_{t})^{2}}{\alpha_{t}\sigma_{t}^{2}}(\nabla u_t^*(y) - \nabla s^{\rm pre}_{t}(y))^\top\left(  \frac{\sqrt{\alpha_{t}}\sigma_{t}^{2}}{(1 - \alpha_{t})\beta_t}\mathcal{G}_{t+1}(y)\right)\nonumber\\
 &=&\frac{1}{\sqrt{\alpha_t}}\parenthesis{I_d 
 + (1 - \alpha_t) \nabla s_t^{\rm pre}(y)}^\top\mathcal{G}_{t+1}(y) \label{eq:grad-V*},
\end{eqnarray}
where \eqref{eq:u*(y)-re} is applied to obtain the first equality. Next, we use \eqref{eq:grad-V*} to establish the Lipschitz conditions of $ V_{t}^{*} $ and $\nabla V_t^{*}$. The Lipschitz condition of $\nabla V_{t+1}^*$ implies $\norm{\mcal{G}_{t+1}(y)}_2 \leq L_{0, t+1}^{V^*}$ for all $y \in \R^d$. Based on \eqref{eq:grad-V*}, we have
\begin{align*}
\norm{\nabla V_{t}^{*}(y)}_{2} \leq \frac{1}{\sqrt{\alpha_t}}(1 + (1 - \alpha_t)L_{0, t}^{s})
L_{0, t+1}^{V^*} = L_{0, t}^{V^*},
\end{align*}
which proves the Lipschitz condition of $ V_t^* $. Next, we prove the gradient Lipschitz condition of $ V_t^* $. For ease of exposition, we denote $\mathcal{S}_t(y) \coloneqq \frac{1}{\sqrt{\alpha_t}}\parenthesis{I_d 
+ (1 - \alpha_t) \nabla s_t^{\rm pre}(y)}$. We now use \eqref{eq:grad-V*} to show the Lipschitz condition of $\nabla V_t^*$. We first note that $ \nabla \mcal{G}_{t+1}(y) = \mcal{H}_{t+1}(y)\mcal{U}_t(y) $ is well-defined at every $y \in \R^d$, and thus
\begin{align*}
\norm{\nabla \mcal{G}_{t+1}(y)}_{2} \leq \norm{\mcal{H}_{t+1}(y)}_{2}\norm{\mcal{U}_t(y)}_{2} \leq \frac{1}{\sqrt{\alpha_{t}}}(1 + (1 - \alpha_{t})L_{0, t}^{u^*})L_{1, t+1}^{V^*}.
\end{align*}
 With the Lipschitz condition of $ \mcal{G}_{t+1} $ in hand,  for any $y_1$ and $y_2$ in $\R^d$, we have that
\begin{align*}
\norm{\nabla V_{t}^{*}(y_{1}) - \nabla V_{t}^{*}(y_{2})}_{2} & \leq \norm{\mcal{S}_t(y_{1})^{\top}\mcal{G}_{t+1}(y_{1}) - \mcal{S}_t(y_{2})^{\top}\mcal{G}_{t+1}(y_{2})}_2 \\ & \leq \norm{\mcal{S}_t(y_{1})^{\top}\parenthesis{\mcal{G}_{t+1}(y_{1}) - \mcal{G}_{t+1}(y_{2})}}_2 + \norm{\parenthesis{\mcal{S}_t(y_{1}) - \mcal{S}_t(y_{2})}^{\top}\mcal{G}_{t+1}(y_{2})}_{2} \\ & \leq \norm{\mcal{S}_t(y_{1})}_{2}\norm{\mcal{G}_{t+1}(y_{1}) - \mcal{G}_{t+1}(y_{2})}_{2} + \norm{\mcal{S}_t(y_{1}) - \mcal{S}_t(y_{2})}_{2}\norm{\mcal{G}_{t+1}(y_{2})}_{2} \\ & \leq \frac{1}{\alpha_{t}}(1 + (1 - \alpha_{t})L_{0, t}^s)(1 + (1 - \alpha_{t})L_{0, t}^{u^*})L_{1,t+1}^{V^*}\norm{y_{1} - y_{2}}_{2} \\ & \quad + \frac{1 - \alpha_{t}}{\sqrt{\alpha_{t}}}L_{1,t}^s L_{0, t+1}^{V^*}\norm{y_{1} - y_{2}}_{2},
\end{align*}
where the last equation uses the Lipschitz conditions of $s_t^{\rm pre}$, $\nabla s_t^{\rm pre}$, $V_{t+1}^*$ and $\nabla V_{t+1}^*$. Consequently, we have
\begin{align*}
    \norm{\nabla V_{t}^{*}(y_{1}) - \nabla V_{t}^{*}(y_{2})}_{2} \leq L_{1, t}^{V^*}\norm{y_1 - y_2}_2,
\end{align*}
where we recall $L_{1, t}^{V^*}$ defined in \eqref{eq:LV1star}. In other words, $L_{1, t}^{V^*}$ is indeed the Lipschitz constant of $\nabla V_t^*$. This completes the proof.

\end{proof}

\section{Algorithm development and convergence analysis}
\label{sec:algorithm}
In this section, we propose an iterative algorithm for fine-tuning and provide non-asymptotic analysis of its  convergence. Assuming that the expected reward function  $r(\cdot)$ is known, we develop an iterative algorithm to approximate the optimal policy in high-dimensional settings, suitable for practical implementation. In practice,  $r(\cdot)$ is  approximated using a small sample set with human feedback, which then serves as the basis for fine-tuning. The unknown  $r(\cdot)$ case is left as a topic for future investigation.

\subsection{Proposed algorithm}
Recall that the optimal control $u_t^*$ satisfies \eqref{eq:u*(y)}. Motivated by this observation, we propose the following iterative algorithm for fine-tuning and computing the optimal control $u_t^*$. 
\begin{algorithm}[H]
    \caption{Policy Iteration for Fine-Tuning (PI-FT)}
    \begin{algorithmic}[1] \label{algo:update-u}
    \STATE {\bf {Input:}} Expected reward function $r(\cdot)$, pre-trained model $\curly{s_t^{\rm pre}}_{t = 0}^{T} $, and number of iterations $m_t$ at each timestamp $t$ $(0\leq t \leq T-1)$.
    \STATE Set $V_T^{(m_T)}(y) = r(y)$ for all $y \in \R^d$. 
    \FOR{$t = T - 1, \dots, 0$}
    \STATE Set $u_{t}^{(0)}(y) = s_t^{\rm pre}(y)$.
    \FOR{$m = 1, \dots, m_t - 1$} 
    \STATE  Update the control using
    \begin{align}
        u_{t}^{(m+1)}(y) = s_{t}^{\rm pre}(y) + \frac{\sqrt{\alpha_{t}}\sigma_{t}^{2}}{(1 - \alpha_{t})\beta_t}\E\bracket{\nabla {V}_{t+1}^{(m_{t+1})}\left(\frac{1}{\sqrt{\alpha_t}}\parenthesis{y + (1 - \alpha_t)u_t^{(m)}(y)} + \sigma_t W_t\right)}. \label{eq:update-u_t}
    \end{align}
    \ENDFOR 
    \STATE Compute the value function $V_t^{(m_t)}$ using 
    \begin{align}
        V_{t}^{(m_t)}(y) = \mathbb{E}\left[{V}^{(m_{t+1})}_{t+1}\left(\frac{1}{\sqrt{\alpha_t}}\parenthesis{y + (1 - \alpha_t)u_t^{(m_t)}(y)} + \sigma_t W_t\right) -\beta_t \frac{(1-\alpha_t)^2}{2\alpha_t \sigma_t^2}\|u_t^{(m_t)}(y)-s_t^{\rm pre}(y)\|^2\right]. \label{eq:eval-v_t}
    \end{align}
    \ENDFOR 
    \RETURN $ \curly{u_t^{(m_t)}}_{t = 0}^{T-1} $ and $ \curly{V_t^{(m_t)}}_{t = 0}^{T} $.
    \end{algorithmic}
\end{algorithm}

{Algorithm \ref{algo:update-u} consists of two nested loops. The inner loop updates the control at each iteration $m$ and the outer loop computes the value function at time $t$.} Given the value function $V_{t+1}^{(m_{t+1})}$ at time step $t+1$, we update $u_t^{(m)}$ using \eqref{eq:update-u_t} and then evaluate the associated value function using \eqref{eq:eval-v_t}. Note that \eqref{eq:update-u_t} can be viewed as an approximation of the following update rule:
\begin{align}
        u_{t}^{(m+1)}(y) = s_{t}^{\rm pre}(y) + \frac{\sqrt{\alpha_{t}}\sigma_{t}^{2}}{(1 - \alpha_{t})\beta_t}\E\bracket{\nabla {V}_{t+1}^{*}\left(\frac{1}{\sqrt{\alpha_t}}\parenthesis{y + (1 - \alpha_t)u_t^{(m)}(y)} + \sigma_t W_t\right)}, \label{eq:update-u_t-*}
\end{align}
and \eqref{eq:eval-v_t} can be seen as an approximation of the Bellman equation \eqref{eq:bellman-V*}, where we replace the unknown $V_{t+1}^*$ with $V_{t+1}^{(m_{t+1})}$.
Direct calculation shows that the fixed-point update rule \eqref{eq:update-u_t-*} guarantees the convergence of $\curly{u_t^{(m)}}_{m\geq 0}$ to $u_t^*$; see \eqref{eq:contraction}. However, in practice we cannot access $V_{t+1}^*$. Hence, Eq.~\eqref{eq:update-u_t} can be viewed as a proxy to \eqref{eq:update-u_t-*} as long as we can control the error $\norm{\nabla V_t^{(m_t)}(y) - \nabla V_t^*(y)}_2$ for all $t$. With $u_t^{(m_t)}$ and $V_{t+1}^{(m_{t+1})}$, we use \eqref{eq:eval-v_t} to  calculate $V_{t}^{(m_t)}$, an estimate of the optimal value function $V_t^*$.

\subsection{Convergence analysis}
In this section, we analyze the convergence of Algorithm \ref{algo:update-u}. For a fixed $\lambda_t \in (0, 1)$, define a series of constants recursively as follows. Let $L_{0, T}^{\bar{V}} = L_{0}^r$ and $L_{1, T}^{\bar{V}} = L_{1}^r$. For each $t < T$, set 
\begin{align}
        L_{0,t}^{\bar{V}} &= \frac{1}{\sqrt{\alpha_t}}(1 + (1 - \alpha_t)L_{0, t}^{s})
L_{0, t+1}^{\bar{V}} + \frac{L_{0, t+1}^{\bar{V}}}{\sqrt{\alpha_{t}}}\parenthesis{1 + (1 - \alpha_{t})L_{0,t}^{u^{*}}}(1- \lambda_t), \label{eq:L0t-v-bar}  \\ \text{and} \;\; L_{1,t}^{\bar{V}} & = \frac{1}{\alpha_{t}}(1 + (1 - \alpha_{t})L_{0, t}^s)(1 + (1 - \alpha_{t})L_{0, t}^{u^*})L_{1,t+1}^{\bar{V}} + \frac{1 - \alpha_{t}}{\sqrt{\alpha_{t}}}L_{1,t}^s L_{0, t+1}^{\bar{V}} \nonumber \\ & \quad + \max_{m \geq 1}\curly{\bigg(\frac{1 - \alpha_{t}}{\sqrt{\alpha_{t}}} L_{1, t}^{u^{*}}  + (m+2)\parenthesis{1 + (1 - \alpha_{t})L_{0,t}^{u^{*}}}^{2}\frac{\E\bracket{\norm{W_{t}}_2}}{\alpha_t\sigma_{t}}\bigg)L_{0, t+1}^{\bar{V}}( 1- \lambda_t)^{m+1}}. \label{eq:L1t-v-bar}
\end{align}
Here, the constant $L_{1,t}^{\bar{V}}$ is well-defined as the maximum must be achieved for some $m < \infty$ due to the exponential decay of $(1 - \lambda_t)^{m+1}$. 
Similar to the constants defined in \eqref{eq:LV0star}--\eqref{eq:Lu1star}, the above constants only depend on the system parameters $\alpha_t$, $L_0^r$, $L_1^r$, $L_0^s$ and $L_1^s$ as well as the hyper-parameter $\lambda_t$. Later, we will show that the constants defined in \eqref{eq:L0t-v-bar} and \eqref{eq:L1t-v-bar} indeed serve as the Lipschitz coefficients of $\curly{V_t^{(m)}}_{t=0}^T$ universal over all $m \geq 0$. 
The main result of this section is stated as follows. 
\begin{Theorem}\label{thm:convergence}
Suppose Assumptions \ref{ass:smooth r} and \ref{ass:smooth s} hold. 
    For each $t < T$, choose $\beta_t$ such that $ 1 - \frac{\sigma_t^2}{\beta_t}L_{1, t+1}^{\bar{V}} \geq \lambda_{t} > 0  $. Let $\curly{u_t^{(m_t)}}_{t=0}^{T-1}$ and $\curly{V_t^{(m_t)}}_{t=0}^T$ be the output of Algorithm \ref{algo:update-u}. Then it holds that
\begin{eqnarray}
     \norm{u_t^{(m_t)}(y) - u_t^*(y)}_2  \leq & \parenthesis{(1 - \lambda_t)^{m_t}  L_{0,t+1}^{V^*} + \lambda_t^{-1}E_{t+1}}  \frac{\sqrt{\alpha_{t}}(1 - \lambda_t)}{(1 - \alpha_{t})L_{1, t+1}^{V^*}}, 
\end{eqnarray}
where
\begin{eqnarray}
        E_{t} \coloneqq \norm{\nabla V_t^{(m_t)}(y) - \nabla V_t^{*}(y)}_2  &\leq & \sum_{k = t}^{T - 1} \parenthesis{\prod_{\ell = t}^{k - 1}C_{1, \ell}} C_{2, k} (1 - \lambda_k)^{m_k + 1}, \label{eq:Et}
\end{eqnarray}
and 
\begin{align*}
    C_{1,t} =  \frac{1 + (1 - \alpha_t)L_{0, t}^{s}}{\lambda_t\sqrt{\alpha_t}} , \;\; \text{and} \;\; C_{2, t} = \frac{\parenthesis{1 + (1 - \alpha_t)L_{0, t}^{u^*}}}{\sqrt{\alpha_t}}L_{0, t+1}^{\bar{V}} +  \frac{\parenthesis{1 + (1 - \alpha_t)L_{0, t}^{s}}}{\sqrt{\alpha_t}}L_{0, t+1}^{V^*}.
\end{align*}
\end{Theorem}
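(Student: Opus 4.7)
The plan is to proceed by backward induction on $t$, with base case at $t=T$ trivially satisfied since $V_T^{(m_T)} = r = V_T^*$, so $E_T = 0$. Before starting the induction, I first establish a universal regularity result by a coupled induction argument analogous to Theorem \ref{thm:well-posed}: uniformly in $m$, each $u_t^{(m)}$ is $L_{0,t}^{u^*}$-Lipschitz, each $V_t^{(m)}$ is $L_{0,t}^{\bar V}$-Lipschitz, and each $\nabla V_t^{(m)}$ is $L_{1,t}^{\bar V}$-Lipschitz. The $\max_{m\ge 1}$ in the definition of $L_{1,t}^{\bar V}$ accommodates the potentially $m$-dependent Lipschitz constant of $\nabla u_t^{(m)}$; the geometric $(1-\lambda_t)^{m+1}$ factor ensures this maximum is attained for some finite $m$. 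The monotonicities $L_{0,t}^{V^*}\le L_{0,t}^{\bar V}$ and $L_{1,t}^{V^*}\le L_{1,t}^{\bar V}$ follow by direct comparison of the recursive definitions.

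For the bound on $\|u_t^{(m_t)} - u_t^*\|_2$, I subtract the fixed-point identity \eqref{eq:u*(y)} from the update \eqref{eq:update-u_t} and add/subtract $\kappa_t \mathbb{E}[\nabla V_{t+1}^*(\Phi_t(y, u_t^{(m)}(y)) + \sigma_t W_t)]$, with $\kappa_t \coloneqq \frac{\sqrt{\alpha_t}\sigma_t^2}{(1-\alpha_t)\beta_t}$ and $\Phi_t(y,u) \coloneqq (y + (1-\alpha_t)u)/\sqrt{\alpha_t}$. The Lipschitz bound on $\nabla V_{t+1}^*$ combined with the choice of $\beta_t$ (using $L_{1,t+1}^{V^*} \le L_{1,t+1}^{\bar V}$) gives the one-step recursion $\|u_t^{(m+1)}-u_t^*\|_2 \le (1-\lambda_t)\|u_t^{(m)}-u_t^*\|_2 + \kappa_t E_{t+1}$. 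Unrolling $m_t$ steps with $u_t^{(0)}=s_t^{\rm pre}$ and initial error $\|s_t^{\rm pre}-u_t^*\|_2 \le \kappa_t L_{0,t+1}^{V^*}$ (derived from \eqref{eq:u*(y)}), summing the geometric series, and absorbing $\kappa_t \le \frac{\sqrt{\alpha_t}(1-\lambda_t)}{(1-\alpha_t)L_{1,t+1}^{V^*}}$ yields the first claim.

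For the recursion on $E_t$, I differentiate \eqref{eq:eval-v_t} and exploit the identity $u_t^{(m_t)}-s_t^{\rm pre} = \kappa_t \tilde{\mathcal{G}}_{t+1}^{(m_t-1)}$ from the last update step, where $\tilde{\mathcal{G}}_{t+1}^{(m)}(y) \coloneqq \mathbb{E}[\nabla V_{t+1}^{(m_{t+1})}(\Phi_t(y,u_t^{(m)}(y)) + \sigma_t W_t)]$. After algebraic collection the $\beta_t$-dependent quadratic term cancels against the $\nabla u_t^{(m_t)}$ contribution, leaving
\begin{align*}
\nabla V_t^{(m_t)}(y) = \frac{(I_d + (1-\alpha_t)\nabla s_t^{\rm pre}(y))^\top \tilde{\mathcal{G}}_{t+1}^{(m_t-1)}(y)}{\sqrt{\alpha_t}} + \frac{(I_d + (1-\alpha_t)\nabla u_t^{(m_t)}(y))^\top [\tilde{\mathcal{G}}_{t+1}^{(m_t)} - \tilde{\mathcal{G}}_{t+1}^{(m_t-1)}](y)}{\sqrt{\alpha_t}}.
\end{align*}
Subtracting \eqref{eq:grad-V*} splits the error into an evaluation piece bounded by $\frac{1+(1-\alpha_t)L_{0,t}^s}{\sqrt{\alpha_t}}[E_{t+1} + L_{1,t+1}^{V^*}\frac{1-\alpha_t}{\sqrt{\alpha_t}}\|u_t^{(m_t-1)}-u_t^*\|_2]$ (adding and subtracting $\mathbb{E}[\nabla V_{t+1}^*(\Phi_t(y,u_t^{(m_t-1)}) + \sigma_t W_t)]$) and a residual piece. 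Substituting the first-claim bound at index $m_t-1$ absorbs the $\|u_t^{(m_t-1)}-u_t^*\|$ term into $\frac{1-\lambda_t}{\lambda_t}E_{t+1}$ plus a geometric decay contribution, producing the $C_{1,t}E_{t+1}$ factor. The residual piece equals $\kappa_t^{-1}\|u_t^{(m_t+1)}-u_t^{(m_t)}\|_2$ in magnitude, which by $m_t$-fold contraction of the $u$-iteration starting from $\|u_t^{(1)}-u_t^{(0)}\|_2 \le \kappa_t L_{0,t+1}^{\bar V}$ is at most $(1-\lambda_t)^{m_t}L_{0,t+1}^{\bar V}$. Collecting the two decay contributions gives $E_t \le C_{1,t}E_{t+1} + C_{2,t}(1-\lambda_t)^{m_t+1}$, which unrolls from $E_T = 0$ to the stated closed-form sum.

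The hard part is the residual term $\tilde{\mathcal{G}}_{t+1}^{(m_t)} - \tilde{\mathcal{G}}_{t+1}^{(m_t-1)}$, which has no analogue in the formula for $\nabla V_t^*$: because $u_t^*$ is an exact fixed point of its defining operator (whereas $u_t^{(m_t)}$ is not), the clean cancellation of Step 4 in Theorem \ref{thm:well-posed}'s proof breaks, and a residual survives. Controlling it forces one to use both the contraction rate $1-\lambda_t$ of the inner loop \emph{and} the universal bound $L_{0,t+1}^{\bar V}$ on $\nabla V_{t+1}^{(m_{t+1})}$ (rather than the tighter $L_{0,t+1}^{V^*}$ on $\nabla V_{t+1}^*$), which is precisely why the barred family of constants must coexist with the starred family and why the coupled-induction regularity result is essential to the whole argument.
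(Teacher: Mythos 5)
Your overall architecture—establish a barred-constant universal regularity result, derive the contraction recursion for $\|u_t^{(m)}-u_t^*\|_2$ with the $E_{t+1}$ perturbation, and a one-step recursion for $E_t$ that unrolls by backward induction—mirrors the paper's decomposition into Lemmas \ref{lemma:regularity}, \ref{lemma:un-u*-error}, and \ref{lemma:nabla-Vn-V*-error}. However, there is a concrete arithmetic gap in your treatment of the $E_t$ recursion, stemming from the algebraic split you chose for $\nabla V_t^{(m_t)}$.

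You write the identity as
\begin{align*}
\nabla V_t^{(m_t)}(y) = \tfrac{1}{\sqrt{\alpha_t}}\bigl(I_d + (1-\alpha_t)\nabla s_t^{\rm pre}(y)\bigr)^\top \tilde{\mathcal{G}}_{t+1}^{(m_t-1)}(y) + \tfrac{1}{\sqrt{\alpha_t}}\bigl(I_d + (1-\alpha_t)\nabla u_t^{(m_t)}(y)\bigr)^\top \bigl(\tilde{\mathcal{G}}_{t+1}^{(m_t)}-\tilde{\mathcal{G}}_{t+1}^{(m_t-1)}\bigr)(y),
\end{align*}
which is algebraically correct, but the second (residual) term carries an $\mathcal{O}(1)$ coefficient $\frac{1}{\sqrt{\alpha_t}}(I_d+(1-\alpha_t)\nabla u_t^{(m_t)})$. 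Bounding $\|\tilde{\mathcal{G}}_{t+1}^{(m_t)}-\tilde{\mathcal{G}}_{t+1}^{(m_t-1)}\|_2 \le (1-\lambda_t)^{m_t}L_{0,t+1}^{\bar V}$ therefore yields a residual contribution of order $(1-\lambda_t)^{m_t}$. Your evaluation piece, after substituting the $u$-error bound at index $m_t-1$ and paying the $L_{1,t+1}^{V^*}\frac{1-\alpha_t}{\sqrt{\alpha_t}}\cdot\frac{\sqrt{\alpha_t}(1-\lambda_t)}{(1-\alpha_t)L_{1,t+1}^{V^*}}=(1-\lambda_t)$ conversion factor, also produces a geometric decay of order $(1-\lambda_t)^{m_t}$. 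Adding these gives $E_t\le C_{1,t}E_{t+1}+C_{2,t}(1-\lambda_t)^{m_t}$, not $(1-\lambda_t)^{m_t+1}$ as you assert—the claimed exponent is one too high. The paper's split in \eqref{eq:grad-v-m} avoids this loss by pairing $\nabla s_t^{\rm pre}$ with the index-$m_t$ term $\widehat{\mathcal{G}}_{t+1}^{(m_t)}$ (so the $u$-error Lemma at index $m_t$ delivers $(1-\lambda_t)^{m_t+1}$ after conversion), and by pairing the $\widehat{\mathcal{G}}$-difference with the small matrix $\frac{1-\alpha_t}{\sqrt{\alpha_t}}(\nabla u_t^{(m_t)}-\nabla s_t^{\rm pre})=\frac{\sigma_t^2}{\beta_t}\widehat{\mathcal{W}}_{t+1}^{(m_t-1)}\mathcal{U}_t^{(m_t-1)}$, which is $\mathcal{O}(\sigma_t^2 L_{1,t+1}^{\widehat V}/\beta_t)\le(1-\lambda_t)$ small, supplying the extra power. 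With your split this cancellation is unavailable, and the best you can prove is the weaker $(1-\lambda_k)^{m_k}$ rate.

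Everything else—the base case $E_T=0$, the coupled-induction regularity claim, the contraction recursion for $u_t^{(m)}$ and its unrolling with $\|u_t^{(0)}-u_t^*\|_2\le\kappa_t L_{0,t+1}^{V^*}$, the identity $u_t^{(m_t)}-s_t^{\rm pre}=\kappa_t\tilde{\mathcal{G}}_{t+1}^{(m_t-1)}$ used to cancel the $\beta_t$ factor, and the final telescoping of $E_t$—matches the paper's argument and is sound.
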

   
   To obtain a simplified convergence rate from Theorem \ref{thm:convergence}, we set  $\lambda_t \equiv \lambda \coloneqq \min_{0\leq t\leq T-1} \lambda_t$ and $m_t \equiv M$ for all $t$. In this case, we have $E_t = \mcal{O}((1 - \lambda)^{M+1})$ and consequently $\norm{u_t^{(M)}(y) - u_t^*(y)}_2 = \mcal{O}((1 - \lambda)^{M})$. In other words, the control sequence $\curly{u_t^{(m)}}_{t=0}^{T-1}$ converges to the optimal control at a linear rate. Note that the parameter $\lambda$ determines the convergence rate. In particular, setting a larger 
 $ \lambda$ results in faster convergence. However, when 
$\lambda$ is large, Theorem \ref{thm:convergence} requires that the regularization parameter 
$\beta_t$ must also be chosen to be sufficiently large. Recall that $\beta_t$ controls the distance between the fine-tuned model and the pre-trained model, while a bigger $\lambda$ may result in an optimal control far away from $s_t^{\rm pre}$. Therefore, choosing an appropriate $\lambda$ (or equivalently $\beta_t$) is crucial to trade-off computational efficiency and closeness to the pre-trained model.

To prove Theorem \ref{thm:convergence}, we need a few intermediate results for one-step update rule. We begin with a lemma that characterizes the regularity of control and value functions after one update. Given a function $\widehat{V}_{t+1}:\R^d \to \R$, define the following update rule:
    \begin{align}
    u_{t}^{(m+1)}(y) = s_{t}^{\rm pre}(y) + \frac{\sqrt{\alpha_{t}}\sigma_{t}^{2}}{(1 - \alpha_{t})\beta_t}\E\bracket{\nabla \widehat{V}_{t+1}\left(\frac{1}{\sqrt{\alpha_t}}\parenthesis{y + (1 - \alpha_t)u_t^{(m)}(y)} + \sigma_t W_t\right)}, \label{eq:update-u-hat}
    \end{align} 
    with initialization $u_{t}^{(0)}(y) = s_{t}^{\rm pre}(y)$. Furthermore, let $ V_{t}^{(m)} $ be the value function induced by $ u^{(m)}_{t} $, i.e., 
    \begin{align}
        V_{t}^{(m)}(y) = \mathbb{E}\left[\widehat{V}_{t+1}\left(\frac{1}{\sqrt{\alpha_t}}\parenthesis{y + (1 - \alpha_t)u_t^{(m)}(y)} + \sigma_t W_t\right) -\beta_t \frac{(1-\alpha_t)^2}{2\alpha_t \sigma_t^2}\|u_t^{(m)}(y)-s_t^{\rm pre}(y)\|^2\right]. \label{eq:Vn-hat}
    \end{align}
    Later, we will choose $\widehat{V}_{t+1} = V_{t+1}^{(m_{t+1})}$ to analyze the convergence of Algorithm \ref{algo:update-u}. With \eqref{eq:update-u-hat} and \eqref{eq:Vn-hat}, we state the lemma as follows. 

\begin{Lemma}[One-step regularity and universal upper bound]\label{lemma:regularity} Suppose Assumptions \ref{ass:smooth r} and \ref{ass:smooth s} hold.
    Let $\widehat{V}_{t+1}$ be a function that is $ L_{0,t+1}^{\widehat{V}} $-Lipschitz and $ L_{1,t+1}^{\widehat{V}} $-gradient Lipschitz. Consider $\curly{u_t^{(m)}}_{m=0}^{m_t}$ and $\curly{V_t^{(m)}}_{m=0}^{m_t}$ defined in \eqref{eq:update-u-hat} and \eqref{eq:Vn-hat}. For $t < T$, choose $\beta_t$ such that $ 1 - \frac{\sigma_t^2}{\beta_t}L_{1, t+1}^{\widehat{V}} \geq \lambda_{t} > 0  $. Then it holds for every $m = 0, \dots, m_t$ that

\begin{enumerate}[(i)]
    \item    
         $u_{t}^{(m)}$  is  $L_{0, t}^{u^{(m)}}$-Lipschitz and   $L_{1, t}^{u^{(m)}}$-gradient Lipschitz, with coefficients $L_{0,t}^{u^{(m)}} $ and $L_{1,t}^{u^{(m)}}$ satisfying
    \begin{eqnarray}\label{eq:one_step_upp1}
        L_{0,t}^{u^{(m)}} \leq L_{0,t}^{u^{*}} \;\; \text{and}\;\; L_{1,t}^{u^{(m)}} \leq L_{1, t}^{u^*}.
    \end{eqnarray}
    \item  
        $V_{t}^{(m)}$  is  $L_{0, t}^{V^{(m)}}$-Lipschitz and  $L_{1, t}^{V^{(m)}}$-gradient Lipschitz, with coefficients $L_{0, t}^{V^{(m)}}$ and $L_{1, t}^{V^{(m)}}$ satisfying
    \begin{eqnarray}
        L_{0,t}^{V^{(m)}} &\leq& \frac{1}{\sqrt{\alpha_t}}(1 + (1 - \alpha_t)L_{0, t}^{s})
L_{0, t+1}^{\widehat{V}} + \frac{L_{0, t+1}^{\widehat{V}}}{\sqrt{\alpha_{t}}}\parenthesis{1 + (1 - \alpha_{t})L_{0,t}^{u^{*}}}(1- \lambda_t)^{m+1}, \label{eq:lip-0-vm}   \\ 
L_{1,t}^{V^{(m)}} & \leq& \frac{1}{\alpha_{t}}(1 + (1 - \alpha_{t})L_{0, t}^s)(1 + (1 - \alpha_{t})L_{0, t}^{u^*})L_{1,t+1}^{\widehat{V}} + \frac{1 - \alpha_{t}}{\sqrt{\alpha_{t}}}L_{1,t}^s L_{0, t+1}^{\widehat{V}} \nonumber \\ 
& \quad& + \bigg(\frac{1 - \alpha_{t}}{\sqrt{\alpha_{t}}} L_{1, t}^{u^{*}}  + (m+2)\parenthesis{1 + (1 - \alpha_{t})L_{0,t}^{u^{*}}}^{2}\frac{\E\bracket{\norm{W_{t}}_2}}{\alpha_t\sigma_{t}}\bigg)L_{0, t+1}^{\widehat{V}}( 1- \lambda_t)^{m+1}. \label{eq:lip-1-vm}
    \end{eqnarray}

\end{enumerate}

\end{Lemma}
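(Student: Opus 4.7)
The plan is an induction on $m$ (at fixed $t$) that proves parts (i) and (ii) together, driven by the contraction property $\sigma_t^2 L_{1,t+1}^{\widehat V}/\beta_t \leq 1-\lambda_t$ enforced by the choice of $\beta_t$. Throughout, $\widehat V_{t+1}$ plays the role that $V^*_{t+1}$ played in the proof of Theorem \ref{thm:well-posed}, with the key new ingredient being that the first-order optimality condition held by $u_t^*$ is only satisfied by $u_t^{(m)}$ up to an $m$-indexed error.

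For part (i), the base case is immediate: $u_t^{(0)}=s_t^{\rm pre}$, and the definitions \eqref{eq:Lu0star}--\eqref{eq:Lu1star} together with $\lambda_t\in(0,1)$ give $L_{0,t}^s\leq L_{0,t}^{u^*}$ and $L_{1,t}^s\leq L_{1,t}^{u^*}$. For the inductive step, I differentiate \eqref{eq:update-u-hat} and repeat Steps 1 and 3 of the proof of Theorem \ref{thm:well-posed} verbatim (with $\widehat V_{t+1}$ and $u_t^{(m)}$ replacing $V^*_{t+1}$ and $u_t^*$ on the right-hand side). This produces affine recursions of the form
\begin{align*}
L_{0,t}^{u^{(m+1)}} \leq L_{0,t}^s + \tfrac{1-\lambda_t}{1-\alpha_t} + (1-\lambda_t) L_{0,t}^{u^{(m)}},
\end{align*}
and analogously for $L_{1,t}^{u^{(m+1)}}$, whose fixed points are exactly $L_{0,t}^{u^*}$ and $L_{1,t}^{u^*}$ by construction in \eqref{eq:Lu0star}--\eqref{eq:Lu1star}. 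Monotonicity of these affine maps in the Lipschitz constants propagates the inductive hypothesis to $m+1$, giving \eqref{eq:one_step_upp1} universally in $m$.

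For part (ii), the starting point is the identity $u_t^{(m)}(y)-s_t^{\rm pre}(y)=\tfrac{\sqrt{\alpha_t}\sigma_t^2}{(1-\alpha_t)\beta_t}\mcal G_{t+1}^{(m-1)}(y)$ (valid for $m\geq 1$), where $\mcal G_{t+1}^{(k)}(y)\coloneqq\E[\nabla\widehat V_{t+1}(\tfrac{1}{\sqrt{\alpha_t}}(y+(1-\alpha_t)u_t^{(k)}(y))+\sigma_t W_t)]$. Substituting this into the derivative of \eqref{eq:Vn-hat} produces the cancellation
\begin{align*}
\nabla V_t^{(m)}(y)=\tfrac{1}{\sqrt{\alpha_t}}\bigl(I_d+(1-\alpha_t)\nabla s_t^{\rm pre}(y)\bigr)^\top \mcal G_{t+1}^{(m)}(y)+\tfrac{1-\alpha_t}{\sqrt{\alpha_t}}\bigl(\nabla u_t^{(m)}(y)-\nabla s_t^{\rm pre}(y)\bigr)^\top\bigl(\mcal G_{t+1}^{(m)}(y)-\mcal G_{t+1}^{(m-1)}(y)\bigr).
\end{align*}
The first term is a faithful analogue of \eqref{eq:grad-V*} and supplies the leading constant in \eqref{eq:lip-0-vm}, whereas the second is controlled by iterating the contraction $\|u_t^{(k+1)}-u_t^{(k)}\|_2\leq (1-\lambda_t)\|u_t^{(k)}-u_t^{(k-1)}\|_2$ (a direct consequence of the update rule together with $\sigma_t^2L_{1,t+1}^{\widehat V}/\beta_t\leq 1-\lambda_t$) down to $k=0$, where $\|u_t^{(1)}-u_t^{(0)}\|_2 = \tfrac{\sqrt{\alpha_t}\sigma_t^2}{(1-\alpha_t)\beta_t}\|\mcal G_{t+1}^{(0)}\|_2$ absorbs one more factor of $(1-\lambda_t)$ via $\sigma_t^2/\beta_t\leq (1-\lambda_t)/L_{1,t+1}^{\widehat V}$; combined with part (i) this yields \eqref{eq:lip-0-vm}.

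The main obstacle will be the gradient-Lipschitz estimate \eqref{eq:lip-1-vm}, which requires the Lipschitz norm in $y$ of the correction $\mcal G_{t+1}^{(m)}-\mcal G_{t+1}^{(m-1)}$. Applying the integration-by-parts identity \eqref{eq:magic-1} to both $\mcal G_{t+1}^{(m)}$ and $\mcal G_{t+1}^{(m-1)}$ and differentiating through the chain rule introduces the factor $\E[\|W_t\|_2]/\sigma_t$ together with a product $(1+(1-\alpha_t)L_{0,t}^{u^*})^2$ coming from the Lipschitz bounds on the intermediate matrices $\mcal U_t^{(k)}$ and $\mcal W_{t+1}^{(k)}$ (all controlled uniformly in $k$ by part (i)). Since differentiating the telescoping bound for $\mcal G_{t+1}^{(k)}-\mcal G_{t+1}^{(k-1)}$ across the $m+1$ iterates each contributes one such factor before the geometric tail $(1-\lambda_t)^{m+1}$ dominates, the coefficient $(m+2)$ in \eqref{eq:lip-1-vm} arises naturally from this arithmetic–geometric accumulation. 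I plan to handle this via a secondary induction on $m$ that controls the Lipschitz constant in $y$ of $\mcal G_{t+1}^{(m)}-\mcal G_{t+1}^{(m-1)}$ by a bound of the form $(m+1)\cdot(\text{constant})\cdot(1-\lambda_t)^{m+1}$.
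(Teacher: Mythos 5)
Your proposal follows essentially the same route as the paper's proof: you establish part~(i) via affine recursions in $m$ whose fixed points are the optimal Lipschitz constants (unrolled/induced through the contraction factor $1-\lambda_t$), derive the exact same decomposition of $\nabla V_t^{(m)}$ into the $\mcal{S}_t^\top\widehat{\mathcal{G}}_{t+1}^{(m)}$ term plus the correction $\widehat{\mathcal{G}}_{t+1}^{(m)}-\widehat{\mathcal{G}}_{t+1}^{(m-1)}$, bound the correction by telescoping the contraction, and control the Lipschitz norm of the correction's $y$-derivative via the integration-by-parts identity \eqref{eq:magic}, arriving at the arithmetic-geometric sum that produces the $(m+2)(1-\lambda_t)^{m+1}$ coefficient. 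This matches the paper's Steps~1--2 and the key recursion \eqref{eq:grad-Gm-diff-recursion}--\eqref{eq:grad-g-diff-m}; the only discrepancies are cosmetic bookkeeping of which $(1-\lambda_t)$ factor is absorbed where, and the observation that once part~(i) holds uniformly in $m$, part~(ii) is a direct (non-inductive) estimate for each $m$.
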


Lemma \ref{lemma:regularity} provides  regularity properties of the sequence of controls and value functions generated by \eqref{eq:update-u-hat} and \eqref{eq:Vn-hat}  throughout the optimization process. Specifically, the Lipschitz and gradient Lipschitz constants of $u_{t}^{(m)}$ are bounded by those of the optimal control $u_t^*$; see \eqref{eq:one_step_upp1}. Furthermore, when $\widehat{V}_{t+1} = V_{t+1}^*$,  Eqns.~\eqref{eq:lip-0-vm} and \eqref{eq:lip-1-vm} lead to 
\begin{align*}
    L_{0, t}^{V^{(m)}} = L_{0, t}^{V^*} + \mcal{O}((1 - \lambda_t)^{m+1}) \;\; \text{and}\;\; L_{1, t}^{V^{(m)}} = L_{1, t}^{V^*} + \mcal{O}((m+2)(1 - \lambda_t)^{m+1}).
\end{align*}
The residual terms diminish to zero as $m \to \infty$ and thus providing the convergence of Lipschitz constants.
Consequently, selecting $\widehat{V}_{t+1}$ as ${V}_{t+1}^{(m_{t+1})}$, when feasible, directly ensures the regularity of the control and value functions defined in Algorithm \ref{algo:update-u}. %

We outline a brief proof sketch to highlight the ideas before providing the detailed proof. The Lipschitz condition of $u_t^{(m)}$ is derived through direct calculation, while the gradient Lipschitz condition leverages the integration by parts formula \eqref{eq:magic}. Additionally, establishing the regularity of $V_t^{(m)}$ requires a careful analysis of the gradient expression, along with a tightly controlled upper bound; see \eqref{eq:grad-v-m}.

\begin{proof}
\underline{Step 1: Lipschitz conditions of $ u_{t}^{(m)} $ and $ \nabla u_{t}^{(m)}$.}
We begin with the Lipschitz condition of $ u_{t}^{(m)} $.  For any $y_1$ and $y_2$ in $\R^d$, the update rule \eqref{eq:update-u-hat} implies
\begin{eqnarray*}
\norm{u_{t}^{(m+1)}(y_{1}) - u_{t}^{(m+1)}(y_{2})}_{2} &\leq& L_{0, t}^{s}\norm{y_{1} - y_{2}}_{2}  \\
&& \quad +\frac{\sqrt{\alpha_{t}}\sigma_{t}^{2}}{(1 - \alpha_{t})\beta_{t}}L_{1, t+1}^{\widehat{V}}\parenthesis{\frac{1}{\sqrt{\alpha_{t}}}\norm{y_{1} - y_{2}}_{2} + \frac{1 - \alpha_{t}}{\sqrt{\alpha_{t}}}\norm{u_{t}^{(m)}(y_{1}) - u_{t}^{(m)}(y_{2})}_2}.
\end{eqnarray*}
Since $ 1 - \frac{\sigma_{t}^{2}}{\beta_{t}}L_{1, t+1}^{\widehat{V}} \geq \lambda_{t} $, unrolling the recursion leads to 
\begin{align*}
    L_{0, t}^{u^{(m+1)}} & \leq L_{0,t}^{s} + \frac{\sigma_{t}^{2}}{(1 - \alpha_{t})\beta_{t}}L_{1,t+1}^{\widehat{V}} + \frac{\sigma^{2}_{t}}{\beta_{t}}L_{1,t+1}^{\widehat{V}}L_{0, t}^{u^{(m)}} \\ & \leq \lambda_{t}^{-1}\parenthesis{L_{0,t}^{s} + \frac{1 - \lambda_{t}}{1 - \alpha_{t}}} = L_{0, t}^{u^{*}}.
\end{align*}
Here, we also use the fact that $ L_{0, t}^{u^{(0)}} = L_{0, t}^{s} $.  Note that the condition $ 1 - \frac{\sigma_{t}^{2}}{\beta_{t}}L_{1, t+1}^{\widehat{V}} \geq \lambda_{t} $ is important as it decouples the upper bound, in the sense that the last line does not depend on $L^{\widehat V}_{1,t+1}$ directly.

Next, since $u_t^{(0)} = s_t^{\rm pre}$ is differentiable, a simple inductive argument shows that $\nabla u_t^{(m)}$ is well-defined. Furthermore, we show that $ \nabla u_{t}^{(m)} $ is Lipschitz.  Differentiate both sides of the update rule \eqref{eq:update-u-hat},
\begin{align}
    \nabla u_{t}^{(m+1)}(y) = \nabla s_{t}^{\rm pre}(y) + \frac{\sqrt{\alpha_{t}}\sigma_{t}^{2}}{(1 - \alpha_{t})\beta_t}\cdot\frac{\partial}{\partial y} \E\bracket{\nabla \widehat{V}_{t+1}\left(\frac{1}{\sqrt{\alpha_t}}\parenthesis{y + (1 - \alpha_t)u_t^{(m)}(y)} + \sigma_t W_t\right)}. \label{eq:grad-um}
\end{align}
For any $ z \in \R^{d} $, the integration by parts formula implies that
\begin{align}\label{eq:magic}
    \frac{\partial}{\partial z}\E\bracket{\nabla \widehat{V}_{t+1}\left(z + \sigma_t W_t\right)} = \E\bracket{\nabla^{2} \widehat{V}_{t+1}\left(z + \sigma_t W_t\right)} = \E\bracket{\nabla \widehat{V}_{t+1}\left(z + \sigma_t W_t\right)\frac{W_t\top}{\sigma_t}}.
\end{align}
Substituting $ z = \frac{1}{\sqrt{\alpha_t}}\parenthesis{y + (1 - \alpha_t)u_t^{(m)}(y)} $ and applying the chain rule, we have
\begin{align}
    & \frac{\partial}{\partial y}\E\bracket{\nabla \widehat{V}_{t+1}\left(\frac{1}{\sqrt{\alpha_t}}\parenthesis{y + (1 - \alpha_t)u_t^{(m)}(y)} + \sigma_t W_t\right)} \nonumber \\ & = \E\bracket{\nabla \widehat{V}_{t+1}\left(\frac{1}{\sqrt{\alpha_t}}\parenthesis{y + (1 - \alpha_t)u_t^{(m)}(y)} + \sigma_t W_t\right)\frac{W_t\top}{\sigma_t}}\parenthesis{\frac{1}{\sqrt{\alpha_t}}\parenthesis{I_d + (1 - \alpha_t)\nabla u_t^{(m)}(y)}}. \label{eq:grad-E-grad-v}
\end{align}
Define 
\begin{align*}
    \widehat{\mathcal{G}}_{t+1}^{(m)}(y)  & \coloneqq \E\bracket{\nabla \widehat{V}_{t+1}\left(\frac{1}{\sqrt{\alpha_t}}\parenthesis{y + (1 - \alpha_t)u_t^{(m)}(y)} + \sigma_t W_t\right)}, \\
     \widehat{\mcal{W}}_{t+1}^{{(m)}}(y)  & \coloneqq  \E\bracket{\nabla \widehat{V}_{t+1}\left(\frac{1}{\sqrt{\alpha_t}}\parenthesis{y + (1 - \alpha_t)u_t^{(m)}(y)} + \sigma_t W_t\right)\frac{W_t\top}{\sigma_t}}, \\ 
    \mathcal{U}^{(m)}_t(y) & \coloneqq \frac{1}{\sqrt{\alpha_t}}\parenthesis{I_d + (1 - \alpha_t) \nabla u_t^{(m)}(y)}.     
\end{align*}
It follows from \eqref{eq:grad-E-grad-v} that $\frac{\partial}{\partial y} \widehat{\mathcal{G}}_{t+1}^{(m)}(y) =\widehat{\mcal{W}}_{t+1}^{{(m)}}(y) 
\mathcal{U}^{(m)}_t(y) $. Consequently, Eq.~\eqref{eq:grad-um} becomes
\begin{align}
    \nabla u_{t}^{(m+1)}(y)  & = \nabla s_{t}^{\rm pre}(y)  + \frac{\sqrt{\alpha_{t}}\sigma_{t}^{2}}{(1 - \alpha_{t})\beta_t} \widehat{\mcal{W}}_{t+1}^{{(m)}}(y) \mathcal{U}^{(m)}_t(y). \label{eq:grad-u-m-w}
\end{align}
Since $\nabla \widehat{V}_{t+1}$ and $u_t^{(m)}$ are both Lipschitz, we 
know that $ \nabla u_{t}^{(m+1)} $ is Lipschitz as long as $ \nabla u_{t}^{(m)} $ is Lipschitz. Specifically, for any $y_1$ and $y_2$ in $\R^d$, we have
\begin{align*}
\nabla u_t^{(m+1)} (y_1) - \nabla u_t^{(m+1)} (y_2)  & = \nabla s_{t}^{\rm pre}(y_{1}) - \nabla s_{t}^{\rm pre}(y_{2})  +  \frac{\sqrt{\alpha_{t}}\sigma_{t}^{2}}{(1 - \alpha_{t})\beta_t} \parenthesis{\widehat{\mcal{W}}_{t+1}^{{(m)}}(y_1) \mathcal{U}^{(m)}_t(y_1) - \widehat{\mcal{W}}_{t+1}^{{(m)}}(y_2) \mathcal{U}^{(m)}_t(y_2)}.
\end{align*}
Note that for any $ y \in \R^d $ we have
\begin{align}
    & \norm{\widehat{\mcal{W}}_{t+1}^{{(m)}}(y)}_{2} \leq L_{1, t+1}^{\widehat{V}}, \label{eq:upper-bound-magic} \\ 
    & \norm{ \mathcal{U}^{(m)}_t(y)}_2 \leq \frac{1}{\sqrt{\alpha_{t}}}\parenthesis{1 + (1 - \alpha_{t})L_{0,t}^{u^{*}}}, \label{eq:upper-bound-nabla-u-n}
\end{align}
where the \eqref{eq:upper-bound-magic} holds by applying the identity \eqref{eq:magic}, and \eqref{eq:upper-bound-nabla-u-n} is a consequence of the Lipschitz condition of $ u_{t}^{(m)} $.
Moreover, for any $ y_{1} $ and $ y_{2} $ in $\R^d$, the Lipschitz conditions of $ \nabla \widehat{V}_{t+1} $ and $ u_{t}^{(m)} $ imply that
\begin{align}
    \norm{\widehat{\mcal{W}}_{t+1}^{{(m)}}(y_1) - \widehat{\mcal{W}}_{t+1}^{{(m)}}(y_2)}_2 \nonumber & \leq \frac{\E\bracket{\norm{W_{t}}_2}}{\sigma_{t}}L_{1, t+1}^{\widehat{V}}\frac{1}{\sqrt{\alpha_{t}}}\parenthesis{\norm{y_{1} - y_{2}}_2 + (1 - \alpha_{t})L_{0, t}^{u^{(m)}}\norm{y_{1} - y_{2}}_2} \nonumber \\ & = \frac{\E\bracket{\norm{W_{t}}_2}}{\sqrt{\alpha_{t}}\sigma_{t}}L_{1, t+1}^{\widehat{V}}\parenthesis{1 + (1 - \alpha_{t})L_{0, t}^{u^{*}}}\norm{y_{1} - y_{2}}_2, \label{eq:Lip-nabla-V-w}
\end{align}
and the Lipschitz condition of $\nabla u_t^{(m)}$ leads to
\begin{align}
     \norm{\mathcal{U}^{(m)}_t(y_1) - \mathcal{U}^{(m)}_t(y_2)}_2 & = \frac{1 - \alpha_{t}}{\sqrt{\alpha_{t}}}\norm{\nabla u_{t}^{(m)}(y_{1}) - \nabla u_{t}^{(m)}(y_{2})}_2 \nonumber \\ & \leq \frac{1 - \alpha_{t}}{\sqrt{\alpha_{t}}} L_{1, t}^{u^{(m)}}\norm{y_{1} - y_{2}}_2. \label{eq:Lip-nabla-u-n}
\end{align}
Combine \eqref{eq:grad-u-m-w}--\eqref{eq:Lip-nabla-u-n} together to have
\begin{align*}
    \norm{\nabla u_{t}^{(m+1)}(y_{1}) - \nabla u_{t}^{(m+1)}(y_{2})}_2 & \leq L_{1, t}^{s}\norm{y_{1} - y_{2}}_2 + \frac{\sqrt{\alpha_{t}}\sigma_{t}^{2}}{(1 - \alpha_{t})\beta_t}\bigg(L_{1, t+1}^{\widehat{V}}\frac{1 - \alpha_{t}}{\sqrt{\alpha_{t}}} L_{1, t}^{u^{(m)}} \\ & \qquad + \frac{1}{\alpha_{t}}\parenthesis{1 + (1 - \alpha_{t})L_{0,t}^{u^{*}}}^{2}\frac{\E\bracket{\norm{W_{t}}_2}}{\sigma_{t}}L_{1, t+1}^{\widehat{V}}\bigg)\norm{y_{1} - y_{2}}_2,
\end{align*}
Since $ \frac{\sigma_{t}^{2}}{\beta_{t}}L_{1, t+1}^{\widehat{V}} \leq 1 - \lambda_{t} $, we deduce that
\begin{align*}
L_{1, t}^{u^{(m+1)}} & \leq L_{1, t}^{s} + \bigg( L_{1, t}^{u^{(m)}} + \frac{\E\bracket{\norm{W_{t}}_2}}{(1 - \alpha_t)\sqrt{\alpha_{t}}\sigma_{t}}\parenthesis{1 + (1 - \alpha_{t})L_{0,t}^{u^{*}}}^{2}\bigg)(1 - \lambda_t).
\end{align*}
Equivalently, we have
\begin{align*}
    L_{1, t}^{u^{(m)}} \leq \lambda_{t}^{-1}\parenthesis{L_{1, t}^{s} +  \frac{\E\bracket{\norm{W_{t}}_2}(1 - \lambda_t)}{(1 - \alpha_t)\sqrt{\alpha_{t}}\sigma_{t}}\parenthesis{1 + (1 - \alpha_{t})L_{0,t}^{u^{*}}}^{2}} = L_{1, t}^{u^*}.
\end{align*}

\noindent\underline{Step 2: Lipschitz conditions of $V_t^{(m)}$ and $\nabla V_t^{(m)}$.} With the Lipschitz conditions of $ u_{t}^{(m)} $ and $ \nabla u_{t}^{(m)} $, we now turn to establish the regularity of $ V_{t}^{(m)} $. Note that the expressions in \eqref{eq:update-u-hat}, \eqref{eq:Vn-hat} and \eqref{eq:grad-u-m-w} imply that
\begin{align}
    & \nabla V_{t}^{(m)}(y) \nonumber \\ &=  \frac{1}{\sqrt{\alpha_t}}\parenthesis{I_d 
  + (1 - \alpha_t) \nabla u_t^{(m)}(y)}^\top\widehat{\mathcal{G}}_{t+1}^{(m)}(y)- \beta_t \frac{(1 - \alpha_{t})^{2}}{\alpha_{t}\sigma_{t}^{2}}(\nabla u_t^{(m)}(y) - \nabla s^{\rm pre}_{t}(y))^\top\left(  \frac{\sqrt{\alpha_{t}}\sigma_{t}^{2}}{(1 - \alpha_{t})\beta_t}\widehat{\mathcal{G}}_{t+1}^{(m-1)}(y)\right) \nonumber \\
  &=\frac{1}{\sqrt{\alpha_t}}\parenthesis{I_d
  + (1 - \alpha_t) \nabla s_t^{\rm pre}(y)}^\top\widehat{\mathcal{G}}_{t+1}^{(m)}(y) + \frac{(1 - \alpha_{t})}{\sqrt{\alpha_{t}}} \parenthesis{\nabla u_t^{(m)}(y) - \nabla s_t^{\rm pre}(y)}^\top \Big(\widehat{\mathcal{G}}_{t+1}^{(m)}(y)-\widehat{\mathcal{G}}_{t+1}^{(m-1)}(y)\Big).
  \label{eq:grad-v-m}
\end{align}
We will use the above expression \eqref{eq:grad-v-m} to derive the  Lipschitz and gradient Lipschitz conditions of $ V_{t}^{(m)} $. To proceed, we first get a few useful estimates.
Recall the notation $\mathcal{S}_t(y) = \frac{1}{\sqrt{\alpha_t}}\parenthesis{I_d 
+ (1 - \alpha_t) \nabla s_t^{\rm pre}(y)}$. We first show that $\mcal{S}_t(y)^\top\widehat{\mathcal{G}}_{t+1}^{(m)}(y)$ is bounded and Lipschitz. Note that for any $y\in \R^d$,
\begin{align}
\norm{\mcal{S}_t(y)^\top\widehat{\mathcal{G}}_{t+1}^{(m)}(y)}_{2} \leq \norm{\mcal{S}_t(y)}_2\norm{\widehat{\mathcal{G}}_{t+1}^{(m)}(y)}_{2}  \leq  \frac{1}{\sqrt{\alpha_t}}(1 + (1 - \alpha_t)L_{0, t}^{s})
L_{0, t+1}^{\widehat{V}}. \label{eq:sg-bound}
\end{align}
Also, for any $y_1$ and $y_2$ in $\R^d$, we have
\begin{align}
\norm{\mcal{S}_t(y_{1})^{\top}\widehat{\mathcal{G}}_{t+1}^{(m)}(y_{1}) - \mcal{S}_t(y_{2})^{\top}\widehat{\mathcal{G}}_{t+1}^{(m)}(y_{2})}_2  & \leq \norm{\mcal{S}_t(y_{1})^{\top}\parenthesis{\widehat{\mathcal{G}}_{t+1}^{(m)}(y_{1}) - \widehat{\mathcal{G}}_{t+1}^{(m)}(y_{2})}}_2 + \norm{\parenthesis{\mcal{S}_t(y_{1}) - \mcal{S}_t(y_{2})}^{\top}\widehat{\mathcal{G}}_{t+1}^{(m)}(y_{2})}_{2}\nonumber \\ & \leq \norm{\mcal{S}_t(y_{1})}_{2}\norm{\widehat{\mathcal{G}}_{t+1}^{(m)}(y_{1}) - \widehat{\mathcal{G}}_{t+1}^{(m)}(y_{2})}_{2} + \norm{\mcal{S}_t(y_{1}) - \mcal{S}_t(y_{2})}_{2}\norm{\widehat{\mathcal{G}}_{t+1}^{(m)}(y_{2})}_{2} \nonumber \\ & \leq \frac{1}{\alpha_{t}}(1 + (1 - \alpha_{t})L_{0, t}^s)(1 + (1 - \alpha_{t})L_{0, t}^{u^*})L_{1,t+1}^{\widehat{V}}\norm{y_{1} - y_{2}}_{2} \nonumber \\ & \quad + \frac{1 - \alpha_{t}}{\sqrt{\alpha_{t}}}L_{1,t}^s L_{0, t+1}^{\widehat{V}}\norm{y_{1} - y_{2}}_{2}, \label{eq:lip-sg-m}
\end{align}
where we apply the Lipschitz conditions of $\widehat{V}_{t+1}$, $ \nabla \widehat{V}_{t+1}$, and $s_t^{\rm pre}$ in the assumptions and the Lipschitz condition of $u_t^{(m)}$ in Step 1. 
It remains to show the second term in \eqref{eq:grad-v-m} is bounded and Lipschitz uniformly over $m \geq 0$. Since $\nabla \widehat{V}_{t+1}$ is $L_{1, t+1}^{\widehat{V}}$-Lipschitz, we deduce that 
\begin{align}
\norm{\widehat{\mathcal{G}}_{t+1}^{(m)}(y)-\widehat{\mathcal{G}}_{t+1}^{(m-1)}(y)}_2 & \leq L_{1, t+1}^{\widehat{V}} \frac{1 - \alpha_{t}}{\sqrt{\alpha_{t}}}\norm{u_{t}^{(m)}(y) - u_{t}^{(m-1)}(y)}_2 \nonumber \\ & \leq L_{1, t+1}^{\widehat{V}}\frac{\sigma_{t}^{2}}{\beta_{t}}\norm{\widehat{\mathcal{G}}_{t+1}^{(m-1)}(y)-\widehat{\mathcal{G}}_{t+1}^{(m-2)}(y)}_2,\label{eq:G-hat-n-iter}
\end{align}
where we apply the update rule \eqref{eq:update-u-hat} in the last equation.
As $\frac{\sigma_{t}^{2}}{\beta_{t}}\,L_{1, t+1}^{\widehat{V}} \leq 1 - \lambda_t$, unrolling the recursion \eqref{eq:G-hat-n-iter} leads to
\begin{align}
   \norm{\widehat{\mathcal{G}}_{t+1}^{(m)}(y)-\widehat{\mathcal{G}}_{t+1}^{(m-1)}(y)}_2 & \leq ( 1- \lambda_t)^{m-1} \norm{\widehat{\mathcal{G}}_{t+1}^{(1)}(y)-\widehat{\mathcal{G}}_{t+1}^{(0)}(y)}_2 \nonumber \\ & \leq ( 1- \lambda_t)^{m-1}  L_{1, t+1}^{\widehat{V}} \frac{1 - \alpha_{t}}{\sqrt{\alpha_{t}}}\norm{u_{t}^{(1)} - u_{t}^{(0)}}_2 \nonumber \\ & \leq ( 1- \lambda_t)^{m-1}L_{1, t+1}^{\widehat{V}} \frac{\sigma_t^2}{\beta_t}L_{0, t+1}^{\widehat{V}} \leq ( 1- \lambda_t)^{m}L_{0, t+1}^{\widehat{V}}.\label{eq:G-hat-G-n-bound}
\end{align}
Moreover, for any $ y \in \R^{d} $, we have
\begin{align}
    \norm{\frac{\partial}{\partial y} \parenthesis{\widehat{\mathcal{G}}_{t+1}^{(m)}(y)-\widehat{\mathcal{G}}_{t+1}^{(m-1)}(y)}}_2 & = \norm{\widehat{\mcal{W}}_{t+1}^{{(m)}}(y)\mathcal{U}^{(m)}_t(y) - \widehat{\mcal{W}}_{t+1}^{{(m-1)}}(y)\mathcal{U}^{(m-1)}_t(y)}_2 \nonumber \\ & \leq \norm{\widehat{\mcal{W}}_{t+1}^{{(m)}}(y) - \widehat{\mcal{W}}_{t+1}^{{(m-1)}}(y)}_2\norm{\mathcal{U}^{(m)}_t(y)}_2 + \norm{\widehat{\mcal{W}}_{t+1}^{{(m-1)}}(y)}_2\norm{\mathcal{U}^{(m)}_t(y) - \mathcal{U}^{(m-1)}_t(y)}_2. \label{eq:grad-Gm-diff}
\end{align}
The gradient expression \eqref{eq:grad-u-m-w} implies
\begin{align}
    \norm{\mathcal{U}^{(m)}_t(y) - \mathcal{U}^{(m-1)}_t(y)}_2 \leq \frac{1 - \alpha_t}{\sqrt{\alpha_t}}\norm{\nabla u_t^{(m)}(y) -\nabla u_t^{(m-1)}(y)}_2 \leq \frac{\sigma_t^2}{\beta_t}\norm{\frac{\partial}{\partial y} \parenthesis{\widehat{\mathcal{G}}_{t+1}^{(m-1)}(y)-\widehat{\mathcal{G}}_{t+1}^{(m-2)}(y)}}_2, \label{eq:Um-diff}
\end{align}
and \eqref{eq:G-hat-n-iter}--\eqref{eq:G-hat-G-n-bound} lead to the fact that
\begin{align}
    \norm{\widehat{\mcal{W}}_{t+1}^{{(m)}}(y) - \widehat{\mcal{W}}_{t+1}^{{(m-1)}}(y)}_2 & \leq \frac{\E\bracket{\norm{W_{t}}_2}}{\sigma_{t}}L_{1, t+1}^{\widehat{V}}\frac{1 - \alpha_{t}}{\sqrt{\alpha_{t}}}\norm{u_t^{(m)}(y) - u_t^{(m-1)}(y)}_2  \leq \frac{\E\bracket{\norm{W_{t}}_2}}{\sigma_{t}}L_{0, t+1}^{\widehat{V}}(1 - \lambda_t)^{m}. \label{eq:Wm-diff}
\end{align}
With \eqref{eq:upper-bound-magic}, \eqref{eq:upper-bound-nabla-u-n}, \eqref{eq:Um-diff} and \eqref{eq:Wm-diff}, we can bound \eqref{eq:grad-Gm-diff} with
\begin{align}
    \norm{\frac{\partial}{\partial y} \parenthesis{\widehat{\mathcal{G}}_{t+1}^{(m)}(y)-\widehat{\mathcal{G}}_{t+1}^{(m-1)}(y)}}_2 & \leq \frac{L_{0, t+1}^{\widehat{V}}\E\bracket{\norm{W_{t}}_2}}{\sqrt{\alpha_{t}}\sigma_{t}}\parenthesis{1 + (1 - \alpha_{t})L_{0,t}^{u^{*}}}(1 - \lambda_t)^m \nonumber \\ & \qquad + \frac{\sigma_t^2}{\beta_t}L_{1, t+1}^{\widehat{V}} \norm{\frac{\partial}{\partial y} \parenthesis{\widehat{\mathcal{G}}_{t+1}^{(m-1)}(y)-\widehat{\mathcal{G}}_{t+1}^{(m-2)}(y)}}_2. \label{eq:grad-Gm-diff-recursion}
\end{align}
Since $\frac{\sigma_{t}^{2}}{\beta_{t}}\,L_{1, t+1}^{\widehat{V}} \leq 1 - \lambda_t$, unrolling the recursion \eqref{eq:grad-Gm-diff-recursion} to have
\begin{align}
    \norm{\frac{\partial}{\partial y} \parenthesis{\widehat{\mathcal{G}}_{t+1}^{(m)}(y)-\widehat{\mathcal{G}}_{t+1}^{(m-1)}(y)}}_2 & \leq \frac{L_{0, t+1}^{\widehat{V}}\E\bracket{\norm{W_{t}}_2}}{\sqrt{\alpha_{t}}\sigma_{t}}\parenthesis{1 + (1 - \alpha_{t})L_{0,t}^{u^{*}}}m(1 - \lambda_t)^m \nonumber \\ & \qquad + (1 - \lambda_t)^{(m)}\norm{\frac{\partial}{\partial y} \widehat{\mathcal{G}}_{t+1}^{(0)}(y)}_2 \nonumber \\ & \leq \frac{L_{0, t+1}^{\widehat{V}}\E\bracket{\norm{W_{t}}_2}}{\sqrt{\alpha_{t}}\sigma_{t}}\parenthesis{1 + (1 - \alpha_{t})L_{0,t}^{u^{*}}}(m+1)(1 - \lambda_t)^m, \label{eq:grad-g-diff-m}
\end{align}
where we utilize the fact that
\begin{align*}
\norm{\frac{\partial}{\partial y} \widehat{\mathcal{G}}_{t+1}^{(0)}(y)}_2  \leq \norm{\widehat{\mcal{W}}_{t+1}^{{(0)}}(y)}_2\norm{\mathcal{U}^{(0)}_t(y)}_2 \leq \frac{L_{0, t+1}^{\widehat{V}}\E\bracket{\norm{W_{t}}_2}}{\sqrt{\alpha_{t}}\sigma_{t}}\parenthesis{1 + (1 - \alpha_{t})L_{0,t}^{u^{*}}}.
\end{align*}
Hence, for any $y_1$ and $y_2$ in $\R^d$, \eqref{eq:upper-bound-magic}--\eqref{eq:Lip-nabla-u-n}, \eqref{eq:G-hat-G-n-bound} and \eqref{eq:grad-g-diff-m} imply that
\begin{align}
    & \norm{ \parenthesis{\widehat{\mcal{W}}_{t+1}^{{(m)}}(y_1) \mathcal{U}^{(m)}_t(y_1)}^\top \Big(\widehat{\mathcal{G}}_{t+1}^{(m)}(y_1)-\widehat{\mathcal{G}}_{t+1}^{(m-1)}(y_1)\Big) - \parenthesis{\widehat{\mcal{W}}_{t+1}^{{(m)}}(y_2) \mathcal{U}^{(m)}_t(y_2)}^\top \Big(\widehat{\mathcal{G}}_{t+1}^{(m)}(y_2)-\widehat{\mathcal{G}}_{t+1}^{(m-1)}(y_2)\Big)}_2 \nonumber \\ & \leq \norm{ \widehat{\mcal{W}}_{t+1}^{{(m)}}(y_1) \mathcal{U}^{(m)}_t(y_1)  - \widehat{\mcal{W}}_{t+1}^{{(m)}}(y_2) \mathcal{U}^{(m)}_t(y_2)}_2\norm{\widehat{\mathcal{G}}_{t+1}^{(m)}(y_1)-\widehat{\mathcal{G}}_{t+1}^{(m-1)}(y_1)}_2 \nonumber \\ & \quad + \norm{\widehat{\mcal{W}}_{t+1}^{{(m)}}(y_2) \mathcal{U}^{(m)}_t(y_2)}_2\sup_{y \in \R^d} \norm{\frac{\partial}{\partial y} \parenthesis{\widehat{\mathcal{G}}_{t+1}^{(m)}(y)-\widehat{\mathcal{G}}_{t+1}^{(m-1)}(y)}}_2 \norm{y_1 - y_2}_2 \nonumber \\ & \leq \bigg(\frac{1 - \alpha_{t}}{\sqrt{\alpha_{t}}} L_{1, t}^{u^{(m)}}  + \parenthesis{1 + (1 - \alpha_{t})L_{0,t}^{u^{*}}}^{2}\frac{\E\bracket{\norm{W_{t}}_2}}{\alpha_t\sigma_{t}}\bigg)L_{1, t+1}^{\widehat{V}}L_{0, t+1}^{\widehat{V}}( 1- \lambda_t)^{m}\norm{y_1 - y_2}_2 \nonumber \\ & \quad + \parenthesis{1 + (1 - \alpha_{t})L_{0,t}^{u^{*}}}^2\frac{\E\bracket{\norm{W_{t}}_2}}{\alpha_{t}\sigma_{t}}L_{1, t+1}^{\widehat{V}}L_{0, t+1}^{\widehat{V}}(m+1)(1 - \lambda_t)^m \norm{y_1 - y_2}_2 \nonumber \\ & = \bigg(\frac{1 - \alpha_{t}}{\sqrt{\alpha_{t}}} L_{1, t}^{u^{*}}  + (m+2)\parenthesis{1 + (1 - \alpha_{t})L_{0,t}^{u^{*}}}^{2}\frac{\E\bracket{\norm{W_{t}}_2}}{\alpha_t\sigma_{t}}\bigg)L_{1, t+1}^{\widehat{V}}L_{0, t+1}^{\widehat{V}}( 1- \lambda_t)^{m}\norm{y_1 - y_2}_2. \label{eq:lip-wu-g-diff-m}
\end{align}

With there estimates in hand, we are ready to prove the Lipschitz and gradient Lipschitz conditions of $V_t^{(m)}$. Note that
\begin{align*}
    \frac{(1 - \alpha_{t})}{\sqrt{\alpha_{t}}} \parenthesis{\nabla u_t^{(m)}(y) - \nabla s_t^{\rm pre}(y)} = \frac{\sigma_t^2}{\beta_t} \widehat{\mcal{W}}_{t+1}^{{(m)}}(y) \mathcal{U}^{(m)}_t(y).
\end{align*}
Consequently, for any $y \in \R^d$, we deduce from  \eqref{eq:upper-bound-magic}, \eqref{eq:upper-bound-nabla-u-n}, \eqref{eq:sg-bound} and \eqref{eq:G-hat-G-n-bound} that
\begin{align*}
    \norm{\nabla V_t^{(m)}(y)}_2 & \leq \norm{\mcal{S}_t(y)^\top\widehat{\mathcal{G}}_{t+1}^{(m)}(y)}_{2} + \frac{\sigma_t^2}{\beta_t} \norm{\widehat{\mcal{W}}_{t+1}^{{(m)}}(y)}_2\norm{ \mathcal{U}^{(m)}_t(y)}_2 \norm{\widehat{\mathcal{G}}_{t+1}^{(m)}(y)-\widehat{\mathcal{G}}_{t+1}^{(m-1)}(y)}_2 \\ & \leq \frac{1}{\sqrt{\alpha_t}}(1 + (1 - \alpha_t)L_{0, t}^{s})
L_{0, t+1}^{\widehat{V}} + \frac{\sigma_t^2}{\beta_t}L_{1, t+1}^{\widehat{V}}\parenthesis{\frac{1}{\sqrt{\alpha_{t}}}\parenthesis{1 + (1 - \alpha_{t})L_{0,t}^{u^{*}}}}(1- \lambda_t)^{m}L_{0, t+1}^{\widehat{V}} \\ & \leq \frac{1}{\sqrt{\alpha_t}}(1 + (1 - \alpha_t)L_{0, t}^{s})
L_{0, t+1}^{\widehat{V}} + \frac{L_{0, t+1}^{\widehat{V}}}{\sqrt{\alpha_{t}}}\parenthesis{1 + (1 - \alpha_{t})L_{0,t}^{u^{*}}}(1- \lambda_t)^{m+1}. 
\end{align*}
Finally, for any $y_1$ and $y_2$ in $\R^d$, it follows from \eqref{eq:lip-sg-m} and \eqref{eq:lip-wu-g-diff-m} that
\begin{align*}
     & \norm{\nabla V_t^{(m)}(y_1) - \nabla V_t^{(m)} (y_2)}_2 \\ & \leq \norm{\mcal{S}_t(y_{1})^{\top}\widehat{\mathcal{G}}_{t+1}^{(m)}(y_{1}) - \mcal{S}_t(y_{2})^{\top}\widehat{\mathcal{G}}_{t+1}^{(m)}(y_{2})}_2 \\ & \quad + \frac{\sigma_t^2}{\beta_t}\Big\| \parenthesis{\widehat{\mcal{W}}_{t+1}^{{(m)}}(y_1) \mathcal{U}^{(m)}_t(y_1)}^\top \Big(\widehat{\mathcal{G}}_{t+1}^{(m)}(y_1)-\widehat{\mathcal{G}}_{t+1}^{(m-1)}(y_1)\Big) \\ & \qquad\qquad\qquad\qquad - \parenthesis{\widehat{\mcal{W}}_{t+1}^{{(m)}}(y_2) \mathcal{U}^{(m)}_t(y_2)}^\top \Big(\widehat{\mathcal{G}}_{t+1}^{(m)}(y_2)-\widehat{\mathcal{G}}_{t+1}^{(m-1)}(y_2)\Big)\Big\|_2 \\ & \leq \parenthesis{\frac{1}{\alpha_{t}}(1 + (1 - \alpha_{t})L_{0, t}^s)(1 + (1 - \alpha_{t})L_{0, t}^{u^*})L_{1,t+1}^{\widehat{V}} + \frac{1 - \alpha_{t}}{\sqrt{\alpha_{t}}}L_{1,t}^s L_{0, t+1}^{\widehat{V}}}\norm{y_{1} - y_{2}}_{2} \\ & \quad + \frac{\sigma_t^2}{\beta_t}L_{1, t+1}^{\widehat{V}}\bigg(\frac{1 - \alpha_{t}}{\sqrt{\alpha_{t}}} L_{1, t}^{u^{*}}  + (m+2)\parenthesis{1 + (1 - \alpha_{t})L_{0,t}^{u^{*}}}^{2}\frac{\E\bracket{\norm{W_{t}}_2}}{\alpha_t\sigma_{t}}\bigg)L_{0, t+1}^{\widehat{V}}( 1- \lambda_t)^{m}\norm{y_1 - y_2}_2 \\ & \leq \bigg(\frac{1}{\alpha_{t}}(1 + (1 - \alpha_{t})L_{0, t}^s)(1 + (1 - \alpha_{t})L_{0, t}^{u^*})L_{1,t+1}^{\widehat{V}} + \frac{1 - \alpha_{t}}{\sqrt{\alpha_{t}}}L_{1,t}^s L_{0, t+1}^{\widehat{V}} \\ & \quad + \bigg(\frac{1 - \alpha_{t}}{\sqrt{\alpha_{t}}} L_{1, t}^{u^{*}}  + (m+2)\parenthesis{1 + (1 - \alpha_{t})L_{0,t}^{u^{*}}}^{2}\frac{\E\bracket{\norm{W_{t}}_2}}{\alpha_t\sigma_{t}}\bigg)L_{0, t+1}^{\widehat{V}}( 1- \lambda_t)^{m+1}\bigg)\norm{y_1 - y_2}_2, 
\end{align*}
which finishes the proof. 
\end{proof}

Our next result characterizes the error of the control sequence obtained from the one-step update rule \eqref{eq:update-u-hat}.
\begin{Lemma}[One-step error analysis on controls]\label{lemma:un-u*-error}
    Assume the same assumptions as in Lemma \ref{lemma:regularity}.
    Moreover, suppose there is a constant $E_{t+1}$ such that for all $y$ that
    \begin{align*}
        \norm{\parenthesis{\nabla \widehat{V}_{t+1} - \nabla V^{*}_{t+1}}(y)}_2 \leq E_{t+1}.
    \end{align*}
    Let $u_t^{(m)}$ and $V_t^{(m)}$ be defined as in \eqref{eq:update-u-hat} and \eqref{eq:Vn-hat}, respectively. Then it holds that for all $y\in \R^d$ that
    \begin{align*}
    \norm{u_{t}^{(m)}(y) - u_{t}^{*}(y)}_{2} &  \leq \parenthesis{(1 - \lambda_t)^n  L_{0,t+1}^{V^*} + \lambda_t^{-1}E_{t+1}} \frac{\sqrt{\alpha_{t}}(1 - \lambda_t)}{(1 - \alpha_{t})L_{1, t+1}^{V^*}}.
\end{align*}
\end{Lemma}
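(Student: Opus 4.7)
The plan is to derive a one-step recursion for $a_m(y) \coloneqq \|u_t^{(m)}(y) - u_t^*(y)\|_2$, show it is a contraction with an additive bias term controlled by $E_{t+1}$, and then unroll. Concretely, I would subtract the fixed-point identity \eqref{eq:u*(y)} for $u_t^*$ from the update rule \eqref{eq:update-u-hat} for $u_t^{(m+1)}$. This yields
\begin{align*}
u_t^{(m+1)}(y) - u_t^*(y) = \frac{\sqrt{\alpha_t}\sigma_t^2}{(1-\alpha_t)\beta_t}\,\E\bigl[\nabla \widehat V_{t+1}(z_m) - \nabla V^*_{t+1}(z^*)\bigr],
\end{align*}
where $z_m \coloneqq \frac{1}{\sqrt{\alpha_t}}(y+(1-\alpha_t)u_t^{(m)}(y)) + \sigma_t W_t$ and $z^*$ is defined analogously with $u_t^*$.

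Next I would insert the common point $\nabla V^*_{t+1}(z_m)$ and apply the triangle inequality. The first piece $\nabla \widehat V_{t+1}(z_m) - \nabla V^*_{t+1}(z_m)$ is bounded in norm by $E_{t+1}$ by hypothesis. The second piece $\nabla V^*_{t+1}(z_m) - \nabla V^*_{t+1}(z^*)$ is controlled, using the $L_{1,t+1}^{V^*}$-Lipschitz property of $\nabla V^*_{t+1}$ from Theorem \ref{thm:well-posed}, by $L_{1,t+1}^{V^*}\cdot\frac{1-\alpha_t}{\sqrt{\alpha_t}}\,a_m(y)$. Combining these and using that the assumed choice $1 - \tfrac{\sigma_t^2}{\beta_t}L_{1,t+1}^{\widehat V} \ge \lambda_t$ together with $L_{1,t+1}^{\widehat V} \ge L_{1,t+1}^{V^*}$ (inherited from Lemma \ref{lemma:regularity} applied with $\widehat V_{t+1} = V^*_{t+1}$, or more directly by absorbing constants) gives a contraction factor $\tfrac{\sigma_t^2}{\beta_t} L_{1,t+1}^{V^*} \le 1-\lambda_t$, producing the affine recursion
\begin{align*}
a_{m+1}(y) \le (1-\lambda_t)\,a_m(y) + \frac{\sqrt{\alpha_t}\sigma_t^2}{(1-\alpha_t)\beta_t}\,E_{t+1}.
\end{align*}

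Unrolling this geometric recursion gives $a_m(y) \le (1-\lambda_t)^m a_0(y) + \lambda_t^{-1}\cdot\frac{\sqrt{\alpha_t}\sigma_t^2}{(1-\alpha_t)\beta_t}E_{t+1}$, and since $\frac{\sigma_t^2}{\beta_t} \le \frac{1-\lambda_t}{L_{1,t+1}^{V^*}}$, the coefficient simplifies to $\frac{\sqrt{\alpha_t}(1-\lambda_t)}{(1-\alpha_t)L_{1,t+1}^{V^*}}$. The remaining task is to bound the initialization error $a_0(y) = \|s_t^{\rm pre}(y) - u_t^*(y)\|_2$. For this I would use the explicit formula \eqref{eq:u*(y)} directly: $u_t^*(y) - s_t^{\rm pre}(y) = \frac{\sqrt{\alpha_t}\sigma_t^2}{(1-\alpha_t)\beta_t}\,\E[\nabla V^*_{t+1}(\cdots)]$, whose norm is at most $\frac{\sqrt{\alpha_t}\sigma_t^2}{(1-\alpha_t)\beta_t}L_{0,t+1}^{V^*} \le \frac{\sqrt{\alpha_t}(1-\lambda_t)}{(1-\alpha_t)L_{1,t+1}^{V^*}}L_{0,t+1}^{V^*}$. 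Substituting this back and factoring yields the stated bound.

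The computation is essentially routine; the only genuine subtlety is keeping the regularization coefficient book-keeping clean so that the single constant $\frac{\sqrt{\alpha_t}(1-\lambda_t)}{(1-\alpha_t)L_{1,t+1}^{V^*}}$ appears uniformly in front of both the geometric and bias terms. In particular, one must be careful that the contraction step uses the Lipschitz constant of $\nabla V^*_{t+1}$ (not of $\nabla \widehat V_{t+1}$), which is what makes the upper bound independent of $\widehat V_{t+1}$ beyond the scalar $E_{t+1}$. With that accounting in place, everything reduces to the elementary solution of an affine scalar recursion.
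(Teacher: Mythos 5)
Your proposal is correct and follows essentially the same route as the paper's proof: the paper phrases the argument via a Bellman optimality operator $\mathcal{T}_t$ and its approximation $\widehat{\mathcal{T}}_t$, but the decomposition (insert $\nabla V^*_{t+1}$ evaluated at the point generated by $u_t^{(m)}$, bound one piece by $E_{t+1}$ and the other by the $(1-\lambda_t)$-contraction coming from $\frac{\sigma_t^2}{\beta_t}L^{V^*}_{1,t+1}\le 1-\lambda_t$), the affine recursion, the unrolling, and the initialization bound $\|u_t^{(0)}-u_t^*\|_2\le \frac{\sqrt{\alpha_t}\sigma_t^2}{(1-\alpha_t)\beta_t}L^{V^*}_{0,t+1}$ are all identical. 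The subtlety you flag—that the contraction must be run with the Lipschitz constant of $\nabla V^*_{t+1}$, which requires the $\beta_t$-condition to hold for $L^{V^*}_{1,t+1}$ as well as $L^{\widehat V}_{1,t+1}$—is handled in the paper at the same level of (in)formality, being discharged only in the proof of Theorem 3.1 where $L^{\bar V}\ge L^{V^*}$ is verified.
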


\begin{proof}
Define the Bellman optimality operator $\mcal{T}_t$ at time $t$ as
\begin{align*}
    (\mcal{T}_t u )(y) \coloneqq s_{t}^{\rm pre}(y) + \frac{\sqrt{\alpha_{t}}\sigma_{t}^{2}}{(1 - \alpha_{t})\beta_t}\E\bracket{\nabla V_{t+1}^{*}\left(\frac{1}{\sqrt{\alpha_t}}\parenthesis{y + (1 - \alpha_t)u(y)} + \sigma_t W_t\right)}.
\end{align*}
Moreover, define the approximate Bellman operator $\widehat{\mcal{T}}_t$ as
\begin{align*}
    (\widehat{\mcal{T}}_t u )(y) \coloneqq s_{t}^{\rm pre}(y) + \frac{\sqrt{\alpha_{t}}\sigma_{t}^{2}}{(1 - \alpha_{t})\beta_t}\E\bracket{\nabla \widehat{V}_{t+1}\left(\frac{1}{\sqrt{\alpha_t}}\parenthesis{y + (1 - \alpha_t)u(y)} + \sigma_t W_t\right)}.
\end{align*}
The update rule \eqref{eq:update-u-hat} implies $ u_{t}^{(m+1)} = \widehat{\mcal{T}}_t u_{t}^{(m)} $. Together with the optimality condition $u_t^* = \mcal{T}_t u_t^*$, we obtain that
\begin{align}
\norm{u_{t}^{(m+1)}(y) - u_{t}^{*}(y)}_{2} & = \norm{(\widehat{\mcal{T}}_t u_{t}^{(m)})(y) - (\mcal{T}_{t}u_{t}^{*})(y)}_{2} \nonumber \\ & \leq \norm{(\widehat{\mcal{T}}_t u_{t}^{(m)})(y) - (\mcal{T}_{t}u_{t}^{(m)})(y)}_{2} + \norm{(\mcal{T}_t u_{t}^{(m)})(y) - (\mcal{T}_{t}u_{t}^{*})(y)}_{2}.\label{eq:u^n-u*}
\end{align}
Note that $\mcal{T}_t$ is a contraction operator. Indeed for any $u, v:\mathbb{R}^d\rightarrow\mathbb{R}^d$ we have 
\begin{align}
    \norm{(\mcal{T}_t u)(y) - (\mcal{T}_t v)(y)}_2 & \leq  \frac{\sqrt{\alpha_{t}}\sigma_{t}^{2}}{(1 - \alpha_{t})\beta_t} \E\Bigg[ \Bigg\|\nabla V_{t+1}^{*}\left(\frac{1}{\sqrt{\alpha_t}}\parenthesis{y + (1 - \alpha_t)u(y)} + \sigma_t W_t\right) \nonumber \\ & \quad \qquad \qquad \qquad  - \nabla V_{t+1}^{*}\left(\frac{1}{\sqrt{\alpha_t}}\parenthesis{y + (1 - \alpha_t)v(y)} + \sigma_t W_t\right)\Bigg\|_2 \Bigg] \nonumber\\ & \leq \frac{\sigma_t^2}{\beta_t}L_{1, t+1}^{V^*}\norm{u(y) - v(y)}_2 \nonumber \\ & \leq (1 - \lambda_t)\norm{u(y) - v(y)}_2, \label{eq:contraction}
\end{align}
where we apply the Lipschitz condition of $\nabla V^*_{t+1}$ and the choice of $\beta_t$. Consequently, the second term in \eqref{eq:u^n-u*} is bounded with
\begin{align}
    \norm{(\mcal{T}_t u_{t}^{(m)})(y) - (\mcal{T}_{t}u_{t}^{*})(y)}_{2} \leq (1 - \lambda_{t})\norm{u_{t}^{(m)}(y) - u_{t}^{*}(y)}_{2}.\label{eq:Tun}
\end{align}

Next, define
\begin{align*}
    \mathcal{G}_{t+1}^{(m)}(y) = \E\bracket{\nabla V_{t+1}^{*}\left(\frac{1}{\sqrt{\alpha_t}}\parenthesis{y + (1 - \alpha_t)u_t^{(m)}(y)} + \sigma_t W_t\right)}.
\end{align*}
and recall that
\begin{align*}
    \widehat{\mathcal{G}}_{t+1}^{(m)}(y)  \coloneqq \E\bracket{\nabla \widehat{V}_{t+1}\left(\frac{1}{\sqrt{\alpha_t}}\parenthesis{y + (1 - \alpha_t)u_t^{(m)}(y)} + \sigma_t W_t\right)}.
\end{align*}
For the first term in \eqref{eq:u^n-u*}, note that 
\begin{align}
    \norm{(\widehat{\mcal{T}}_t u_t^{(m)})(y) - (\mcal{T}_{t}u_t^{(m)})(y)}_{2} & = \frac{\sqrt{\alpha_{t}}\sigma_{t}^{2}}{(1 - \alpha_{t})\beta_t}\norm{\widehat{\mathcal{G}}_{t+1}^{(m)}(y) - \mathcal{G}_{t+1}^{(m)}(y)}_{2} \nonumber \\ & \leq \frac{\sqrt{\alpha_{t}}\sigma_{t}^{2}}{(1 - \alpha_{t})\beta_t}\E\bracket{\norm{\parenthesis{\nabla \widehat{V}_{t+1} - \nabla V^{*}_{t+1}}(y')}_{2}} \leq \frac{\sqrt{\alpha_{t}}\sigma_{t}^{2}}{(1 - \alpha_{t})\beta_t}E_{t+1},  \label{eq:grad-v-vhat}
\end{align}
where $ y' \sim \mcal{N}\parenthesis{\frac{1}{\sqrt{\alpha_t}}\parenthesis{y + (1 - \alpha_t)u_t^{(m)})(y)}, \sigma_t^2 I_d} $. 

With \eqref{eq:Tun} and \eqref{eq:grad-v-vhat}, we can further bound \eqref{eq:u^n-u*} with
\begin{align}
    \norm{u_{t}^{(m+1)}(y) - u_{t}^{*}(y)}_{2} & \leq \frac{\sqrt{\alpha_{t}}\sigma_{t}^{2}}{(1 - \alpha_{t})\beta_t}E_{t+1} + (1 - \lambda_{t})\norm{u_{t}^{(m)}(y) - u_{t}^{*}(y)}_{2}. \label{eq:recursion-um-u*}
\end{align}
Unrolling the recursion \eqref{eq:recursion-um-u*}, we have
\begin{align*}
    \norm{u_{t}^{(m)}(y) - u_{t}^{*}(y)}_{2} & \leq (1 - \lambda_t)^m\norm{u_{t}^{(0)}(y) - u_{t}^{*}(y)}_{2} + \lambda_t^{-1}\frac{\sqrt{\alpha_{t}}\sigma_{t}^{2}}{(1 - \alpha_{t})\beta_t}E_{t+1} \\ & \leq \parenthesis{(1 - \lambda_t)^m L_{0,t+1}^{V^*} + \lambda_t^{-1}E_{t+1}} \frac{\sqrt{\alpha_{t}}\sigma_{t}^{2}}{(1 - \alpha_{t})\beta_t} \\ & \leq \parenthesis{(1 - \lambda_t)^m L_{0,t+1}^{V^*} + \lambda_t^{-1}E_{t+1}} \frac{\sqrt{\alpha_{t}}(1 - \lambda_t)}{(1 - \alpha_{t})L_{1, t+1}^{V^*}},
\end{align*}
where we apply the condition that $1 - \frac{\sigma_t^2}{\beta_t}L_{1, t+1}^{V^*} \geq \lambda_t$. This finishes the proof.
\end{proof}

The last lemma in this section is on the error analysis of $\nabla V_t^{(m)}$. 

\begin{Lemma}[One-step error analysis on gradient of the value function]\label{lemma:nabla-Vn-V*-error}
    Assume the same assumptions as in Lemma \ref{lemma:un-u*-error}. Then it holds that for all $y\in \R^d$,
    \begin{align}
    \norm{\nabla V_{t}^{(m)}(y) -  \nabla V_{t}^{*}(y)}_2  \leq C_{1, t}E_{t+1} + C_{2, t}(1 - \lambda_t)^{m+1}, \label{eq:error-nabla-v}
\end{align}
where 
\begin{align*}
    C_{1,t} \coloneqq  \frac{1 + (1 - \alpha_t)L_{0, t}^{s}}{\lambda_t\sqrt{\alpha_t}} , \;\; \text{and} \;\; C_{2, t} \coloneqq  \frac{\parenthesis{1 + (1 - \alpha_t)L_{0, t}^{u^*}}}{\sqrt{\alpha_t}}L_{0, t+1}^{\widehat{V}} +  \frac{\parenthesis{1 + (1 - \alpha_t)L_{0, t}^{s}}}{\sqrt{\alpha_t}}L_{0, t+1}^{V^*}.
\end{align*}
\end{Lemma}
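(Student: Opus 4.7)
The plan is to start from the explicit gradient expressions already derived: equation \eqref{eq:grad-V*} gives $\nabla V_t^*(y) = \mathcal{S}_t(y)^\top \mathcal{G}_{t+1}(y)$, while equation \eqref{eq:grad-v-m} gives
\[
\nabla V_t^{(m)}(y) = \mathcal{S}_t(y)^\top \widehat{\mathcal{G}}_{t+1}^{(m)}(y) + \frac{1-\alpha_t}{\sqrt{\alpha_t}}\bigl(\nabla u_t^{(m)}(y) - \nabla s_t^{\rm pre}(y)\bigr)^\top\bigl(\widehat{\mathcal{G}}_{t+1}^{(m)}(y) - \widehat{\mathcal{G}}_{t+1}^{(m-1)}(y)\bigr).
\]
Subtracting, I split $\nabla V_t^{(m)}(y) - \nabla V_t^*(y)$ into the ``principal'' term $\mathcal{S}_t(y)^\top(\widehat{\mathcal{G}}_{t+1}^{(m)}(y) - \mathcal{G}_{t+1}(y))$ and the ``correction'' term involving $\widehat{\mathcal{G}}_{t+1}^{(m)} - \widehat{\mathcal{G}}_{t+1}^{(m-1)}$. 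These two terms will give rise to the $C_{1,t}E_{t+1}$ and $C_{2,t}(1-\lambda_t)^{m+1}$ contributions respectively, although the principal term also produces part of the $(1-\lambda_t)^{m+1}$ piece.

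For the principal term, I would use $\|\mathcal{S}_t(y)\|_2 \le \frac{1}{\sqrt{\alpha_t}}(1 + (1-\alpha_t)L_{0,t}^s)$ and decompose
\[
\widehat{\mathcal{G}}_{t+1}^{(m)}(y) - \mathcal{G}_{t+1}(y) = \mathbb{E}\bigl[(\nabla \widehat V_{t+1} - \nabla V_{t+1}^*)(Z^{(m)})\bigr] + \mathbb{E}\bigl[\nabla V_{t+1}^*(Z^{(m)}) - \nabla V_{t+1}^*(Z^*)\bigr],
\]
where $Z^{(m)}$ and $Z^*$ denote the evaluation points driven by $u_t^{(m)}$ and $u_t^*$ respectively. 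The first piece is directly bounded by $E_{t+1}$ by assumption. The second piece is controlled by the Lipschitz constant $L_{1,t+1}^{V^*}$ of $\nabla V_{t+1}^*$ together with $\frac{1-\alpha_t}{\sqrt{\alpha_t}}\|u_t^{(m)}(y) - u_t^*(y)\|_2$; invoking Lemma \ref{lemma:un-u*-error} converts this bound into $\bigl((1-\lambda_t)^m L_{0,t+1}^{V^*} + \lambda_t^{-1} E_{t+1}\bigr)(1-\lambda_t)$. Combining and collecting the $E_{t+1}$ coefficient simplifies via $1 + \lambda_t^{-1}(1-\lambda_t) = 1/\lambda_t$, yielding exactly the $C_{1,t}$ contribution plus a $(1-\lambda_t)^{m+1}$ contribution proportional to $L_{0,t+1}^{V^*}$.

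For the correction term, the plan is to reuse the identity established inside the proof of Lemma \ref{lemma:regularity}, namely $\frac{1-\alpha_t}{\sqrt{\alpha_t}}(\nabla u_t^{(m)}(y) - \nabla s_t^{\rm pre}(y)) = \frac{\sigma_t^2}{\beta_t}\widehat{\mathcal{W}}_{t+1}^{(m)}(y)\mathcal{U}_t^{(m)}(y)$. Applying the submultiplicative bound with $\|\widehat{\mathcal{W}}_{t+1}^{(m)}\|_2 \le L_{1,t+1}^{\widehat V}$, $\|\mathcal{U}_t^{(m)}\|_2 \le \frac{1}{\sqrt{\alpha_t}}(1 + (1-\alpha_t)L_{0,t}^{u^*})$, and the telescoping estimate \eqref{eq:G-hat-G-n-bound} giving $\|\widehat{\mathcal{G}}_{t+1}^{(m)} - \widehat{\mathcal{G}}_{t+1}^{(m-1)}\|_2 \le (1-\lambda_t)^m L_{0,t+1}^{\widehat V}$, together with $\tfrac{\sigma_t^2}{\beta_t} L_{1,t+1}^{\widehat V} \le 1 - \lambda_t$, produces exactly the second piece of $C_{2,t}(1-\lambda_t)^{m+1}$. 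Summing the two contributions recovers the stated bound.

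The main obstacle is bookkeeping: one must be careful that the two decoupled bounds align to reproduce the precise $C_{1,t}, C_{2,t}$ stated, in particular verifying that the $E_{t+1}$ multiplier simplifies to $\tfrac{1+(1-\alpha_t)L_{0,t}^s}{\lambda_t\sqrt{\alpha_t}}$ after combining the ``direct'' and ``propagated'' contributions, and that the $(1-\lambda_t)^{m+1}$ terms from both the principal and correction parts collapse cleanly. No genuinely new estimate is needed beyond Lemmas \ref{lemma:regularity} and \ref{lemma:un-u*-error} and the constitutive identities for $\mathcal{U}_t^{(m)}$ and $\widehat{\mathcal{W}}_{t+1}^{(m)}$ already derived.
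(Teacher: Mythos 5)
Your proposal is correct and takes essentially the same route as the paper's proof: you start from the gradient identities \eqref{eq:grad-V*} and \eqref{eq:grad-v-m}, split the difference into the ``principal'' piece $\mathcal{S}_t^\top(\widehat{\mathcal{G}}_{t+1}^{(m)}-\mathcal{G}_{t+1})$ and the ``correction'' piece involving $\widehat{\mathcal{G}}_{t+1}^{(m)}-\widehat{\mathcal{G}}_{t+1}^{(m-1)}$, further decompose the principal piece by adding and subtracting $\mathcal{G}_{t+1}^{(m)}$ to separate the $E_{t+1}$ contribution from the $\nabla V_{t+1}^*$-Lipschitz contribution, invoke Lemma \ref{lemma:un-u*-error} to control $\|u_t^{(m)}-u_t^*\|_2$, and bound the correction piece via the identity from \eqref{eq:grad-u-m-w} together with \eqref{eq:upper-bound-magic}, \eqref{eq:upper-bound-nabla-u-n}, \eqref{eq:G-hat-G-n-bound}. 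Your bookkeeping (in particular the simplification $1+\lambda_t^{-1}(1-\lambda_t)=\lambda_t^{-1}$ giving $C_{1,t}=C_{1,t}'\lambda_t^{-1}$, and the observation that the principal term contributes the $C_{1,t}'L_{0,t+1}^{V^*}$ part of $C_{2,t}$) matches the paper's derivation exactly.
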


\begin{proof}
    Recall the definition that $ \widehat{\mathcal{G}}_{t+1}^{(m)}(y)  = \E\bracket{\nabla \widehat{V}_{t+1}\left(\frac{1}{\sqrt{\alpha_t}}\parenthesis{y + (1 - \alpha_t)u_t^{(m)}(y)} + \sigma_t W_t\right)}$. It follows from \eqref{eq:grad-V*} and \eqref{eq:grad-v-m} that
\begin{align}
     \nabla V_{t}^{(m)}(y) -  \nabla V_{t}^{*}(y)  & =\frac{1}{\sqrt{\alpha_t}}\parenthesis{I_d
  + (1 - \alpha_t) \nabla s_t^{\rm pre}(y)}^\top\parenthesis{\widehat{\mathcal{G}}_{t+1}^{(m)}(y) - \mathcal{G}_{t+1}(y)} \nonumber \\ & \quad + \frac{(1 - \alpha_{t})}{\sqrt{\alpha_{t}}}\parenthesis{\nabla u_t^{(m)}(y) - \nabla s_t^{\rm pre}(y)}^\top \Big(\widehat{\mathcal{G}}_{t+1}^{(m)}(y)-\widehat{\mathcal{G}}_{t+1}^{(m-1)}(y)\Big).\label{eq:nable-V-n-V-*}
 \end{align}
Observe that for any $m \geq 0$, it holds that
\begin{align}
    \norm{\widehat{\mathcal{G}}_{t+1}^{(m)}(y)-\mathcal{G}_{t+1}(y)}_2 \leq \norm{\widehat{\mathcal{G}}_{t+1}^{(m)}(y)-\mathcal{G}_{t+1}^{(m)}(y)}_2 + \norm{\mathcal{G}_{t+1}^{(m)}(y)-\mathcal{G}_{t+1}(y)}_2, \label{eq:G-hat-G-*}
\end{align}
where we recall $\mathcal{G}_{t+1}^{(m)}(y)  \coloneqq \E\bracket{\nabla {V}^*_{t+1}\left(\frac{1}{\sqrt{\alpha_t}}\parenthesis{y + (1 - \alpha_t)u_t^{(m)}(y)} + \sigma_t W_t\right)}$.
For the first term on the RHS of \eqref{eq:G-hat-G-*}, we notice the following inequality holds:
\begin{align}
\norm{\widehat{\mathcal{G}}_{t+1}^{(m)}(y) - \mathcal{G}_{t+1}^{(m)}(y)}_{2}   \leq \E\bracket{\norm{\parenthesis{\nabla \widehat{V}_{t+1} - \nabla V^{*}_{t+1}}(y')}_{2}} \leq E_{t+1}, \label{eq:G-hat-G-n}
\end{align}
where $ y' \sim \mcal{N}\parenthesis{\frac{1}{\sqrt{\alpha_t}}\parenthesis{y + (1 - \alpha_t)u_t^{(m)})(y)}, \sigma_t^2 I_d} $. For the second term in \eqref{eq:G-hat-G-*}, the Lipschitz continuity of $\nabla V_{t+1}^*$ implies
\begin{align}
    \norm{\mathcal{G}^{(m)}_{t+1}(y)-\mathcal{G}_{t+1}(y)}_{2} & \leq L_{1, t+1}^{V^{*}} \frac{1 - \alpha_{t}}{\sqrt{\alpha_{t}}}\norm{u_{t}^{(m)} - u_{t}^{*}}_2. \label{eq:G*-n}
\end{align}
Consequently, Eq.~\eqref{eq:G-hat-G-*} becomes
\begin{align}
    \norm{\widehat{\mathcal{G}}_{t+1}^{(m)}(y)-\mathcal{G}_{t+1}(y)}_2 \leq E_{t+1} + L_{1, t+1}^{V^{*}} \frac{1 - \alpha_{t}}{\sqrt{\alpha_{t}}}\norm{u_{t}^{(m)} - u_{t}^{*}}. \label{eq:G-hat-G-*-new}
\end{align}

With \eqref{eq:upper-bound-magic}, \eqref{eq:upper-bound-nabla-u-n}, \eqref{eq:G-hat-G-n-bound} and \eqref{eq:G-hat-G-*-new}, we further bound \eqref{eq:nable-V-n-V-*} with
\begin{align*}
    \norm{\nabla V_{t}^{(m)}(y) -  \nabla V_{t}^{*}(y)}_2 & \leq C_{1, t}'\parenthesis{E_{t+1} + L_{1, t+1}^{V^{*}} \frac{1 - \alpha_{t}}{\sqrt{\alpha_{t}}}\norm{u_{t}^{(m)} - u_{t}^{*}}_2} + C_{2, t}'(1 - \lambda_t)^{m+1}, 
\end{align*}
where the coefficients are given by 
\begin{align*}
    C_{1,t}' = \frac{1}{\sqrt{\alpha_t}}(1 + (1 - \alpha_t)L_{0, t}^{s}), \;\; \text{and} \;\; C_{2, t}' = \frac{1 + (1 - \alpha_{t})L_{0,t}^{u^{*}}}{\sqrt{\alpha_t}}L_{0, t+1}^{\widehat{V}}.
\end{align*}
Finally, we apply Lemma \ref{lemma:un-u*-error} to obtain
\begin{align*}
    \norm{\nabla V_{t}^{(m)}(y) -  \nabla V_{t}^{*}(y)}_2 & \leq C_{1, t}'\parenthesis{E_{t+1} + L_{1, t+1}^{V^*}\frac{\sigma_t^2}{\beta_t}\parenthesis{(1 - \lambda_t)^m  L_{0,t+1}^{V^*} + \lambda_t^{-1}E_{t+1}} } + C_{2, t}'(1 - \lambda_t)^{m+1} \\ & \leq C_{1, t}'\parenthesis{E_{t+1} + (1 - \lambda_t)\parenthesis{(1 - \lambda_t)^m  L_{0,t+1}^{V^*} + \lambda_t^{-1}E_{t+1}} } + C_{2, t}'(1 - \lambda_t)^{m+1}  \\ & \leq C_{1, t}E_{t+1} + C_{2, t}(1 - \lambda_t)^{m+1},
\end{align*}
where 
\begin{align*}
    C_{1,t} = \lambda_t^{-1} C_{1,t}' , \;\; \text{and} \;\; C_{2, t} = C_{2, t}' + C_{1, t}' L_{0, t+1}^{V^*}.
\end{align*}
Therefore, we finish the proof. 
\end{proof}

Now we are ready to prove Theorem \ref{thm:convergence} with the results in Lemmas \ref{lemma:regularity}, \ref{lemma:un-u*-error} and \ref{lemma:nabla-Vn-V*-error}.
\begin{proof}[Proof of Theorem \ref{thm:convergence}]
It is straightforward to check that $L_{0, t}^{\bar{V}} \geq L_{0, t}^{V^*}$ for all $t \leq T$. Thus, the choice of $\beta_t$ guarantees that Theorem \ref{thm:well-posed} holds. To prove the theorem, we begin with the error $\norm{\nabla V_t^{m_t}(y) - \nabla V_t^{*}(y)}_2$. Let $t < T$ be fixed. To apply Lemma \ref{lemma:nabla-Vn-V*-error}, we choose $\widehat{V}_{t'} = V_{t'}^{(m_{t'})}$ for all $t' > t$. Unrolling recursion \eqref{eq:error-nabla-v} and use the fact that $V_T^{(m_T)}(y) = R(y) = V_T^*(y)$, we obtain that
\begin{align*}
    \norm{\nabla V_t^{(m_t)}(y) - \nabla V_t^{*}(y)}_2 & \leq \prod_{k = t}^{T}C_{1, k}\norm{\nabla V_T^{(m_T)}(y) - V_T^{*}(y)}_2 + \sum_{k = t}^{T - 1} \parenthesis{\prod_{\ell = t}^{k - 1}C_{1, \ell}} C_{2, k} (1 - \lambda_k)^{m_k + 1} \\ & = \sum_{k = t}^{T - 1} \parenthesis{\prod_{\ell = t}^{k - 1}C_{1, \ell}} C_{2, k} (1 - \lambda_k)^{m_k + 1}.
\end{align*}
Next, we apply Lemma \ref{lemma:un-u*-error} to have
\begin{align*}
    \norm{u_t^{(m_t)}(y) - u_t^*(y)}_2 & \leq  \parenthesis{(1 - \lambda_t)^{m_t}  L_{0,t+1}^{V^*} + \lambda_t^{-1}\norm{\nabla V_{t+1}^{m_{t+1}}(y) - \nabla V_{t+1}^{*}(y)}_2}  \frac{\sqrt{\alpha_{t}}(1 - \lambda_t)}{(1 - \alpha_{t})L_{1, t+1}^{V^*}}  \\ & \leq \parenthesis{(1 - \lambda_t)^{m_t}  L_{0,t+1}^{V^*} + \lambda_t^{-1}\sum_{k = t+1}^{T - 1} \parenthesis{\prod_{\ell = t+1}^{k - 1}C_{1, \ell}} C_{2, k} (1 - \lambda_k)^{m_k + 1}}  \frac{\sqrt{\alpha_{t}}(1 - \lambda_t)}{(1 - \alpha_{t})L_{1, t+1}^{V^*}},
\end{align*}
which finishes the proof.

\end{proof}

\section{Discussions}\label{sec:discussion}
Here, we provide some discussions on two future directions that we want to further explore.
\subsection{Parameterized formulation}
{\color{black}In this section, we adopt a linear parameterization to enable practical and  efficient update of Algorithm \ref{algo:update-u}:
\begin{align*}
    u_t(y) = K_t\phi(y), \;\; t = 0, \dots, T - 1,
\end{align*}
where $\phi(y) = (\phi_1(y), \dots, \phi_p(y))^\top$ is the given basis function and ${\bf K} = \curly{K_t}_{t = 0}^{T - 1}$ are (unknown) parameters to learn. Despite its simplicity, this parameterization is flexible, expressive, and tractable, and is widely used in control and reinforcement learning \citep{Agarwal2021OnTT,jin2020provably}. With an appropriate choice of basis, it can capture a broad class of score approximators, including random features \citep{rahimi2007random}, kernel methods \citep{steinwart2008support}, and overparameterized neural networks in the NTK regime \citep{jacot2018neural}.}

We provide a roadmap to show how a linear convergence rate can be achieved, with details similar to earlier sections omitted. We make the following realizability assumption. 

\begin{Assumption}[Realizability]
Assume that at each timestamp $t$, the optimal control satisfies $u_t^*(y) = K_t^*\phi(y)$ for some matrix $K_t^* \in \R^{p \times d}$. 
\end{Assumption}

Define the expected value function associated with the policy ${\bf K}$ as
\begin{align*}
    J({\bf K}) \coloneqq \E_{Y_0\sim \mathcal{D}}\bracket{V_0^{\bf K}(Y_0)},
\end{align*}
where $\mathcal{D}$ is the  distribution of the initial state and we can take $\mathcal{D}=\mathcal{N}(0,I_d)$, which is the distribution of $Y_0^{\rm pre}$. Here the value function is defined as
\begin{align*}
    V_t^{\bf K}(y) \coloneqq \E\bracket{\left. R(Y_T) - \sum_{\ell = t}^{T - 1}\beta_\ell \frac{(1 - \alpha_\ell)^2}{2\alpha_\ell \sigma_\ell^2}\norm{u_\ell(Y_{\ell}) - s_\ell^{\rm pre}(Y_{\ell})}_{2}^{2}\,\,\right|\,\, Y_t = y},
\end{align*}
with terminal condition
\begin{align*}
    V_T^{\bf K}(y) = r(y) = \E\bracket{R(y)}.
\end{align*}
Similarly, define the Q-function:
\begin{align*}
    Q_t^{\bf K}(y, u) = \E\bracket{V_{t+1}^{\bf K}\parenthesis{\frac{1}{\sqrt{\alpha_t}}(y + (1- \alpha_t)u) + \sigma_t W_t}-\beta_t\frac{(1 - \alpha_t)^2}{2\alpha_t\sigma_t^2}\norm{u - s_t^{\rm }(y)}_2^2},
\end{align*}
where the expecation is taken over $W_t$. 
We remark that the policy $ \bf K $ in the superscript refers to $ \curly{K_{t'}}_{t' > t} $ at each $t$. 
Recall the choice of $\beta_t$ implies the map $u \mapsto Q_t^{\bf K^*}(y, u)$ is $\gamma_t$-strongly concave uniformly over all $y$; see Theorem \ref{thm:well-posed}. If the feature mapping $\phi$ is well-behaved, the map $K_t \mapsto Q_t^{\bf K^{*}}(y, K_t\phi(y))$ is also strongly concave. 

Next, we calculate the policy gradient with respect to $ \mathbf{K} $. Note that
\begin{align}
    \partial_{t}J({\bf K}) \coloneqq \frac{\partial J({\bf K})}{\partial K_t} & = \frac{\partial}{\partial K_{t}} \E\bracket{-\sum_{\ell = 0}^{t-1}\beta_{\ell}\frac{(1 - \alpha_\ell)^{2}}{2\alpha_{\ell}\sigma_{\ell}^{2}}\norm{K_{\ell}\phi(Y_{\ell}) - s_{\ell}^{\rm pre}(Y_{\ell})}_{2}^{2} + Q_{t}^{\bf K}(Y_{t}, K_{t}\phi(Y_{t}))} \nonumber \\ & = \frac{\partial}{\partial K_{t}} \E\bracket{Q_{t}^{\bf K}(Y_{t}, K_{t}\phi(Y_{t}))}, \label{eq:policy-gradient}
\end{align}
where the expectation is taken over the initial state $ Y_{0}\sim\mathcal{D}$ and noise $ \curly{W_{t}}_{t=0}^{T-1} $. We next show that the optimal policy $ \bf K^{*} $ is the unique stationary point of $ J(\bf K) $. Let $ \overline{\bf K} $ be a stationary point, i.e., $ \partial_{t}J({\bf K}) \Big\vert_{\bf K = \overline{\bf K}} = 0 $ for all $ t  $. In particular, we have 
\begin{align*}
    \frac{\partial}{\partial K_{T-1}} \E\bracket{Q_{T-1}^{\bf K}(Y_{T-1}, K_{T-1}\phi(Y_{T-1}))} \Big\vert_{\bf K = \overline{\bf K}}= 0.
\end{align*}
The strong concavity in $ K_{T-1} $ implies that $ K_{T - 1}^{*} = \overline{K}_{T - 1}$. Similarly, we have
\begin{align*}
    \frac{\partial}{\partial K_{T-2}} \E\bracket{Q_{T-2}^{\bf K}(Y_{T-2}, K_{T-2}\phi(Y_{T-2}))} \Big\vert_{\bf K = \overline{\bf K}}= \frac{\partial}{\partial K_{T-2}} \E\bracket{Q_{T-2}^{\bf K^{*}}(Y_{T-2}, K_{T-2}\phi(Y_{T-2}))} \Big\vert_{\bf K = \overline{\bf K}}= 0.
\end{align*}
By strong concavity again, we have $  K_{T - 2}^{*} =\overline{K}_{T - 2} $. Repeating the argument to conclude $  {\bf K}^{*}=\overline{\bf K}  $ is the unique stationary point of the objective function $J(\bf K)$. 
The above analysis replies on the two~facts:
\begin{enumerate}[(i)]
    \item At $ T - 1 $, the map $ K_{T-1} \mapsto \E\bracket{Q_{T-1}^{\bf K}(Y_{T-1}, K_{T-1}\phi(Y_{T-1}))} $ has a unique stationary point. 
    \item For any $ t < T - 1 $, the map $ K_{t} \mapsto \E\bracket{Q_{t}^{\bf K^{*}}(Y_{t}, K_{t}\phi(Y_{t}))} $ has a unique stationary point. 
\end{enumerate}
We remark that the above two conditions (i)--(ii) {\it relax}  \cite[Assumption 2.A]{bhandari2024global}, which assumes that $ K_{t} \mapsto \E\bracket{Q_{t}^{\bf K}(Y_{t}, K_{t}\phi(Y_{t}))} $ has no sub-optimal stationary points under {\it any} $ \bf K $. While \cite{bhandari2024global} primarily focuses on global convergence guarantees for infinite-horizon problems, it also introduces a condition ({\it cf.} Condition 4) for finite-horizon MDPs that aligns in spirit with our analysis. 

{\color{black}
Notably, \citet{bhandari2024global} {\it did not} analyze the convergence rate of policy gradient methods for finite-horizon problems.
Since $ {\bf K}^{*} $ is the unique stationary point, the policy gradient method converges to the globally optimal solution as long as the standard smoothness condition is satisfied. Moreover, we conjecture a {\it linear convergence rate} due to the presence of strong concavity. Consider the policy gradient update rule:
\begin{align*}
    K_t^{(m+1)} = K_t^{(m)} + \eta \partial_t J({\bf K}^{(m)}),  \;\; 0 \leq t \leq T - 1, m \geq 0.
\end{align*}
For each $t < T$, we have
\begin{align*}
    \norm{K_t^{(m+1)} - K_t^*}_F  & = \norm{K_t^{(m)} + \eta \partial_t J({\bf K}^{(m)})  - K_t^*}_F \\ &  \leq \norm{K_t^{(m)} + \eta \partial_t J({\bf K}_{\leq t}^{(m)}, {\bf K}_{> t}^{*})  - K_t^*}_F  + \eta\norm{\partial_t J({\bf K}_{\leq t}^{(m)}, {\bf K}_{> t}^{*}) - \partial_t J({\bf K}^{(m)})}_F,
\end{align*}
where ${\bf K}_{\leq t} \coloneqq \curly{K_\ell}_{\ell=0}^t$ and ${\bf K}_{> t} \coloneqq \curly{K_\ell}_{\ell >t}$.
Eq. \eqref{eq:policy-gradient} implies $\partial_t J({\bf K}) = \partial_t J({\bf K}_{\geq t})$ is independent of $\curly{K_\ell}_{\ell < t}$. The (one-step) strong concavity and smoothness leads to 
\begin{align*}
  \norm{K_t^{(m)} + \eta \partial_t J({\bf K}_{\leq t}^{(m)}, {\bf K}_{> t}^{*})  - K_t^*}_F \leq c \norm{K_t^{(m)} - K_t^*}_F,
\end{align*}
for some constant $c<1$. 
If $\partial_t J(\bf K)$ is further $L_{1,t}^J$-Lipschitz in ${\bf K} \in \R^{p \times d}$, then we have
\begin{align*}
   \norm{\partial_t J({\bf K}_{\leq t}^{(m)}, {\bf K}_{> t}^{*}) - \partial_t J({\bf K}^{(m)})}_F 
   \leq L_{1, t}^J \norm{{\bf K}_{> t}^{(m)} - {\bf K}_{> t}^{*}}_F,
\end{align*}
and consequently we prove the linear convergence rate of the policy gradient method.}

\subsection{Continuous-time limit}

While we find the discrete-time formulation more natural for the fine-tuning problem, the continuous-time limit is inherently intriguing from a mathematical perspective. The potential benefit of continuous-time formulation includes robustness and stability in training \citep{zhao2024scores}.

In the continuous time limit, we have
\begin{eqnarray*}
 V(t,y)  =  \sup_{U:=\curly{U_s}_{t\leq s\leq T}\in \mathcal{U}} \;\; \mathbb{E}\left[R(Y_T) -  \int_t^T \beta_s \frac{(1-\alpha_s)^2}{2\alpha_s\sigma_s^2} \|s_s^{\rm pre}(Y_s)-U_s\|_2^2 {\rm d} s\,\,\Bigg|\,\,Y_t=y\right]
\end{eqnarray*}
subject to
\begin{align*}
    {\rm d} Y_t= \parenthesis{\frac{1-\sqrt{\alpha_t}}{\sqrt{\alpha_t}}Y_t + \frac{(1 - \alpha_t)}{\sqrt{\alpha_t}}U_t} {\rm d} t + \sigma_t {\rm d} W_t,
\end{align*}
where $\mathcal{U}$ contains all Markovian policies that adapts to $\{\mathcal{F}^W_t\}_{t\ge 0}$.
The HJB follows:
\begin{eqnarray*}
 0=   \partial_t V(t,y) + \sup_{u} \left\{\parenthesis{\frac{1-\sqrt{\alpha_t}}{\sqrt{\alpha_t}}y + \frac{(1 - \alpha_t)}{\sqrt{\alpha_t}}u} \partial_y V(t,y)- \beta_t \frac{(1-\alpha_t)^2}{2\alpha_t\sigma_t^2} \|s_t^{\rm pre}(y)-u\|_2^2\right\} + \frac{\sigma_t^2}{2}\partial_{yy}V(t,y),
\end{eqnarray*}
with terminal condition $V(T,y) = r(y) = \mathbb{E}[R(y)]$. Hence the optimal control follows:
\begin{eqnarray}\label{eq:continuous_time_control}
    u^*_t(y) = s^{\rm pre}_t(y) + \frac{\sqrt{\alpha_t}\sigma_t^2}{\beta_t (1-\alpha_t)}\partial_y V(t,y),
\end{eqnarray}
and the HJB can be rewritten as
\begin{eqnarray}
 0 &=&    \partial_t V(t,y) +  \frac{1-\sqrt{\alpha_t}}{\sqrt{\alpha_t}}y\,\,\partial_y V(t,y) + \frac{1-\alpha_t}{\sqrt{\alpha_t}} s^{\rm pre}_t(y)  \partial_y V(t,y)+ \frac{\sigma_t^2}{2\beta_t} (\partial_y V(t,y))^2+ \frac{\sigma_t^2}{2}\partial_{yy}V(t,y)\nonumber\\
 &=&    \partial_t V(t,y) +  \left(\frac{1-\sqrt{\alpha_t}}{\sqrt{\alpha_t}}y+  \frac{1-\alpha_t}{\sqrt{\alpha_t}} s^{\rm pre}_t(y)\right)\,\,  \partial_y V(t,y)+ \frac{\sigma_t^2}{2\beta_t} (\partial_y V(t,y))^2+ \frac{\sigma_t^2}{2}\partial_{yy}V(t,y)\label{eq:continuous_time_hjb}
\end{eqnarray}
with terminal condition $V(T,y) = r(y) = \mathbb{E}[R(y)]$. 

Note that both the optimal control \eqref{eq:continuous_time_control} and the HJB \eqref{eq:continuous_time_hjb} share the same form as the discrete-time system, respectively, \eqref{eq:u*(y)} and \eqref{eq:bellman-u*}.

There are a few interesting aspects to study for the continuous-time system \eqref{eq:continuous_time_control}--\eqref{eq:continuous_time_hjb}:

\textbf{Regularity of the value function:} Under conditions \ref{ass:smooth r}--\ref{ass:smooth s}, we believe the regulatory of the optimal value function, in terms of $V\in \mathcal{C}^{1,2}([0,T]\times\mathbb{R}^d)$, can be achieved by adapting the analysis in \cite[Proposition 5.3]{szpruch2021exploration} and \cite[Theorem 4.3.1]{zhang2017backward} via forward-backward stochastic differential equations (FBSDEs) representation. However, the regularity of the value function and iterated control policies {\it along training} remains less clear and non-trivial in the continuous-time regime \cite[Assumption 2]{zhou2023policy}.  Furthermore, the integration-by-parts result \eqref{eq:magic} aligns with the stochastic representation of the gradient in the continuous-time analysis, as detailed in \cite[Eqns 2.12–2.15]{ma2024convergence} and \cite[Chapter 2]{zhang2001some}. We believe this approach will be instrumental in establishing regularity results throughout the training process.

\textbf{Algorithm design and convergence for continuous-time systems:} 
    The algorithm design and convergence analysis is challenging in continuous-time systems. In the literature, \citet{zhou2023policy} studied the convergence of policy gradient flow and then \cite{zhou2024solving} extended the analysis to actor-critic algorithms. The recent work \cite{mou2024bellman} proposed an efficient algorithm to compute the value function of a continuous-time system given discrete trajectory data. However, all these work only focused on the algorithm design in the functional space while the analysis in parametric settings is overlooked. We believe our framework would provide insights of global convergence guarantees for parametric algorithms in both continuous- and discrete-time problems. 

{\color{black}
\section{Numerical Experiments}\label{sec:experiment}
In this section, we evaluate the performance of the PI-FT algorithm from Section \ref{sec:algorithm} via numerical experiments, focusing on the following questions:

\begin{itemize}

    \item In practice, how fast does the PI-FT algorithm converge to the optimal solution?

    \item How does the choice of $\beta$ affect the convergence rate and the quality of the fine-tuned models?
\end{itemize}

As shown in this section, the PI-FT algorithm converges efficiently to the global optimum; increasing $ \beta $ accelerates convergence and yields a model closer to the pre-trained one, aligning with our theoretical analysis in Section \ref{sec:algorithm}.

\paragraph{Model Setup.}
We fine-tune the Stable Diffusion v1.5 \citep{rombach2022high} for text-to-image generation, using LoRA \citep{hu2021lora} and ImageReward \citep{xu2023imagereward}. Following  \citep{fan2024reinforcement}, we use four prompts—``A green colored rabbit,” ``A cat and a dog,” ``Four wolves in the park,” and ``A dog on the moon”—to evaluate the model’s ability to generate correct color, composition, counting, and location, respectively. During training, we generate 10 trajectories, each consisting of 50 transitions, to calculate the gradient with 1000 gradient steps. By default, we use the AdamW optimizer with a learning rate of $ 3 \times 10^{-4} $ , and set the KL regularization coefficient to a fixed value as $ \beta = 0.01 $. 

\paragraph{Evaluation.}
We first compare ImageReward scores for images generated by the pre-trained model, DPOK \citep{fan2024reinforcement}, and our proposed PI-FT. For a fair comparison, we configure DPOK to perform 10 gradient steps per sampling step, using a learning rate of $ 1 \times 10^{-5} $. Each gradient step is computed using 50 randomly sampled transitions from a replay buffer. As a result, 1000 sampling steps and a total of 10,000 gradient steps in DPOK yield a computational cost comparable to that of PI-FT. As shown in Figure \ref{fig:bar_plot}, PI-FT consistently outperforms both baselines across all four prompts. Figure \ref{fig:prompt_comparision} further shows that PI-FT more accurately captures object counts and placements (e.g., four wolves and the dog on the moon) and avoids errors like miscoloring the rabbit. It also produces more natural textures compared to the baselines.

\begin{figure}[h]
    \centering
    \includegraphics[width=0.7\linewidth]{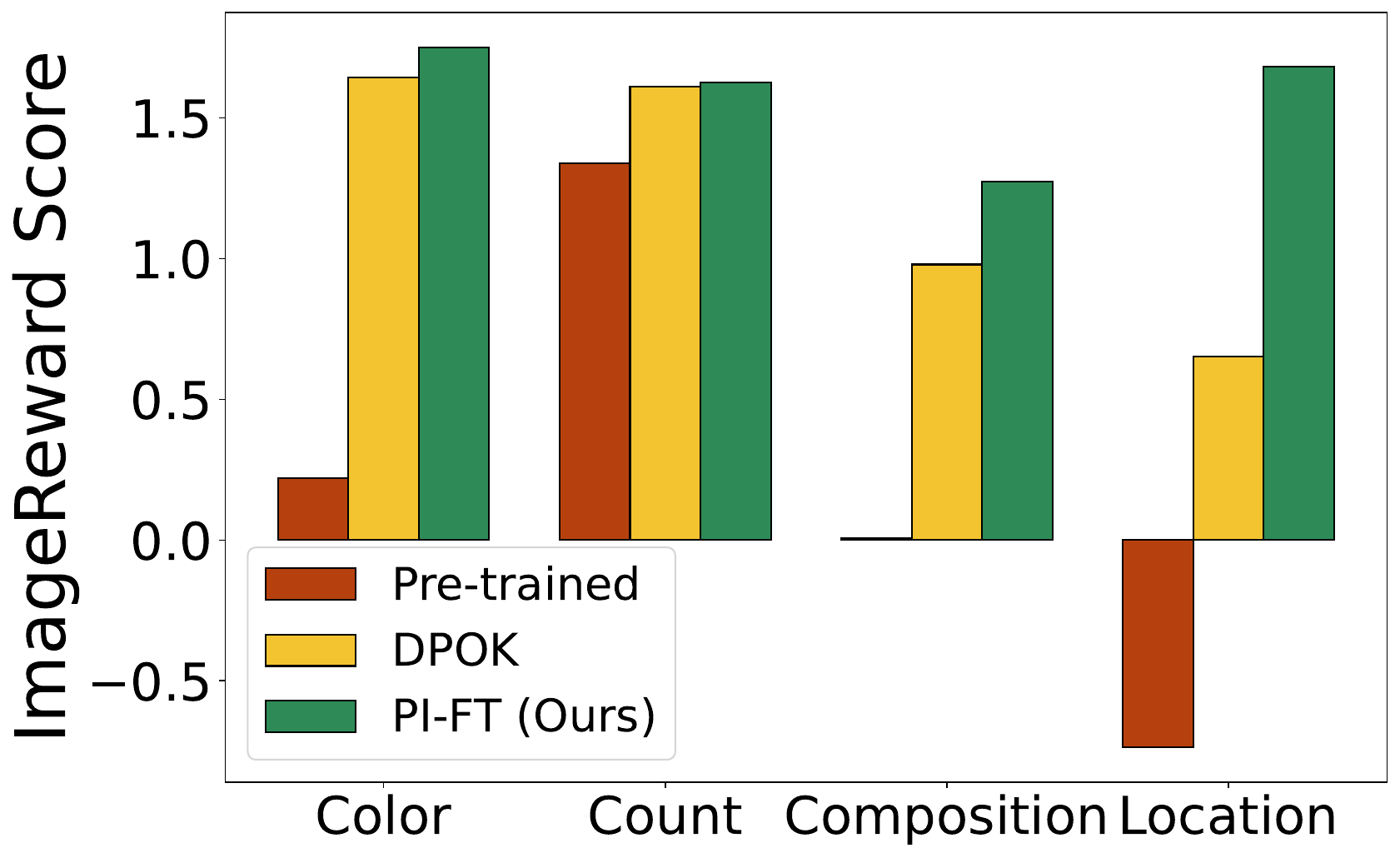}
    \caption{Comparison of the ImageReward Score among three models: pre-trained Stable Diffusion (red), DPOK (yellow), and PI-FT (green). The ImageReward scores are averaged over 50 samples from each model. Prompts from left to right: ``A green colored rabbit'' (color), ``Four wolves in the park'' (count), ``A cat and a dog'' (composition), and ``A dog on the moon'' (location).}
    \label{fig:bar_plot}
\end{figure}

\begin{figure}[h]
    \centering
    \includegraphics[width=0.9\linewidth]{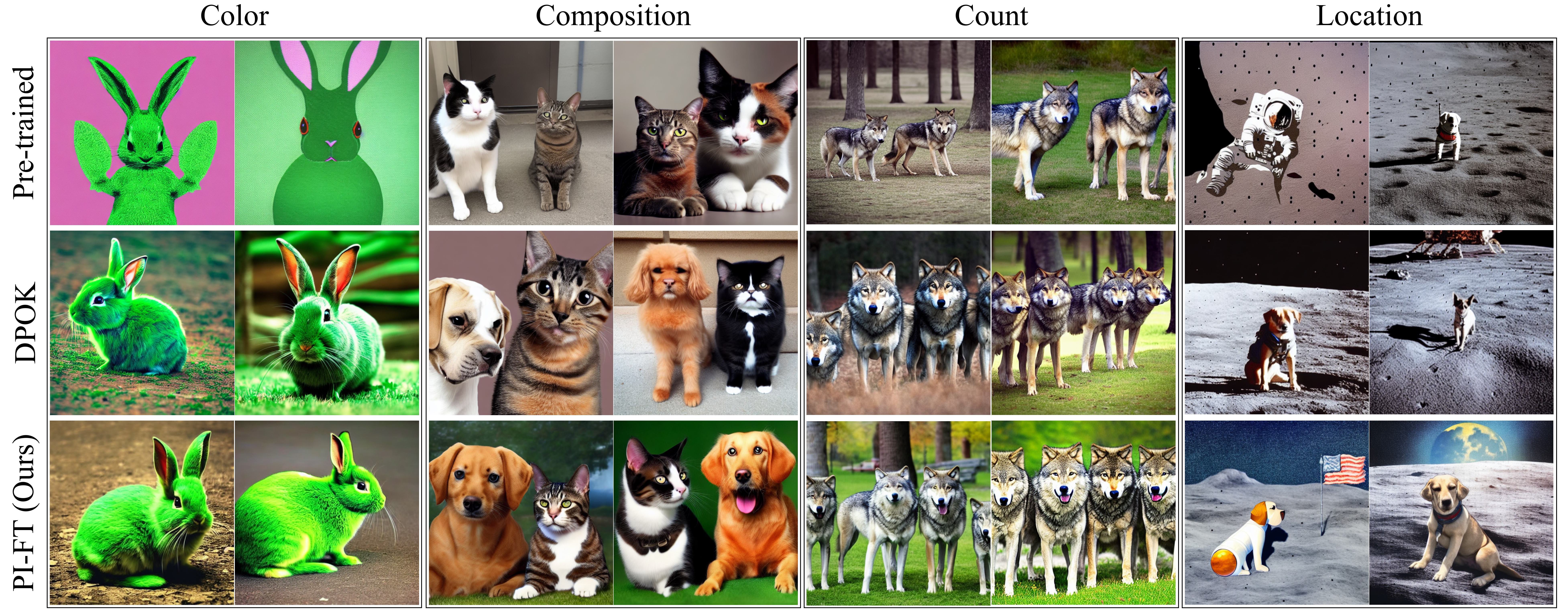}
    \caption{Visual comparison of images generated by the original Stable Diffusion model (pre-trained), DPOK model, and PI-FT model (ours). Prompts from left to right: ``A green colored rabbit'' (color), ``A cat and a dog'' (composition), ``Four wolves in the park'' (count), and ``A dog on the moon'' (location).}
    \label{fig:prompt_comparision}
\end{figure}

\paragraph{Effect of KL regularization.}
KL regularization is known to enhance fine-tuning. We study its effect in PI-FT using the prompt “Four wolves in the park,” varying $\beta \in \{0.01, 0.1, 1.0\}$. As shown in Figure \ref{fig:grad norm}, the gradient norm decreases to zero in all cases, indicating convergence. Figure \ref{fig:reward} shows that small $\beta$ values improve and stabilize the ImageReward score, while larger $\beta$ offers limited gains. This aligns with Figure \ref{fig:kl}, where KL divergence remains high for $\beta = 0.01$, but stays significantly lower for $\beta \in \{0.1, 1.0\}$. Figure \ref{fig:example} also shows that smaller $\beta$ produces images with nearly four wolves, whereas larger $\beta$ leads to fewer. These results underscore the importance of the KL coefficient in effective fine-tuning.

\begin{figure}[h]
    \centering
    \begin{subfigure}[H]{0.32\linewidth}
        \centering
        \includegraphics[width=\linewidth]{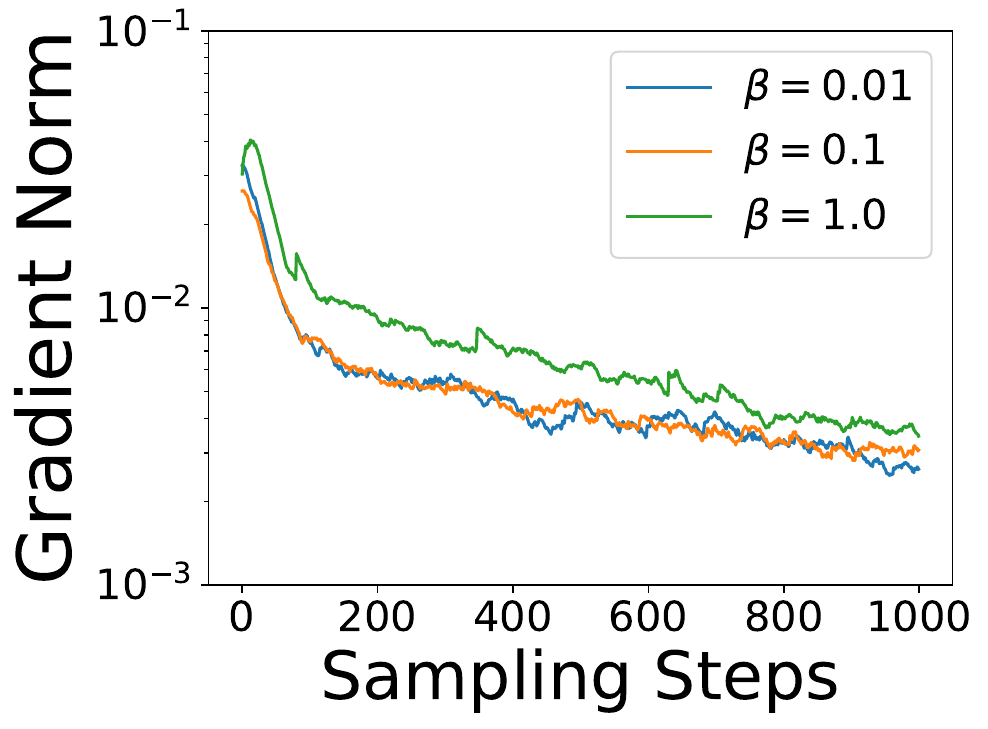}
        \caption{Gradient norm}
        \label{fig:grad norm}
    \end{subfigure}
    \hfill
    \begin{subfigure}[H]{0.32\linewidth}
        \centering
        \includegraphics[width=\linewidth]{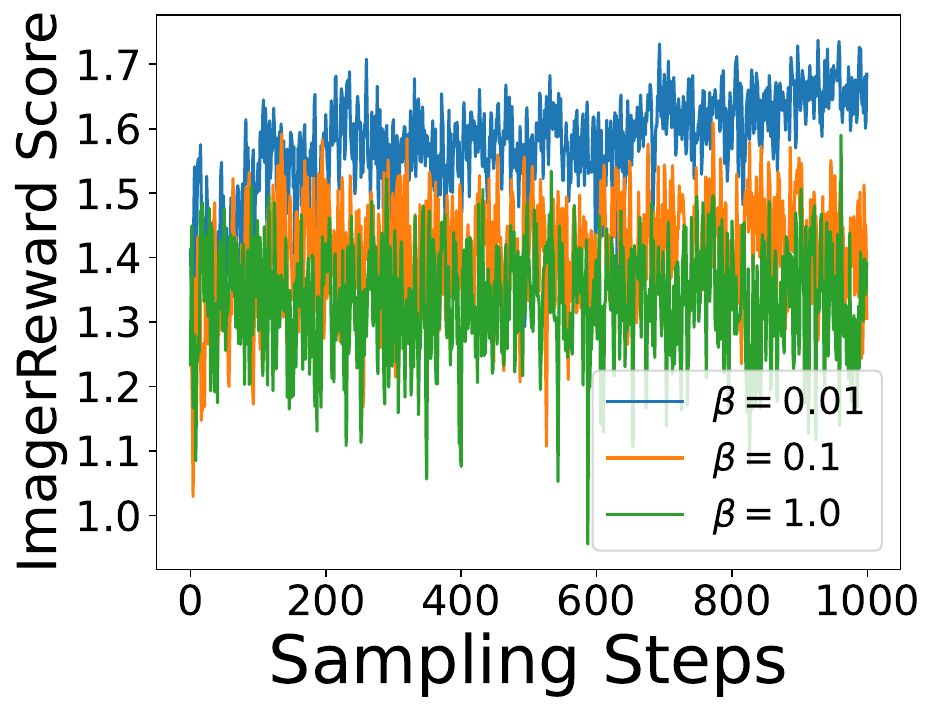}
        \caption{ImageReward score}
        \label{fig:reward}
    \end{subfigure}
    \hfill
    \begin{subfigure}[H]{0.32\linewidth}
        \centering
        \includegraphics[width=\linewidth]{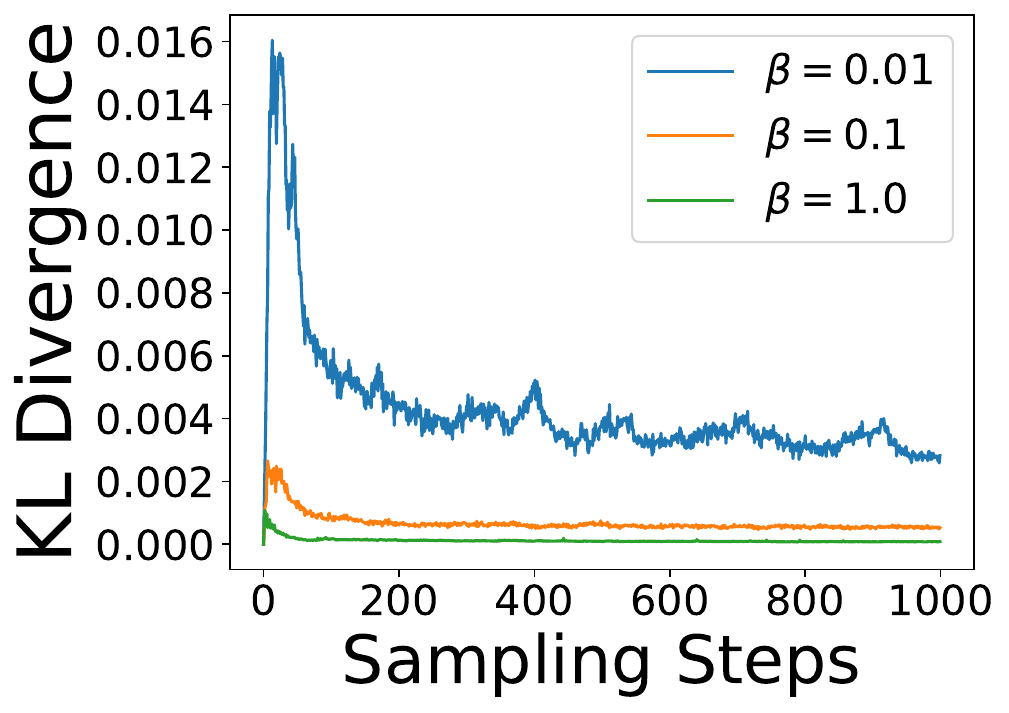}
        \caption{KL divergence}
        \label{fig:kl}
    \end{subfigure}
    \caption{(a) Gradient norm (logarithmic scale) of PI-FT during training. The curves are smoothed using the exponential moving average (EMA). A linear convergence rate is observed. (b) ImageReward score of PI-FT during training. Smaller KL regularization coefficient $\beta$ leads to higher ImageReward score. (c) KL divergence of PI-FT during training. Larger KL regularization coefficient $ \beta $ leads to smaller KL divergence and faster convergence.}
\end{figure}

\begin{figure}[h]
    \centering
    \includegraphics[width=\linewidth]{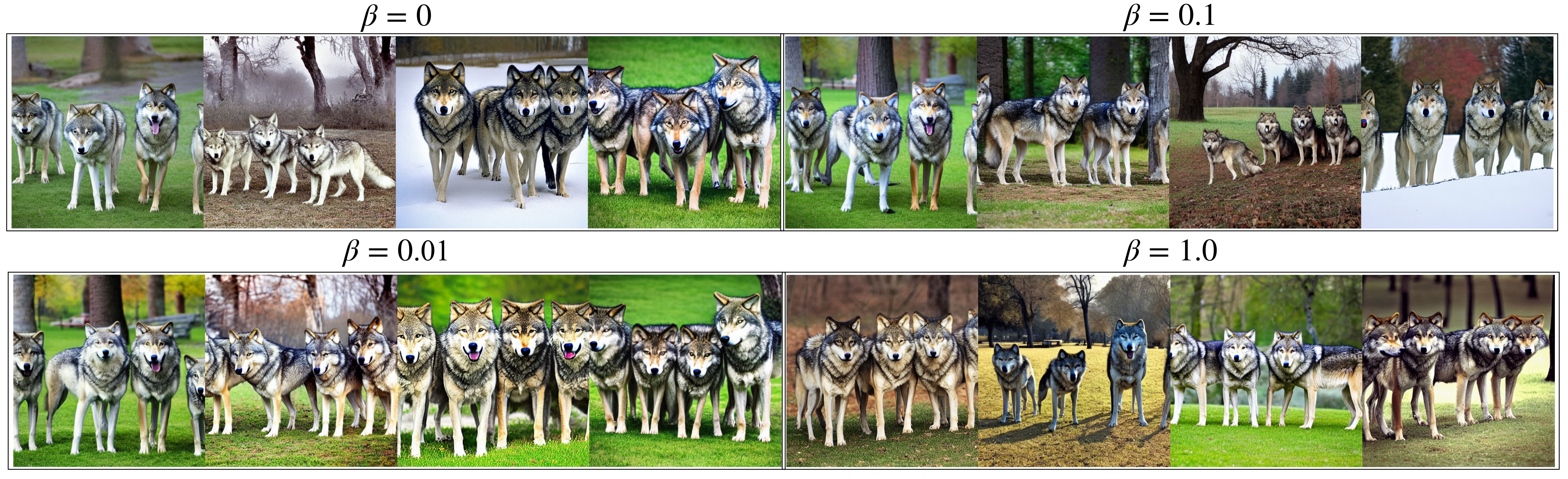}
    \caption{Randomly generated samples from PI-FT model  with different KL regularization coefficients. Images from a single text prompt: ``Four wolves in the park''. The model with smaller $ \beta > 0$ generates more accurate number of wolves.}
    \label{fig:example}
\end{figure}
}

{\color{black}\section{Conclusion}
We introduce a stochastic control framework for fine-tuning diffusion models, integrating linear dynamics with KL regularization. Our approach establishes the well-posedness and regularity of the control problem and proposes a policy iteration algorithm (PI-FT) that guarantees global linear convergence. Unlike prior work that assumes regularity throughout training, we prove that PI-FT inherently maintains these properties. Additionally, we extend our framework to parametric settings, broadening its applicability. This work advances the theoretical understanding of fine-tuning diffusion models and provides a foundation for developing more effective fine-tuning algorithms. Our algorithmic design and theoretical findings are also supported by thorough numerical experiments.}

\paragraph{Acknowledgment.} R.X. is partially supported by the NSF CAREER award DMS-2339240 and a JP
Morgan Faculty Research Award. The work of M.R. is partially supported by a gift from Google Research and a gift from Meta.

\bibliography{reference.bib}
\bibliographystyle{plainnat}

\end{document}